\DeclareSymbolFontAlphabet{\amsmathbb}{AMSb}%
\newcommand{\R}{\amsmathbb{R}}
\renewcommand{\Pr}{\mathsf{Pr}}
\newcommand{\prob}{\mathrm{Prob}}
\newcommand{\dirac}{\mathbf{d}}
\newcommand{\ddist}[3]{\Delta_{#3}(#1 || #2)}
\newtheorem{theorem}{Theorem}
\newtheorem{lemma}[theorem]{Lemma}
\newtheorem{corollary}[theorem]{Corollary}
\newtheorem{definition}[theorem]{Definition}
\newcommand{\supp}{\mathsf{supp}}
\newcommand{\inverse}[1]{{#1}^{\hspace{-0.15em} - \hspace{-0.1em}1}\hspace{-0.1em}}
\newcommand{\stareq}{\stackrel{\mathsf{(*)}}{=}}
\newcommand{\defeq}{\stackrel{\mathsf{def}}{=}}
\newcommand{\EE}{\Bbb E}
\newcommand{\argmax}{\mathop{\rm argmax}\limits}
\newcommand{\argmin}{\mathop{\rm argmin}\limits}
\begin{document}
\title{Hypothesis Testing Interpretations and \\R\'enyi Differential Privacy \\(Long version)}
\author{Borja Balle \and Gilles Barthe \and Marco Gaboardi \and Justin Hsu \and Tetsuya Sato}
\maketitle

\begin{abstract}
  Differential privacy is a de facto standard in data privacy, with applications
  in the public and private sectors. A way to explain differential
  privacy, which is particularly appealing to statistician and social
  scientists,  is by means of its statistical hypothesis testing
  interpretation. Informally, one cannot
  effectively test whether a specific individual has contributed her data by
  observing the output of a private mechanism---any test cannot have both high
  significance and high power.

  In this paper, we identify some conditions under which a privacy
  definition given in terms of a statistical divergence satisfies a
  similar interpretation. These conditions are useful 
  to analyze the distinguishability power of divergences and we use
  them to study the hypothesis testing interpretation of some relaxations of differential
  privacy based on \emph{R\'enyi divergence}. This analysis also
  results in an improved conversion
  rule between these definitions and differential privacy.
\end{abstract}

\section{Introduction}
Differential privacy~\cite{DworkMcSherryNissimSmith2006} is a formal notion of
data privacy that enables accurate statistical analyses on populations while
preserving privacy for individuals contributing their data. Differential privacy
is supported by a rich theory, with sophisticated algorithms for common data
analysis tasks and composition theorems to simplify the design and formal
analysis of new private algorithms. This theory has helped make differential
privacy a de facto standard for privacy-preserving data analysis. Over the last
years, differential privacy has found use in the private
sector~\cite{KenthapadiMT19} by companies such as
Google~\cite{ErlingssonPK14,PapernotSMRTE18}, Apple~\cite{apple}, and
Uber~\cite{JohnsonNS18}, and in the public sector by agencies such as the U.S.
Census Bureau~\cite{Abowd18,GarfinkelAP18}.

A common challenge faced across all uses of differential privacy is to explain
its guarantees to users and policy makers. Indeed, differential privacy first
emerged in the theoretical computer science community, and was only
subsequently considered in other research areas interested in data privacy.  For
this reason, several works have attempted to provide different interpretations
of the \emph{semantics} of differential privacy in an effort to make it more
accessible.

One approach that has been particularly successful, especially when
introducing differential privacy to people versed in statistical data
analysis, is the \emph{hypothesis testing} interpretation of differential
privacy~\cite{WassermanZ10,KairouzOV15}. One can imagine an
experiment where one wants to test, based on the output of a differentially private
mechanism, the \emph{null hypothesis} that an individual $I$ has contributed her
data to a particular dataset $x_0$. One can also imagine that an
\emph{alternative hypothesis} is that the individual $I$ has not contributed her
data. Then, the definition of differential privacy guarantees---and is in fact
equivalent to requiring---that every hypothesis test has either low
\emph{significance} (it has a high rate of Type I errors), or low \emph{power}
(it has a high rate of Type II errors). Under this interpretation, the privacy
parameters $(\epsilon, \delta)$ control the tradeoff between significance and
power.

Recently, several variants of differential privacy have been
proposed~\cite{dwork2016concentrated,BunS16,Mironov17,BDRS18,DongRS19}. Most of these new
privacy definitions have been proposed as privacy notions with better
composition properties than differential privacy. Having better
composition can become a key advantage when a high number of data
accesses is needed for a single analysis (e.g., in private deep
learning~\cite{AbadiCGMMT016}). Technically, many of these variants are
formulated as bounds on the \emph{R\'enyi divergence} between the distribution
obtained when running a private mechanism over a dataset where an individual $I$
has contributed her data versus the case when the private mechanism is run over
the dataset where $I$'s data is removed. 

In this work we develop some analytical tools to study the hypothesis
testing interpretation of privacy definitions based on statistical
divergences. The first notion we introduce is the concept of  \emph{$k$-cut} of
a divergence. Intuitively this notion corresponds to restricting
the distributions that are input to a divergence to a finite domain of
cardinality $k$. We can think about the functions implementing the
restrictions as (probabilistic) decision rules with $k$ possible outcomes.
The second notion we introduce is the concept of \emph{$k$-generatedness} for
a divergence. Intuitively, a divergence is $k$-generated if it is
equal to its $k$-cut. This notion expresses the number of decisions that
are needed in decision rules to fully characterize the divergence.

We use these two analytical tools to show that a privacy definition
based on a divergence has an hypothesis testing interpretation if and
only if it is $2$-generated. We show that the divergence
characterizing differential privacy is indeed $2$-generated and that
the notion of $2$-generatedness corresponds to the notion of privacy
regions introduced in~\cite{KairouzOV15}. On the negative side we show formally
that variants of differential privacy based on the R\'enyi divergence
do not admit directly a hypothesis testing interpretation because the
R\'enyi divergence is exactly $\infty$-generated (where by $\infty$ we mean that it is infinitely, but countably generated). Nevertheless, we
show that one can achieve the hypothesis testing interpretation by
considering the $2$-cut of the R\'enyi divergence. Intuitively, this says that to characterize relaxations of differential
privacy based on the R\'enyi divergence through an experiment similar to the one used in the hypothesis testing
interpretation, one needs either to restrict the distinguishability
power of the divergence or to consider an infinite number of possible hypothesis.
This shows a semantics separation between standard differential privacy and
relaxations based on R\'enyi divergence.



In addition, we use the analytical tools we develop to study the
relations between different privacy definitions. Specifically, we use
the $2$-cut of R\'enyi divergence to give better conversion rules from
R\'enyi differential privacy to $(\epsilon,\delta)$-differential
privacy, and to study the relations with \emph{Gaussian Differential
  Privacy}~\cite{DongRS19} another formal definition of
privacy inspired by the hypothesis testing interpretation which was
recently proposed. 

Finally, we study a sufficient condition to guarantee that a divergence
is $k$-generated: divergences defined as a supremum of a quasi-convex function
$F$ over probabilities of $k$-partitions are $k$-generated. This allows one to
construct divergences supporting the hypothesis testing interpretation by
requiring them to be defined through an function $F$ giving a $2$-generated divergence.
The condition is also necessary for quasi-convex divergences, characterizing
$k$-generation for all quasi-convex divergences.

Summarizing, our contributions are:
\begin{list}{{(\arabic{enumi})}}{\usecounter{enumi}
\setlength{\leftmargin}{11pt}
\setlength{\listparindent}{-1pt}
\setlength{\parsep}{-1pt}}
\vspace*{-1ex}
\item We first introduce the notions of \emph{$k$-cut} and \emph{$k$-generatedness} for
  divergences. These notions allow one to measure the power of
  divergences in terms of the number of possible decisions that are
  needed in a test to fully characterize the divergence.
\item We show that the divergence used to characterize differential privacy is
  $2$-generated, supporting the usual hypothesis testing interpretation of
  differential privacy
\item We show that R\'enyi divergence is $\infty$-generated, ruling
  out a direct   hypothesis testing interpretation for privacy notions
  based on it. Nevertheless, we show that one can achieve the
  hypothesis testing interpretation by considering the $2$-cut of
  R\'enyi divergence.
\item We use our analytic tools to study other notions of privacy and
  to give better conversion rules between R\'enyi differential privacy
  and $(\epsilon,\delta)$-differential privacy.
\item We give sufficient and necessary conditions for a quasi-convex divergence
  to be $k$-generated.
\end{list}
\section{Background: hypothesis testing, privacy, and R\'enyi divergences}
\subsection{Hypothesis testing interpretation for $(\varepsilon,\delta)$-differential privacy}

We view \emph{randomized algorithms} as functions
$\mathcal{M} \colon X \to \prob(Y)$ from a set $X$ of inputs to the
set $\prob(Y)$ of discrete \emph{probability distributions} over a
set $Y$ of outputs. We assume that $X$ is equipped with a symmetric
\emph{adjacency relation}---informally, inputs are datasets and two inputs
$x_0$ and $x_1$ are adjacent iff they differ in the data of a single
individual.
\begin{definition}[Differential Privacy (DP) \cite{DworkMcSherryNissimSmith2006}]
  Let $\varepsilon>0$ and $0\leq \delta \leq 1$. A randomized
  algorithm $\mathcal{M} \colon X \to \prob(Y)$ is
  \emph{$(\varepsilon,\delta)$-differentially private} if for every
  pairs of adjacent inputs $x_0$ and $x_1$, and every subset
  $S\subseteq Y$, we have:
  $$
   \Pr[\mathcal{M}(x_0) \in S] \leq e^\varepsilon \Pr[\mathcal{M}(x_1) \in S] + \delta.
  $$
\end{definition}

\cite{WassermanZ10,KairouzOV15} proposed a useful interpretation of this
guarantee in terms of \emph{hypothesis testing}. Suppose that $x_0$ and $x_1$
are adjacent inputs. The observer sees the output $y$ of running a private
mechanism $\mathcal{M}$ on one of these inputs---but does not see the particular
input---and wants to guess whether the input was $x_0$ or $x_1$. 

In the terminology of hypothesis testing, let $y \in Y$ be an output
of a randomized mechanism $\mathcal{M}$, and take the following \emph{null} and
\emph{alternative} hypotheses:
\begin{center}
  {\bf H0} : $y$ came from $\mathcal{M}(x_0)$, \qquad\qquad  {\bf H1} : $y$ came from $\mathcal{M}(x_1)$.
\end{center}
One simple way of deciding between the two hypotheses is to fix a
\emph{rejection region} $S \subseteq Y$; if the observation $y$ is in $S$ then
the null hypothesis is rejected, and if the
observation $y$ is not in $S$ then the null hypothesis is not
rejected. This is an example of a \emph{deterministic decision rule}.

Each decision rule can err in two possible ways. A \emph{false alarm} (i.e.\ Type I
error) is when the null hypothesis is true but rejected. This error rate is
defined as 
${\tt PFA}(x_0,x_1,\mathcal{M},S) \defeq \Pr[\mathcal{M}(x_0) \in S]$.
On the other hand, the decision rule may incorrectly fail to reject the null
hypothesis, a \emph{false negative} (i.e.\ Type II error). The probability of missed
detection is defined as
${\tt PMD}(x_0,x_1,\mathcal{M},S) \defeq \Pr[\mathcal{M}(x_1) \notin S]$. There
is a natural tradeoff between these two errors---a rule with a larger rejection
region will be less likely to incorrectly fail to reject but more likely to
incorrectly reject, while a rule with a smaller rejection region will be less
likely to incorrectly reject but more likely to incorrectly fail to reject.

Differential privacy can now be reformulated in terms of these error rates.

\begin{theorem}[\cite{WassermanZ10,KairouzOV15}] \label{thm:dp-hyp-test}
  A randomized algorithm $\mathcal{M} \colon X \to \prob(Y)$ is
  $(\varepsilon,\delta)$-differentially private if and only if for
  every pair of adjacent inputs $x_0$ and $x_1$, and any rejection region
  $S\subseteq Y$, we have:
  $ {\tt PFA}(x_0,x_1,\mathcal{M},S)+e^\varepsilon {\tt
    PMD}(x_0,x_1,\mathcal{M},S)\geq 1-\delta $ and
  $e^\varepsilon{\tt PFA}(x_0,x_1,\mathcal{M},S)+{\tt
    PMD}(x_0,x_1,\mathcal{M},S)\geq 1-\delta $.
\end{theorem}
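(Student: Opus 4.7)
The plan is to reduce both inequalities in the statement to the standard $(\varepsilon,\delta)$-DP inequality by algebraic manipulation, exploiting the symmetry of the adjacency relation and of the rejection region $S$.

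First I would fix adjacent inputs $x_0, x_1$ and a rejection region $S \subseteq Y$, and set $a \defeq \Pr[\mathcal{M}(x_0) \in S]$ and $b \defeq \Pr[\mathcal{M}(x_1) \in S]$. By definition, ${\tt PFA} = a$ and ${\tt PMD} = 1 - b$, so the two inequalities to be characterized become
\[
a + e^\varepsilon(1-b) \geq 1-\delta
\qquad \text{and} \qquad
e^\varepsilon a + (1-b) \geq 1-\delta.
\]

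Next I would rearrange each of these to match the DP inequality. The first becomes $1 - a \leq e^\varepsilon (1-b) + \delta$, which is exactly the DP bound applied to the complement $S^c$ (using $\Pr[\mathcal{M}(x_i) \in S^c] = 1 - \Pr[\mathcal{M}(x_i) \in S]$). The second rearranges to $b \leq e^\varepsilon a + \delta$, which is the DP bound applied to $S$ but with the roles of $x_0$ and $x_1$ swapped.

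For the ``if'' direction, assume both inequalities hold for every adjacent pair and every $S$. Given an arbitrary adjacent pair $(x_0, x_1)$ and an arbitrary set $S$, apply the first inequality to $S^c$ (which is also a valid rejection region) to recover directly the DP bound $\Pr[\mathcal{M}(x_0)\in S] \leq e^\varepsilon \Pr[\mathcal{M}(x_1)\in S] + \delta$. For the ``only if'' direction, assume $(\varepsilon,\delta)$-DP. Applying DP to the pair $(x_0,x_1)$ with rejection region $S^c$ yields the first inequality; applying DP to the pair $(x_1,x_0)$ (valid by symmetry of the adjacency relation) with rejection region $S$ yields the second. Since $S$ and $(x_0,x_1)$ were arbitrary, both inequalities hold universally.

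The argument is essentially a bookkeeping exercise, so I do not expect any real obstacle; the only point requiring care is to use both the freedom to take complements of rejection regions and the symmetry of the adjacency relation, since each of the two stated inequalities corresponds to one of these two ``flips'' of the original DP bound. If adjacency were not symmetric, only the first inequality would be recoverable.
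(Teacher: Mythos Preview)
Your argument is correct: the two hypothesis-testing inequalities are literally the DP inequality applied to $(x_0,x_1,S^c)$ and to $(x_1,x_0,S)$ respectively, and both transformations are reversible because complements of rejection regions are rejection regions and adjacency is symmetric. The paper does not supply its own proof of this theorem; it is quoted as background from \cite{WassermanZ10,KairouzOV15}, so there is nothing to compare against beyond noting that your direct algebraic reduction is the standard one.
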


Intuitively, the lower bound on the sum of the two error rates means that
\emph{no} decision rule is capable of achieving low Type I error and low Type II
error simultaneously. Thus, the output distributions from any two adjacent
inputs are statistically hard to distinguish.

Following \cite{KairouzOV15}, we can also reformulate the definition
of differential privacy in terms of a \emph{privacy region} describing the
attainable pairs of Type I and Type II errors.
\begin{theorem}[\cite{KairouzOV15}]
A randomized algorithm $\mathcal{M} \colon X \to \prob(Y)$
is $(\varepsilon,\delta)$-differentially private if and only if
for every pair of adjacent inputs $x_0$ and $x_1$, and every rejection region $S
\subseteq Y$, we have
\[
({\tt PFA}(x_0,x_1,\mathcal{M},S),
{\tt PMD}(x_0,x_1,\mathcal{M},S)) \in R(\varepsilon,\delta) ,
\]
where the privacy region $R(\varepsilon,\delta)$ is defined as:
\[
R(\varepsilon,\delta) 
= \Set{ (x,y)  \in [0,1] \times [0,1]  | (1 - x) \leq e^\varepsilon y + \delta}.
\]
\end{theorem}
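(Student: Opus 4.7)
The plan is to derive the theorem by a direct algebraic reformulation, treating the privacy region condition as a thin rewriting of the $(\varepsilon,\delta)$-DP inequality. Recall that by definition
$$\mathtt{PFA}(x_0,x_1,\mathcal{M},S)=\Pr[\mathcal{M}(x_0)\in S] \quad\text{and}\quad \mathtt{PMD}(x_0,x_1,\mathcal{M},S)=1-\Pr[\mathcal{M}(x_1)\in S],$$
so the membership condition $(\mathtt{PFA},\mathtt{PMD})\in R(\varepsilon,\delta)$ unfolds to
$$1-\Pr[\mathcal{M}(x_0)\in S]\;\leq\; e^{\varepsilon}\bigl(1-\Pr[\mathcal{M}(x_1)\in S]\bigr)+\delta,$$
which rearranges to
$$\Pr[\mathcal{M}(x_0)\in S^c]\;\leq\; e^{\varepsilon}\Pr[\mathcal{M}(x_1)\in S^c]+\delta.$$
Thus the privacy region constraint for rejection region $S$ is exactly the $(\varepsilon,\delta)$-DP inequality applied to the complementary region $S^c$.

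For the forward direction, suppose $\mathcal{M}$ is $(\varepsilon,\delta)$-differentially private. Fix adjacent $x_0,x_1$ and any $S\subseteq Y$. Applying the DP inequality with rejection region $S^c$ and running the rearrangement above in reverse yields $(\mathtt{PFA}(x_0,x_1,\mathcal{M},S),\mathtt{PMD}(x_0,x_1,\mathcal{M},S))\in R(\varepsilon,\delta)$. For the converse, suppose that for every adjacent $x_0,x_1$ and every $S\subseteq Y$ the membership condition holds. To establish DP for a given $S$, apply the assumption to $S^c$: the resulting inequality is precisely $\Pr[\mathcal{M}(x_0)\in S]\leq e^{\varepsilon}\Pr[\mathcal{M}(x_1)\in S]+\delta$. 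The crucial point in this step is that the family of admissible rejection regions is closed under complementation, so universally quantifying the privacy region condition over all $S$ is equivalent to universally quantifying the DP condition over all $S$.

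Alternatively, one could appeal directly to Theorem~\ref{thm:dp-hyp-test}: the first listed inequality there, $\mathtt{PFA}+e^{\varepsilon}\mathtt{PMD}\geq 1-\delta$, is literally the same linear inequality as $1-\mathtt{PFA}\leq e^{\varepsilon}\mathtt{PMD}+\delta$, which is the defining inequality of $R(\varepsilon,\delta)$. So the result also follows as an immediate corollary, with the second inequality of Theorem~\ref{thm:dp-hyp-test} being recovered from the first by using $S^c$ in place of $S$.

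There is essentially no obstacle here beyond keeping track of the complementation: the only mild subtlety is checking that a single-inequality privacy region (as opposed to the intersection of two symmetric half-planes) is sufficient, which is justified by the fact that the universal quantification over $S$ together with closure under complementation generates both symmetric forms of the bound at once.
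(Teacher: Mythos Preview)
Your proof is correct. The paper does not give a self-contained proof of this statement: it is quoted as a known result from \cite{KairouzOV15}. The paper's own machinery later recovers an equivalent version as a corollary of Theorems~\ref{varepsilon-divergence_is_2-generated} and~\ref{2-generated_and_HT}, by first showing that $\Delta^\varepsilon$ is $2$-generated and then invoking the general privacy-region characterization for $2$-cuts. Your argument is the direct, elementary route: unfold $\mathtt{PFA}$ and $\mathtt{PMD}$, observe that the privacy-region inequality for $S$ is literally the DP inequality for $S^c$, and use closure of rejection regions under complementation to exchange the two universally quantified statements. This is more economical for this particular result; the paper's framework pays off only in that it applies uniformly to other divergences. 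Your alternative remark---that the membership condition is a one-line rewriting of the first inequality in Theorem~\ref{thm:dp-hyp-test}---is also correct and is essentially the way the cited reference organizes the equivalence.
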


Figure \ref{Fig:PrivacyregionofRR} shows an example of a privacy
region  (the area between the dashed lines) $R(0.67,0.05)$ and its mirror image and of all the points $({\tt PFA},{\tt PMD})$ 
that can be generated by a randomized response mechanism
$\mathcal{M}_{\mathtt{RR}} \colon \{0,1\}^3 \to \prob(\{0,1\}^3)$
working on vectors of three bits and flipping each bit with probability $0.34$.

\begin{figure}
  \centering
  \includegraphics[height = 7cm, width = 7cm]{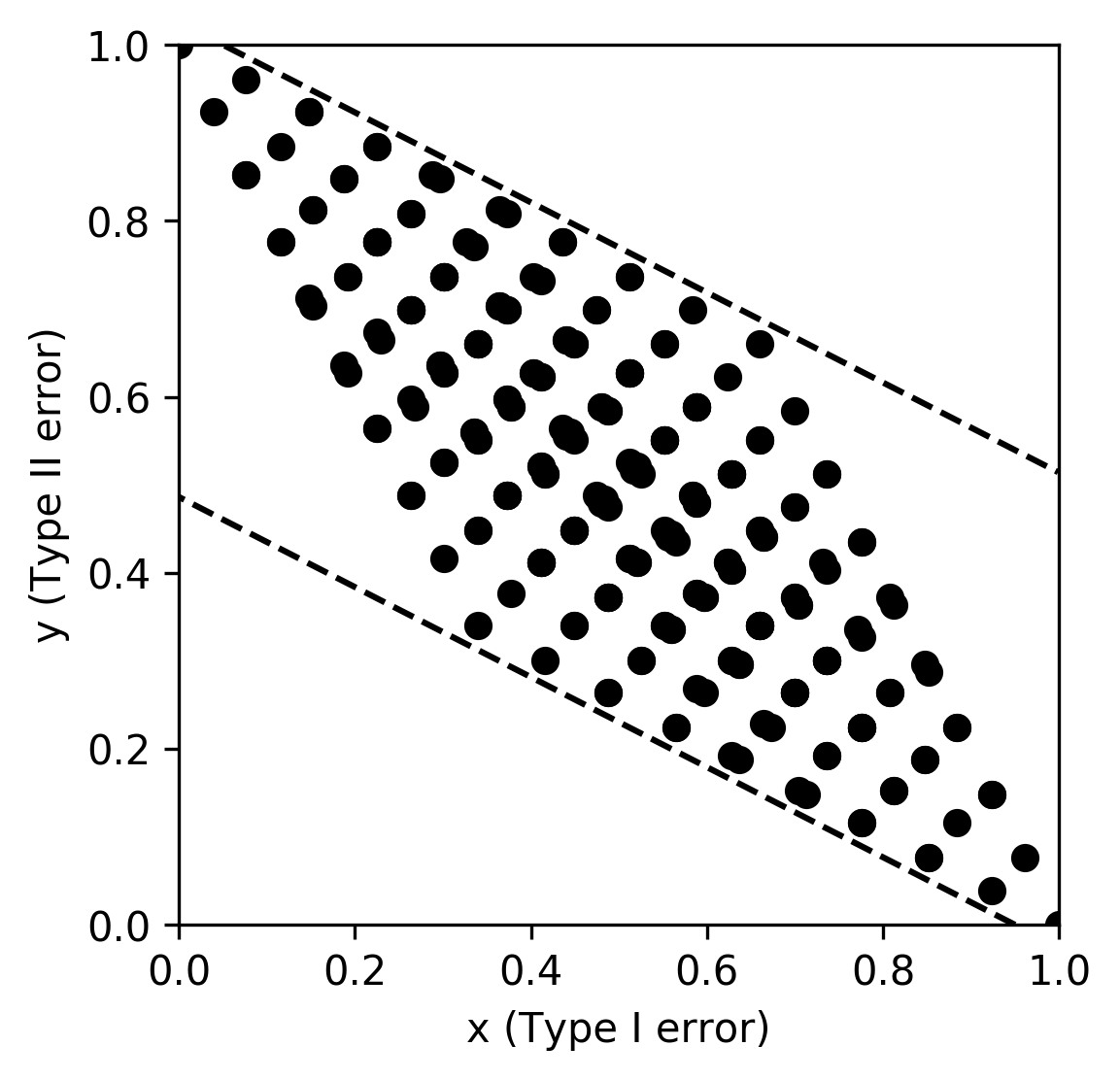}
  \caption{Pairs $({\tt PFA},{\tt PMD})$ of $\mathcal{M}_{\mathtt{RR}}$ and $R(0.67,0.05)$}
\label{Fig:PrivacyregionofRR}
\end{figure}

Since the original introduction of differential privacy, researchers have
proposed several other variants based on R\'enyi divergence. The central
question of this paper is: can we give similar hypothesis testing
interpretations to these (and other) variants of differential privacy? 

\subsection{Variants of differential privacy based on R\'enyi divergence}

We recall here notions of differential privacy based on R\'enyi divergence.

\begin{definition}[R\'enyi divergence~\cite{Renyi1961}]
  Let $\alpha > 1$. The \emph{R\'enyi divergence} of order $\alpha$
  between two probability distributions $\mu_1$ and $\mu_2$ on a space $X$ is defined by:
\begin{equation}\label{eq:Renyi_divergences}
  D^{\alpha}_X(\mu_1||\mu_2) \defeq \frac{1}{\alpha - 1} \log
  \sum_{x\in X} \mu_2(x) \left( \frac{\mu_1(x)}{\mu_2(x)} \right)^\alpha.
\end{equation}
\end{definition}
The above definition does not consider the cases $\alpha=1$ and
$\alpha=+\infty$. However we can see $D^{\alpha}_X$ as a function of
$\alpha$ for fixed distributions and consider the limits to get:
\begin{align*}
	D^{1}_X(\mu_1||\mu_2) &\defeq {\bf KL}_X(\mu_1||\mu_2) , \\
	D^{\infty}_X(\mu_1||\mu_2) &\defeq \log\sup_x \frac{\mu_1(x)}{\mu_2(x)} .
\end{align*}
The first limit is the well-known KL divergence, while the second limit is the
\emph{max divergence} that bounds the pointwise ratio of probabilities; standard
$(\varepsilon, 0)$-differential privacy bounds this divergence on distributions
from adjacent inputs.

There are several notions of differential privacy based on R\'enyi divergence,
differing in whether the bound holds for all orders $\alpha$ or just some
orders. The first notion we consider is R\'enyi Differential Privacy
(RDP)~\cite{Mironov17}.
\begin{definition}[R\'enyi Differential Privacy (RDP)~\cite{Mironov17}]
Let $\alpha \in [1,\infty)$. A randomized algorithm $\mathcal{M}:X\to\prob(Y)$ is 
\emph{$(\alpha,\rho)$-R\'enyi differentially private} if for every
  pair $x_0$ and $x_1$ of adjacent inputs, we have 
$$
D^{\alpha}_X(\mathcal{M}(x_0)||\mathcal{M}(x_1)) \leq \rho.
$$
\end{definition}
Renyi Differential privacy considers a fixed value of $\alpha$. In
contrast, zero-Concentrated Differential Privacy (zCDP)~\cite{BunS16},
a simplification of Concentrated Differential Privacy (CDP)~\cite{dwork2016concentrated},
quantifies over all possible $\alpha>1$.
\begin{definition}[zero-Concentrated Differential Privacy (zCDP)~\cite{BunS16}]
A randomized algorithm $\mathcal{M}:X\to\prob(Y)$ is 
\emph{$(\xi,\rho)$-zero concentrated differentially private} if for  every
 pairs of adjacent inputs $x_0$ and $x_1$, we have
\begin{equation}\label{eq:zCDP_definition}
  \forall{\alpha > 1}.~ D^\alpha_Y(\mathcal{M}(x_0)||\mathcal{M}(x_1)) \leq \xi + \alpha\rho.
\end{equation}
\end{definition}
Truncated Concentrated Differential Privacy
(tCDP)~\cite{BDRS18} quantifies over all $\alpha$ below a given threshold.
\begin{definition}[Truncated Concentrated Differential Privacy (tCDP)~\cite{BDRS18}]
A randomized algorithm $\mathcal{M}:X\to\prob(Y)$ is 
\emph{$(\rho,\omega)$-truncated concentrated differentially private} if for  every
 pairs of adjacent inputs $x_0$ and $x_1$,  we have
\begin{equation}\label{eq:tCDP_definition}
  \forall{1 < \alpha < \omega}.~ D^\alpha_Y(\mathcal{M}(x_0)||\mathcal{M}(x_1)) \leq \alpha\rho.
\end{equation}
\end{definition}
These notions are all motivated by bounds on the \emph{privacy loss} of a
randomized algorithm. This quantity is defined by
\[
  \mathcal{L}^{x_0 \to x_1}(y) \defeq \frac{\Pr[\mathcal{M}(x_0) = y]}{\Pr[\mathcal{M}(x_1) = y]} ,
\]
where $x_0$ and $x_1$ are two adjacent inputs. Intuitively, the privacy loss
measures how much information is revealed by an output $y$. While output values
with a large privacy loss are highly revealing---they are far more likely to
result from a private input $x_0$ rather than a different private input
$x_1$---if these outputs are only seen with small probability then it may be
reasonable to discount their influence. Each of the privacy definitions above bounds
different moments of this privacy loss, treated as a random variable when $y$ is
drawn from the output of the algorithm on input $x_0$. The following table
summarizes these bounds.
\begin{center}
  \begin{tabular}{ll}
    \toprule
    Privacy
    &
    Bound on privacy loss $\mathcal{L} = \mathcal{L}^{x_0 \to x_1}$
    \\
    \midrule
    $(\varepsilon,\delta)$-DP
    &
    $\Pr_{y \sim \mathcal{M}(x_0)} [ \mathcal{L}(y) \leq e^\varepsilon ] \geq 1 - \delta$
    \\
    $(\alpha,\rho)$-RDP
    &
    $\EE_{y \sim \mathcal{M}(x_1)} [\mathcal{L}(y)^\alpha] \leq e^{(\alpha - 1)\rho}$
    \\
    $(\xi,\rho)$-zCDP
    &
    $\forall{\alpha \in (1, \infty)}.~ \EE_{y \sim \mathcal{M}(x_1)}
      [\mathcal{L}(y)^\alpha] \leq e^{(\alpha - 1)(\xi + \alpha\rho)}$
    \\
    $(\omega,\rho)$-tCDP
    &
    $\forall{\alpha \in (1, \omega)}.~ \EE_{y \sim \mathcal{M}(x_1)} [\mathcal{L}(y)^\alpha] \leq e^{(\alpha - 1)\alpha\rho}$
    \\
    \bottomrule
  \end{tabular}
\end{center}

In particular, DP bounds the maximum value of the privacy
loss,\footnote{%
  Technically speaking, this is true only for sufficiently well-behaved distributions \cite{Meiser18}.}
$(\alpha,\cdot)$-RDP bounds the $\alpha$-moment, zCDP bounds all moments, and
$(\cdot, \omega)$-tCDP bounds the moments up to some cutoff $\omega$.  Many
conversions are known between these definitions; for instance, RDP, zCDP, and tCDP are known to sit between $(\varepsilon, 0)$ and
$(\varepsilon, \delta)$-differential privacy in terms of expressivity, up to
some modification in the parameters. While this means that RDP, zCDP, and tCDP
can sometimes be analyzed by reduction to standard differential privacy,
converting between the different notions requires weakening the parameters and
often the privacy analysis is simpler or more precise when working with RDP,
zCDP, or tCDP directly. The interested reader can refer to
the original papers~\cite{BunS16,Mironov17, BDRS18}.


\section{$k$-generated divergences}
\subsection{Background and notation}
We use standard notation and terminology from discrete probability.
%
%
For every $x\in X$, we denote by $\dirac_x$ the Dirac distribution centered at $x$
defined by $\dirac_x(x') = 1$ if $x = x'$ and $\dirac_x(x') = 0$
otherwise.
%
For any probability distribution $\mu \in \prob(X)$ and $\gamma \colon
X \to \prob(Y)$, we define $\gamma(\mu) \in\prob(Y)$ to be
$(\gamma(\mu))(y) = \sum_{x \in X} (\gamma(x))(y) \cdot \mu(x)$
for every $y \in Y$.
For any function $\gamma \colon X \to Y$,
as an abuse of notation, we define $\gamma(\mu) \in \prob(Y)$ to be
$\{x \mapsto \dirac_{\gamma(x)}\}(\mu)$, equivalently, 
$(\gamma(\mu))(y) = \sum_{x \in \gamma^{-1}(y)}\mu (x)$ for every $y \in Y$.
Every function $\gamma \colon X \to Y$ can be regarded as 
$\{x \mapsto \dirac_\gamma(x)\} \colon X \to \prob(Y)$.

\subsection{Divergences between distributions}
We start from a very general definition of divergences.  Our notation includes
the domain of definition of the divergence; this distinction will be important
when introducing the concept of $k$-generatedness.

\begin{definition}
A \emph{divergence} is a family 
$\Delta = \{\Delta_X\}_X$ of functions
\[
\Delta_X \colon \prob(X) \times \prob(X) \to [0,\infty].
\]
We use the notation $\ddist{\mu_1}{\mu_2}{X}$ to denote the divergence
between distributions $\mu_1$ and $\mu_2$ over $X$.
\end{definition}
Our notion of divergence subsumes the general notion of $f$-divergence from the
literature~\cite{csiszar63,csiszar04}.  In particular, this includes the
$\varepsilon$-divergence~\cite{BartheOlmedo2013} used to formulate
$(\varepsilon,\delta)$-differential privacy:
\[
  \Delta^\varepsilon_X(\mu_1||\mu_2) \defeq
  \sup_{S \subseteq X}(\Pr[\mu_1 \in S] -  e^\varepsilon \Pr[\mu_2 \in S]).
\]
Specifically, a randomized algorithm
$\mathcal{M} \colon X \to \prob(Y)$ is
$(\varepsilon,\delta)$-differentially private if and only if for every
pair of adjacent inputs $x_0$ and $x_1$, we have
\[
\Delta^\varepsilon_Y(\mathcal{M}(x_0)||\mathcal{M}(x_1)) \leq \delta.
\]
Many useful properties of divergences have been explored in the
literature. Our technical development will involve the following two
properties.
\begin{itemize}
\item A divergence $\Delta$ satisfies the
  \emph{data-processing inequality}
  iff for every $\gamma \colon X\to \prob(Y)$,
  $\Delta_Y(\gamma(\mu_1)||\gamma(\mu_2))\leq \Delta_X(\mu_1 || \mu_2)$.
\item A divergence $\Delta$ is \emph{quasi-convex} iff for every
  $\alpha_1,\ldots,\alpha_m\in [0,1]$ such that  $\textstyle{\sum_{m = 1}^N}
  \alpha_m = 1$ and every discrete set $X$,
\[
\Delta_X ({\textstyle \sum_{m = 1}^N} \alpha_m d_{1,m} || {\textstyle \sum_{m = 1}^N} \alpha_m d_{2,m})
\leq \max_m \Delta_X (d_{1,m} || d_{2,m}).
\]
\end{itemize}
These properties are satisfied by many common divergences. Besides
R\'enyi divergences, they also hold for all
$f$-divergences~\cite{csiszar63,csiszar04}. We will consider only
divergences satisfying them in the following. 

\subsection{$k$-cuts of divergences}
We now introduce a technical construction that will be useful in the rest of
the paper.
\begin{definition}
Let $k\in\mathbb{N}\cup\{\infty\}$.
For any divergence $\Delta = \{\Delta_X\}_{X\colon\mathrm{set}}$,
we define \emph{a $k$-cut} $\overline{\Delta}^k = \{\overline{\Delta}^k_X\}_{X\colon\mathrm{set}}$ as follows:
we fix a set $Y$ with cardinality $k$, i.e. $|Y| = k$, and define
\[
\overline{\Delta}^k_X(\mu_1||\mu_2) \defeq \sup_{\gamma \colon X \to \prob (Y)}\Delta_Y(\gamma (\mu_1)|| \gamma (\mu_2)).
\]
\end{definition}

For divergences $\Delta$ that satisfy the data-processing inequality, then the
$k$-cut is well-defined: it does not depend on the choice of $Y$. 
\begin{lemma}
If a divergence $\Delta$ satisfies the data-processing inequality, we have the
inequality $\overline{\Delta}^k \leq \Delta$ and the equality
$\overline{\Delta}^k_Y = \Delta_Y$ for any set $Y$ with $|Y| = k$.
\end{lemma}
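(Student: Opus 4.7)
The proof is essentially immediate from the definitions once we unpack them, so my plan is mostly to identify the two key applications of the data-processing inequality.

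First, for the inequality $\overline{\Delta}^k \leq \Delta$, I fix an arbitrary set $X$, distributions $\mu_1, \mu_2 \in \prob(X)$, and the reference set $Y$ with $|Y|=k$ used in the definition of the $k$-cut. For any $\gamma \colon X \to \prob(Y)$, the data-processing inequality applied directly to $\gamma$ yields
\[
\Delta_Y(\gamma(\mu_1) \,||\, \gamma(\mu_2)) \leq \Delta_X(\mu_1\,||\,\mu_2).
\]
Taking the supremum over all such $\gamma$ on the left gives $\overline{\Delta}^k_X(\mu_1\,||\,\mu_2) \leq \Delta_X(\mu_1\,||\,\mu_2)$, which is the claimed inequality.

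Second, for the equality $\overline{\Delta}^k_Y = \Delta_Y$ when $|Y|=k$, the inequality $\overline{\Delta}^k_Y \leq \Delta_Y$ is already given by the first part. For the reverse inequality, I specialize the supremum to the identity map: the function $\mathrm{id}_Y \colon Y \to Y$, which under the paper's convention is viewed as $y \mapsto \dirac_y \colon Y \to \prob(Y)$. By a short calculation (using $(\gamma(\mu))(y') = \sum_{y \in \gamma^{-1}(y')}\mu(y)$ for the deterministic case), we have $\mathrm{id}_Y(\mu_i) = \mu_i$ for $i=1,2$. Therefore
\[
\Delta_Y(\mathrm{id}_Y(\mu_1)\,||\,\mathrm{id}_Y(\mu_2)) = \Delta_Y(\mu_1\,||\,\mu_2),
\]
and since this particular value is dominated by the supremum defining $\overline{\Delta}^k_Y$, we conclude $\overline{\Delta}^k_Y(\mu_1\,||\,\mu_2) \geq \Delta_Y(\mu_1\,||\,\mu_2)$.

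Combining the two directions gives the equality. The only subtle point, which the paper asserts just before the lemma, is that the $k$-cut does not depend on the particular choice of the reference set $Y$ of cardinality $k$; this follows because any two such sets are related by a bijection $\phi$, and applying the data-processing inequality to both $\phi$ and $\phi^{-1}$ shows that the divergence is preserved under pushforward by bijections, so the two suprema coincide. There is no real obstacle here—the argument is a clean exercise in matching definitions, with the identity map serving as the witness that the DPI bound is actually attained on sets of size exactly $k$.
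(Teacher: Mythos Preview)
Your proof is correct and uses the same ingredients as the paper: the data-processing inequality for the upper bound, and a bijection/identity argument for the lower bound on sets of size $k$. The only organizational difference is that the paper fixes the reference set $W$ and, for an \emph{arbitrary} $Y$ with $|Y|=k$, argues directly via a bijection $f\colon Y \to W$ that $\Delta_Y(\nu_1\|\nu_2)=\Delta_Y(f^{-1}(f(\nu_1))\|f^{-1}(f(\nu_2)))\leq \Delta_W(f(\nu_1)\|f(\nu_2))\leq \overline{\Delta}^k_Y(\nu_1\|\nu_2)$, whereas you first take $Y$ equal to the reference set (so the identity map is the witness) and then invoke independence of the choice of reference set separately; both routes are equivalent.
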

So, without loss of generality in the sequel we will refer to this as ``the''
$k$-cut.

Another interesting property of $k$-cuts is that a $k$-cut $\overline{\Delta}^k$
of a divergence $\Delta$ satisfies the data-processing inequality, even if the
original divergence $\Delta$ does not satisfy it.

Without loss of generality, we can assume the function $\gamma$ in the definition of
a $k$-cut to be deterministic.
This can be proved by a weak version of Birkhoff-von Neumann theorem, which decomposes every probabilistic decision rule into a convex combination of deterministic ones.

\begin{theorem}[Weak Birkhoff-von Neumann] \label{thm:weak-bvn}
Let $k,l \in \mathbb{N}$ and $k > l$. Let $X$ and $Y$ such that
$|X|=k$ and $|Y|=l$. Then for any $\gamma \colon X \to \prob(Y)$,
there exist $N \in \mathbb{N}$, $\gamma_1,\ldots,\gamma_N \colon X \to Y$ and
$a_1, \ldots, a_N \in [0,1]$ such that $\sum_{m = 1}^N a_m = 1$
and $\gamma(x) = \sum_{m = 1}^N a_m \dirac_{\gamma_m(x)}$ ($x\in X$).
\end{theorem}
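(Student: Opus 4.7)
The plan is to exhibit an explicit convex decomposition by viewing $\gamma$ as specifying independent randomization at each input, so that the product distribution on the finite set $Y^X$ of deterministic functions yields the coefficients. Concretely, for every $f \in Y^X$ set
\[
  a_f \defeq \prod_{x \in X} \gamma(x)(f(x)),
\]
and enumerate $Y^X = \{f_1,\dots,f_N\}$ with $N \le l^k$, which is finite since $k$ and $l$ are finite. I will then check that these weights give the required decomposition. Note that the hypothesis $k>l$ is not actually used in the argument; it reflects the intended regime of application to $k$-cuts rather than any real restriction.

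Two short verifications finish the proof. First, normalization follows by interchanging the finite sum over $Y^X$ with the product and using that each $\gamma(x)$ is a probability distribution:
\[
  \sum_{f \in Y^X} a_f \;=\; \prod_{x \in X}\sum_{y \in Y} \gamma(x)(y) \;=\; 1.
\]
Second, for any fixed $x_0 \in X$ and $y_0 \in Y$, isolating the factor at $x_0$ gives
\[
  \sum_{f:\,f(x_0)=y_0} a_f \;=\; \gamma(x_0)(y_0)\cdot\!\!\prod_{x\neq x_0}\sum_{y \in Y}\gamma(x)(y) \;=\; \gamma(x_0)(y_0),
\]
so $\sum_{f \in Y^X} a_f\,\dirac_{f(x_0)} = \gamma(x_0)$ as distributions on $Y$, for every $x_0$. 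Discarding the indices with $a_f=0$ and relabelling produces the claimed $\gamma_1,\dots,\gamma_N$ and coefficients $a_1,\dots,a_N \in [0,1]$ summing to one.

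I do not expect a serious obstacle here. The only genuinely subtle point is that the decomposition coefficients $a_m$ must be independent of $x$, while the naive pointwise decomposition $\gamma(x) = \sum_{y \in Y}\gamma(x)(y)\,\dirac_y$ uses coefficients that depend on $x$; the product-distribution construction is precisely the device that packages these $k$ separate pointwise convex combinations into a single convex combination whose atoms are total functions $X\to Y$. An alternative plan would be induction on $|X|$, peeling off one input at a time and applying the one-point decomposition of each $\gamma(x)$, but the product-distribution presentation makes the proof essentially a one-line calculation.
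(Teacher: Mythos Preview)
Your proof is correct and pleasantly short, but it is genuinely different from the paper's argument. The paper proceeds algorithmically: it represents $\gamma$ as a $k\times l$ row-stochastic matrix and repeatedly peels off a deterministic matrix $g_{m+1}$ (one $1$ per row, chosen at the row-wise argmax) with coefficient $\alpha_{m+1}=\min_i\max_j(\tilde f_m)_{i,j}$, subtracting it from the residual. Each step zeroes at least one new entry, so the process terminates in at most $kl$ steps and yields a decomposition with $N\le kl$ terms. Your product-distribution construction instead gives $N\le l^k$ terms; in exchange, it avoids any iteration or termination argument and is essentially a two-line computation. One practical advantage of the paper's greedy approach is that it extends verbatim to countably infinite $X$ with a countable decomposition (which the paper invokes later), whereas your product measure on $Y^X$ would then be supported on an uncountable set and would not directly yield a countable convex combination. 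For the finite statement as written, however, your argument is entirely adequate.
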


This fact, allow us to consider simplified formulations of a $k$-cut
of a divergence. Examples of this fact that will be useful in the
sequel are the $2$-cut and $3$-cut of the R\'enyi divergence of order
$\alpha$. These can be reformulated as follows:
\begin{align*}
\overline{D^\alpha}^{2}_X(\mu_1||\mu_2)
&=\sup_{\substack{S \subseteq X}}
\frac{1}{\alpha-1}\log\left\{
\begin{array}{l@{}}
\Pr[\mu_1 \in S]^\alpha\Pr[\mu_2 \in S]^{1-\alpha}\\
+\Pr[\mu_1 \notin S]^\alpha\Pr[\mu_2 \notin S]^{1-\alpha}
\end{array}
\right\},\\
\overline{D^\alpha}^{3}_X(\mu_1||\mu_2)
&=
\sup_{\substack{S_1,S_2 \subseteq X,\\ S_1 \cap S_2 = \emptyset}}
{\frac{1}{\alpha-1}}\log
\left\{
\begin{array}{l@{}}
\Pr[\mu_1 \in S_1]^\alpha\Pr[\mu_2 \in S_1]^{1-\alpha}\\
+\Pr[\mu_1 \in S_2]^\alpha\Pr[\mu_2 \in S_2]^{1-\alpha}\\
+\Pr[\mu_1 \notin  S_1\cup S_2]^\alpha\Pr[\mu_2 \notin  S_1\cup S_2]^{1-\alpha}
\end{array}
\right\}.
\end{align*}

\subsection{$k$-generatedness of divergences}

We now introduce the notion of {$k$-generatedness}. Informally,
$k$-generatedness is a measure of the number of decisions that are needed in an hypothesis test to characterize a divergence.
\begin{definition}
Let $k\in\mathbb{N}\cup\{\infty\}$.
A divergence $\Delta$ is
\emph{$k$-generated} if a $k$-cut $\overline{\Delta}^k$ of $\Delta$ is equal to $\Delta$ itself.  
\end{definition}
$k$-generatedness can also be reformulated as follows:
\begin{lemma}\label{lem:well-definedness_of_k-generation}
If $\Delta = \{\Delta_X\}_{X\colon\mathrm{set}}$ is $k$-generated,
for \emph{any} set $Y$ with $|Y| = k$, we have 
\[
\Delta_X(\mu_1||\mu_2) = \sup_{\gamma \colon X \to \prob (Y)}\Delta_Y(\gamma(\mu_1)||\gamma(\mu_2)).
\]
\end{lemma}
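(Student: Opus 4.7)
The plan is to reduce the lemma to the earlier well-definedness lemma for $k$-cuts, which asserts that when $\Delta$ satisfies the data-processing inequality, $\overline{\Delta}^k$ does not depend on the choice of the fixed reference set of cardinality $k$. Once this independence is in hand, the lemma follows almost immediately from the definition of $k$-generatedness.

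First I would spell out what the right-hand side of the claimed equality is. For any set $Y$ with $|Y|=k$, write
\[
\overline{\Delta}^{k,Y}_X(\mu_1||\mu_2) \defeq \sup_{\gamma \colon X \to \prob(Y)} \Delta_Y(\gamma(\mu_1)||\gamma(\mu_2)).
\]
Then the statement of the lemma is the assertion that $\Delta_X = \overline{\Delta}^{k,Y}_X$ for every such $Y$, under the hypothesis that $\Delta = \overline{\Delta}^{k,Y_0}$ for the particular reference set $Y_0$ used to define $\overline{\Delta}^k$.

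The core step is to show $\overline{\Delta}^{k,Y}_X = \overline{\Delta}^{k,Y_0}_X$ for any two sets $Y, Y_0$ of cardinality $k$. This is the content (or the essential content) of the well-definedness lemma, and it is proved by transport along a bijection $\phi \colon Y_0 \to Y$. Given $\gamma \colon X \to \prob(Y_0)$, the composition $\phi \circ \gamma \colon X \to \prob(Y)$ satisfies $(\phi\circ\gamma)(\mu) = \phi(\gamma(\mu))$, and by the data-processing inequality $\Delta_Y(\phi(\gamma(\mu_1))||\phi(\gamma(\mu_2))) \leq \Delta_{Y_0}(\gamma(\mu_1)||\gamma(\mu_2))$. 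Taking supremum and using the symmetric argument with $\phi^{-1}$ gives $\overline{\Delta}^{k,Y_0}_X \leq \overline{\Delta}^{k,Y}_X$ and the reverse inequality, hence equality.

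The closing step is a one-line application of $k$-generatedness: by hypothesis $\Delta_X = \overline{\Delta}^{k,Y_0}_X$, and by the previous step $\overline{\Delta}^{k,Y_0}_X = \overline{\Delta}^{k,Y}_X$, so $\Delta_X = \overline{\Delta}^{k,Y}_X$, which is exactly the supremum formula in the statement. The only mild subtlety—and the place where one must be careful—is making sure the data-processing inequality is actually used at the right place (for both directions of the bijection argument); this is the point the informal remark before the lemma alludes to, and the only potential obstacle, but it is immediate from the blanket assumption in the paper that divergences under consideration satisfy DPI.
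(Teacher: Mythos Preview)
Your proposal is correct and follows essentially the same approach as the paper: transport along a bijection between two $k$-element sets together with the data-processing inequality to show that the $k$-cut formula is independent of the reference set, then invoke the $k$-generatedness hypothesis. The one refinement in the paper's proof worth noting is that it does not rely on a blanket DPI assumption: instead it observes that any $k$-cut automatically satisfies the data-processing inequality, so from $\Delta = \overline{\Delta}^k$ it follows that $\Delta$ itself satisfies DPI, making the lemma self-contained.
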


\begin{lemma}\label{fact:basic}
The following basic properties hold for all $k$-generated divergences.
\begin{itemize}
\item If $\Delta$ is $1$-generated, then $\Delta$ is constant, i.e. there exists $c \in [0,\infty]$ such that for every $X$ and every $\mu_1,\mu_2\in\prob(X)$, we have $\Delta_X (\mu_1 || \mu_2) = c$.
\item If $\Delta$ is $k$-generated, then it is also $k+1$-generated.
\item  If $\Delta$ has the data-processing inequality, then it is at least $\infty$-generated.
\item Every $k$-cut of a divergence $\Delta$ is $k$-generated.
\end{itemize}
\end{lemma}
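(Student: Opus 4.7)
The plan is to prove each bullet by unwinding the definition of the $k$-cut and leveraging the data-processing inequality (DPI), either as a hypothesis or as the already-established property of every $k$-cut. I handle the three essentially definitional items first, and return to the monotonicity step (ii) at the end, since it is the only one that requires a small construction.

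For item (i), when $|Y|=1$ every map $\gamma \colon X \to \prob(Y)$ sends both $\mu_1$ and $\mu_2$ to the unique distribution $\dirac_{y_0}$ on the single point $y_0 \in Y$, so $\overline{\Delta}^1_X(\mu_1 || \mu_2) = \Delta_Y(\dirac_{y_0} || \dirac_{y_0})$ is a single constant $c$ independent of $X$, $\mu_1$, $\mu_2$; since $\Delta = \overline{\Delta}^1$, the value is $c$ throughout. For item (iv), set $\Delta' = \overline{\Delta}^k$. Since $\Delta'$ satisfies DPI, the preceding lemma yields $\overline{\Delta'}^k \leq \Delta'$; for the reverse, fix $Y$ with $|Y|=k$ and use $\mathrm{id}_Y$ as a candidate in the outer supremum defining $\Delta'_Y$, which gives $\Delta'_Y \geq \Delta_Y$ pointwise, hence $\overline{\Delta'}^k_X \geq \overline{\Delta}^k_X = \Delta'_X$ after taking the supremum over $\gamma \colon X \to \prob(Y)$. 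For item (iii), assume $\Delta$ satisfies DPI and fix a countably infinite $Y$. Any discrete $X$ admits an injection $\gamma \colon X \to Y$; extend a left inverse arbitrarily to obtain $\pi \colon Y \to X$ with $\pi \circ \gamma = \mathrm{id}_X$. Applying DPI to $\gamma$ and to $\pi$ yields $\Delta_X(\mu_1 || \mu_2) = \Delta_Y(\gamma \mu_1 || \gamma \mu_2)$, so $\overline{\Delta}^\infty_X \geq \Delta_X$; the reverse inequality is once more the preceding lemma.

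Item (ii) is the main step. Since $\Delta = \overline{\Delta}^k$ inherits DPI from its $k$-cut, the preceding lemma gives $\overline{\Delta}^{k+1} \leq \Delta$. For the reverse inequality, fix $Y$ with $|Y|=k$ and embed it as $\iota \colon Y \hookrightarrow Y'$ with $|Y'|=k+1$; pick a retraction $\pi \colon Y' \to Y$ with $\pi \circ \iota = \mathrm{id}_Y$. For any $\gamma \colon X \to \prob(Y)$ set $\gamma' = \iota \circ \gamma \colon X \to \prob(Y')$. DPI applied to $\iota$ gives $\Delta_{Y'}(\gamma' \mu_1 || \gamma' \mu_2) \leq \Delta_Y(\gamma \mu_1 || \gamma \mu_2)$, while DPI applied to $\pi$ on the distributions $\gamma' \mu_i = \iota(\gamma \mu_i)$ gives the reverse inequality using $\pi \circ \iota \circ \gamma = \gamma$. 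Equality follows, and taking the supremum over $\gamma$ yields $\overline{\Delta}^{k+1}_X \geq \overline{\Delta}^k_X = \Delta_X$. The only substantive obstacle is this embed-and-project construction, and even there it is a routine two-line DPI argument.
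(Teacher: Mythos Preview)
Your proof is correct and follows essentially the same route as the paper: each item is handled by an embed-and-retract construction combined with the data-processing inequality of the $k$-cut, which is exactly how the appendix proofs of Lemma~14 proceed. The only imprecision is in item (iii), where you assume $X$ itself is countable so as to inject it into a countably infinite $Y$; the paper instead restricts to the countable set $\supp(\mu_1)\cup\supp(\mu_2)$ before building the section--retraction pair with $\mathbb{N}$, but this is a one-line fix to your argument.
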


To compare a $k$-generated divergence and a divergence, we have the
following lemma where all the inequalities are defined pointwise.

\begin{lemma}\label{fact:comparison}
Consider a divergence $\Delta$ and a $k$-generated divergence $\Delta'$. For any $k$-cut $\overline{\Delta}^k$ of $\Delta$,
\[
\Delta' \leq \Delta \implies \Delta' \leq \overline{\Delta}^k.
\]
Also, if $\Delta$ has the data-processing inequality, the $k$-cut is the greatest $k$-generated divergence below $\Delta$:
\[
\Delta' \leq \Delta \iff \Delta' \leq \overline{\Delta}^k \leq \Delta.
\]
\end{lemma}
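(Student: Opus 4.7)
The plan is to exploit the representation of $\Delta'$ as a supremum given by Lemma \ref{lem:well-definedness_of_k-generation} and combine it with the already-established bound $\overline{\Delta}^k \leq \Delta$ under data-processing.

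For the first implication, I fix a witness set $Y$ with $|Y|=k$ used in the definition of the chosen $k$-cut $\overline{\Delta}^k$. Since $\Delta'$ is $k$-generated, Lemma \ref{lem:well-definedness_of_k-generation} (applied with the same $Y$) gives
\[
\Delta'_X(\mu_1||\mu_2) \;=\; \sup_{\gamma\colon X\to\prob(Y)} \Delta'_Y(\gamma(\mu_1)||\gamma(\mu_2)).
\]
Now I apply the hypothesis $\Delta'\leq\Delta$ \emph{pointwise} to each argument of the supremum: for every $\gamma\colon X\to\prob(Y)$,
\[
\Delta'_Y(\gamma(\mu_1)||\gamma(\mu_2)) \;\leq\; \Delta_Y(\gamma(\mu_1)||\gamma(\mu_2)).
\]
Taking suprema over $\gamma$ on both sides yields $\Delta'_X(\mu_1||\mu_2)\leq \overline{\Delta}^k_X(\mu_1||\mu_2)$, which is exactly the desired inequality.

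For the equivalence in the second part, assume additionally that $\Delta$ has the data-processing inequality. The backward direction is just transitivity of pointwise $\leq$. For the forward direction, the first part already gives $\Delta'\leq \overline{\Delta}^k$, and the inequality $\overline{\Delta}^k\leq \Delta$ is precisely the content of the lemma following the definition of $k$-cut (which only requires data-processing of $\Delta$). Chaining these two yields $\Delta'\leq \overline{\Delta}^k\leq \Delta$.

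There is no real obstacle in this argument: the whole proof is essentially a monotonicity-of-supremum step wrapped around the $k$-generated representation of $\Delta'$. The only point one has to be careful about is that the same witness set $Y$ of cardinality $k$ should be used both to define the specific $k$-cut $\overline{\Delta}^k$ and to invoke the $k$-generatedness formula for $\Delta'$, so that the two suprema range over the same family of decision rules and can be compared termwise.
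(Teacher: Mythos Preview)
Your proof is correct and follows essentially the same route as the paper: use the $k$-generated representation of $\Delta'$ (Lemma~\ref{lem:well-definedness_of_k-generation}) with the same witness set $Y$, apply the pointwise hypothesis $\Delta'\le\Delta$ under each $\gamma$, and take suprema; the second part then follows from the first together with $\overline{\Delta}^k\le\Delta$ under data-processing. The paper phrases the first step slightly more abstractly as $\Delta'_Y\le\Delta_Y\Rightarrow\overline{\Delta'}^k\le\overline{\Delta}^k$ and then identifies $\overline{\Delta'}^k=\Delta'$, but this is exactly your supremum-monotonicity argument unpacked.
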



\subsubsection{A $2$-generated divergence for DP}

The divergence $\Delta^\varepsilon$ that can be used to characterize
$(\varepsilon,\delta)$-DP is $2$-generated. This
implies that DP can be characterized completely by its
hypothesis testing interpretation. 

\begin{theorem}\label{varepsilon-divergence_is_2-generated}
The $\varepsilon$-divergence $\Delta^\varepsilon$ is $2$-generated.
\end{theorem}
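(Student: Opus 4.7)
My plan is to show the two inequalities $\overline{\Delta^\varepsilon}^2 \leq \Delta^\varepsilon$ and $\Delta^\varepsilon \leq \overline{\Delta^\varepsilon}^2$ separately, following the standard pattern for establishing $k$-generatedness of a divergence that enjoys the data-processing inequality.

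For the first inequality, I would simply invoke the fact, stated right after the definition of $k$-cut, that any divergence satisfying the data-processing inequality satisfies $\overline{\Delta}^k \leq \Delta$. Since $\Delta^\varepsilon$ is an $f$-divergence (with $f(t) = \max(t - e^\varepsilon, 0)$), it satisfies the data-processing inequality, so the bound $\overline{\Delta^\varepsilon}^2 \leq \Delta^\varepsilon$ follows immediately.

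The substantive direction is $\Delta^\varepsilon \leq \overline{\Delta^\varepsilon}^2$. Fix $X$, distributions $\mu_1, \mu_2 \in \prob(X)$, and let $Y = \{0,1\}$. For any subset $S \subseteq X$, consider the deterministic decision rule $\gamma_S \colon X \to Y$ defined by $\gamma_S(x) = 1$ if $x \in S$ and $\gamma_S(x) = 0$ otherwise. Then $\gamma_S(\mu_i)(1) = \Pr[\mu_i \in S]$ and $\gamma_S(\mu_i)(0) = 1 - \Pr[\mu_i \in S]$. Computing the $\varepsilon$-divergence on $Y$ by taking the test subset $T = \{1\} \subseteq Y$ in the supremum defining $\Delta^\varepsilon_Y$, we obtain
\[
\Delta^\varepsilon_Y(\gamma_S(\mu_1) \| \gamma_S(\mu_2)) \geq \Pr[\mu_1 \in S] - e^\varepsilon \Pr[\mu_2 \in S].
\]
Taking the supremum over all $S \subseteq X$ on the right gives $\Delta^\varepsilon_X(\mu_1 \| \mu_2)$, while the left-hand side is bounded above by $\sup_\gamma \Delta^\varepsilon_Y(\gamma(\mu_1) \| \gamma(\mu_2)) = \overline{\Delta^\varepsilon}^2_X(\mu_1 \| \mu_2)$.

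There is essentially no obstacle here: the proof works because the very definition of $\Delta^\varepsilon$ as a supremum over subsets $S \subseteq X$ is already naturally expressed in terms of binary tests, so the ``witness reduction'' to a two-point space is given by the indicator function $\mathbf{1}_S$. The only point worth flagging is that we use a deterministic $\gamma_S$ on the lower-bound side, which is allowed because the supremum defining $\overline{\Delta^\varepsilon}^2$ ranges over all probabilistic $\gamma \colon X \to \prob(Y)$ and in particular over deterministic ones (alternatively, by Theorem~\ref{thm:weak-bvn} one may restrict to deterministic $\gamma$ without loss of generality).
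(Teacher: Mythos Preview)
Your proof is correct and follows essentially the same approach as the paper: both arguments use the data-processing inequality for the upper bound and the indicator map $\gamma_S = \chi_S$ together with the test set $\{1\}$ (the paper's $\{\mathtt{Acc}\}$) to witness the lower bound. Your version is in fact slightly more streamlined, since by exhibiting deterministic witnesses directly for $\Delta^\varepsilon \leq \overline{\Delta^\varepsilon}^2$ you avoid the paper's explicit reduction of the $2$-cut supremum to deterministic $\gamma$ via the weak Birkhoff--von~Neumann theorem and quasi-convexity.
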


Since $\Delta^\varepsilon$ is quasi-convex and satisfies data-processing
inequality, the $2$-cut can be reformulated as:
\[
\overline{\Delta^\varepsilon}^{2}_X(\mu_1||\mu_2)
=\sup_{S \subseteq X}
(\Pr[\mu_1 \in S] - e^\varepsilon \Pr[\mu_2 \in S])
\]
It is easy to show that this is exactly the same as the original
definition of $\Delta^\varepsilon$, from which follows that it is $2$-generated.

\subsubsection{R\'enyi is $\infty$-generated} \label{section:Renyi:infty}

In contrast to the divergence $\Delta^\varepsilon$,
the 2-cut of the R\'enyi divergence is not complete
with respect to the R\'enyi divergence. 

To see this let $X=\{a,b,c\}$ and let
$\mu_1,\mu_2\in\prob(X)$ be defined by
$\mu_1(a)=\mu_1(b)=\mu_1(c)=\frac{1}{3}$ and 
$\mu_2(a)=\frac{p^2}{p^2+p+1}$, $\mu_2(b)=\frac{p}{p^2+p+1}$ and 
$\mu_2(b)=\frac{1}{p^2+p+1}$.

We set $\beta > \alpha + 1$ and $p = (1/2)^{\beta/(\alpha-1)}$, a simple calculation shows:
\begin{align*}
&\overline{D^{\alpha}}^2_X(\mu_1||\mu_2) + \frac{1}{\alpha - 1} \log \frac{2^\beta + 2^{-\beta} + 1}{\max(2^{\alpha + 1},2^\beta + 1)} \leq D^{\alpha}_{X}(\mu_1||\mu_2)
\end{align*}
The difference is quantitatively small, but it is nevertheless
strictly positive. This shows also that the R\'enyi divergence is not
$2$-generated.

Similary, one can show that the 3-cut is not complete, that the 4-cut is not
complete, etc. In fact, R\'enyi divergence is exactly $\infty$-generated.
Indeed, R\'enyi divergence satisfies the data-processing inequality, hence it is
at most $\infty$-generated.  Moreover, any $f$-divergences whose weight function
$f$ is strictly convex is not $k$-generated for any finite $k$.  The formulation
of R\'enyi divergence of order $\alpha$ given by $\exp((\alpha -
1)D^{\alpha}_{X}(\mu_1||\mu_2))$ is an $f$-divergence related to the weight
function $t \mapsto t^\alpha$, which is strictly convex. Since the logarithm
function is continuous on $(0,\infty)$ and strictly monotone, we conclude that
the R\'enyi divergence is $\infty$-generated. The formal details can be found in
the appendix.

\section{Hypothesis Testing Interpretation of Divergences}

In this section, we give an hypothesis testing characterization
similar to the one that differential privacy satisfies for the $2$-cut of
an arbitrary divergence. 

We first define privacy regions for divergences using their $2$-cuts. 
\begin{definition}
For any divergence $\Delta$, we define its \emph{privacy region} $R^{\Delta}(\rho) \subseteq [0,1] \times [0,1]$ by
\[
R^{\Delta}(\rho)
\defeq
\Set{(x,y) |
\overline{\Delta}^2_{\{\mathtt{Acc},\mathtt{Rej}\}}((1-x)\mathbf{d}_{\mathtt{Acc}}+x\mathbf{d}_{\mathtt{Rej}} || y\mathbf{d}_{\mathtt{Acc}}+(1-y)\mathbf{d}_{\mathtt{Rej}}) \leq \rho
}.
\]
\end{definition}

Notice that if $\Delta$ satisfies the data-processing inequality, or is $2$-generated, then
$\overline{\Delta}^2_{\{\mathtt{Acc},\mathtt{Rej}\}}$ in the
definition above can be replaced by $\Delta_{\{\mathtt{Acc},\mathtt{Rej}\}}$.

As an example, we can give the  privacy region of DP.
\[
R^{\Delta^{\varepsilon}}(\delta)
=
\Set{(x,y) |
1-x \leq e^\varepsilon y + \delta, \quad
x \leq e^\varepsilon (1-y) + \delta
}
\]


Privacy regions are intimately related to the hypothesis testing
interpretation of privacy definitions based on divergences.

\begin{theorem}\label{2-generated_and_HT}
Let $\mu_1,\mu_2 \in \prob (X)$.
$\overline{\Delta}^2_X( \mu_1 || \mu_2) \leq \rho$ holds if and only if
for any $\gamma \colon X \to \prob (\{\mathtt{Acc},\mathtt{Rej}\})$,
\[
(\Pr[\gamma(\mu_1) = \mathtt{Rej}],\Pr[\gamma(\mu_2) = \mathtt{Acc}]) \in R^{\Delta}(\rho).
\]
%
\end{theorem}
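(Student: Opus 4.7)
The plan is to unroll both sides of the claimed equivalence into universally quantified inequalities over probabilistic maps $\gamma\colon X \to \prob(\{\mathtt{Acc},\mathtt{Rej}\})$, and then to observe that they coincide via a Kleisli composition argument. The only non-trivial ingredient is an idempotence property of the $2$-cut operation.

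I first unfold the left-hand side. By the very definition of the $2$-cut, the condition $\overline{\Delta}^2_X(\mu_1||\mu_2) \leq \rho$ is equivalent to the universal statement that $\Delta_{\{\mathtt{Acc},\mathtt{Rej}\}}(\gamma(\mu_1) || \gamma(\mu_2)) \leq \rho$ holds for every $\gamma\colon X \to \prob(\{\mathtt{Acc},\mathtt{Rej}\})$. I then unfold the right-hand side. For any such $\gamma$, writing $x = \Pr[\gamma(\mu_1) = \mathtt{Rej}]$ and $y = \Pr[\gamma(\mu_2) = \mathtt{Acc}]$, the pushforwards factor as $\gamma(\mu_1) = (1-x)\mathbf{d}_{\mathtt{Acc}} + x\mathbf{d}_{\mathtt{Rej}}$ and $\gamma(\mu_2) = y\mathbf{d}_{\mathtt{Acc}} + (1-y)\mathbf{d}_{\mathtt{Rej}}$, so membership $(x,y) \in R^{\Delta}(\rho)$ is exactly $\overline{\Delta}^2_{\{\mathtt{Acc},\mathtt{Rej}\}}(\gamma(\mu_1)||\gamma(\mu_2)) \leq \rho$.

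Thus the theorem reduces to showing that, for fixed $\mu_1, \mu_2$, the conditions \emph{(i)} $\Delta_{\{\mathtt{Acc},\mathtt{Rej}\}}(\gamma(\mu_1)||\gamma(\mu_2)) \leq \rho$ for every $\gamma$, and \emph{(ii)} $\overline{\Delta}^2_{\{\mathtt{Acc},\mathtt{Rej}\}}(\gamma(\mu_1)||\gamma(\mu_2)) \leq \rho$ for every $\gamma$, are equivalent. For $(i) \Rightarrow (ii)$, I expand the inner $2$-cut as $\sup_{\gamma'} \Delta_{\{\mathtt{Acc},\mathtt{Rej}\}}(\gamma'(\gamma(\mu_1))||\gamma'(\gamma(\mu_2)))$ over $\gamma'\colon \{\mathtt{Acc},\mathtt{Rej}\} \to \prob(\{\mathtt{Acc},\mathtt{Rej}\})$; each term is bounded by $\rho$ because the Kleisli composition $\gamma' \circ \gamma$ is again a map $X \to \prob(\{\mathtt{Acc},\mathtt{Rej}\})$ covered by the hypothesis. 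For $(ii) \Rightarrow (i)$, I plug the identity $\gamma' = (z \mapsto \mathbf{d}_z)$ into the supremum defining $\overline{\Delta}^2_{\{\mathtt{Acc},\mathtt{Rej}\}}(\gamma(\mu_1)||\gamma(\mu_2))$; this immediately yields $\Delta_{\{\mathtt{Acc},\mathtt{Rej}\}}(\gamma(\mu_1)||\gamma(\mu_2)) \leq \rho$ for every $\gamma$, and taking the supremum over $\gamma$ gives $\overline{\Delta}^2_X(\mu_1||\mu_2) \leq \rho$.

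The main obstacle is at worst careful bookkeeping of the nested suprema; there is no genuine analytic difficulty. I note that the argument uses neither quasi-convexity nor the data-processing inequality of $\Delta$: both directions rely only on the definition of the $k$-cut and on the closure of the class $\{\gamma\colon X \to \prob(\{\mathtt{Acc},\mathtt{Rej}\})\}$ under post-composition with maps out of $\{\mathtt{Acc},\mathtt{Rej}\}$, which is precisely the idempotence $\overline{\overline{\Delta}^2}{}^2 = \overline{\Delta}^2$ that also follows from the fact that every $k$-cut is $k$-generated.
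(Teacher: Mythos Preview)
Your proof is correct and follows essentially the same route as the paper's: both reduce the statement to the equivalence between $\forall\gamma.\ \Delta_{\{\mathtt{Acc},\mathtt{Rej}\}}(\gamma(\mu_1)||\gamma(\mu_2))\leq\rho$ and $\forall\gamma.\ \overline{\Delta}^2_{\{\mathtt{Acc},\mathtt{Rej}\}}(\gamma(\mu_1)||\gamma(\mu_2))\leq\rho$, and close this via the Kleisli-composition/identity-substitution argument (the paper phrases the forward direction as ``data-processing inequality of the $2$-cut'', which is exactly your composition step). The only cosmetic difference is that the paper takes the $2$-cut with respect to an arbitrary two-element set $W$ and then uses a bijection $f\colon\{\mathtt{Acc},\mathtt{Rej}\}\to W$ to bridge the gap, whereas you tacitly take $W=\{\mathtt{Acc},\mathtt{Rej}\}$ from the start; this is harmless but worth flagging, since without that choice your first unfolding of the left-hand side would not be literal.
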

In the theorem above, the functions $\gamma \colon X \to \prob
(\{\mathtt{Acc},\mathtt{Rej}\})$ can be seen as probabilistic decision rules.
Moreover, the privacy region can be actually relaxed to
$R^{\Delta}(\rho) \cup \Set{(x,y) | x + y \geq 1}$ since for any decision rule $\gamma \colon X \to \prob (\{\mathtt{Acc},\mathtt{Rej}\})$, we can take its negation $\neg\gamma$.
Hence we do not need to check the cases of $\Pr[\gamma(\mu_1) =
\mathtt{Rej}] + \Pr[\gamma(\mu_2) = \mathtt{Acc}] > 1$. This also
corresponds to the symmetry that we have in their graphical representations.

Finally, if a divergence $\Delta$ is quasi-convex, we also have the
equivalent of Theorem \ref{2-generated_and_HT} under deterministic decision rules.
In this case, we have the following reformulation. Let $\mu_1,\mu_2 \in \prob (X)$.
$\overline{\Delta}^2_X( \mu_1 || \mu_2) \leq \rho$ iff
for any $S \subseteq X$,
\[
(\Pr[\mu_1 \in S],\Pr[\mu_2 \notin S]) \in R^{\Delta}(\rho).
\]

This give us  the hypothesis testing characterization of DP, since  the $\varepsilon$-divergence is $2$-generated and quasi-convex.
\begin{corollary}
Let $\mu_1,\mu_2 \in \prob (X)$.
Set $\varepsilon, \delta \geq 0$.
$\Delta^{\varepsilon}_X( \mu_1 || \mu_2) \leq \delta$ iff
for any $S \subseteq X$,
\[
(\Pr[\mu_1 \in S],\Pr[\mu_2 \notin S]) \in R^{\Delta^{\varepsilon}}(\delta).
\]
\end{corollary}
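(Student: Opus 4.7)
The plan is to derive this corollary by chaining together the three ingredients that the paper has just assembled: (i) Theorem \ref{varepsilon-divergence_is_2-generated}, which states that $\Delta^\varepsilon$ is $2$-generated, (ii) Theorem \ref{2-generated_and_HT}, which gives the hypothesis testing characterization of $\overline{\Delta}^2$, and (iii) the remark immediately preceding the corollary, which says that for quasi-convex $\Delta$ the probabilistic decision rules in Theorem \ref{2-generated_and_HT} can be replaced by deterministic ones (i.e.\ subsets $S \subseteq X$).

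Concretely, I would argue as follows. Since $\Delta^\varepsilon$ is $2$-generated, we have $\overline{\Delta^\varepsilon}^2_X(\mu_1||\mu_2) = \Delta^\varepsilon_X(\mu_1||\mu_2)$ for every $X$ and every pair $\mu_1,\mu_2 \in \prob(X)$. In particular the condition $\Delta^\varepsilon_X(\mu_1||\mu_2) \leq \delta$ is equivalent to $\overline{\Delta^\varepsilon}^2_X(\mu_1||\mu_2) \leq \delta$. Now apply Theorem \ref{2-generated_and_HT} with $\Delta = \Delta^\varepsilon$ and $\rho = \delta$: this yields that $\overline{\Delta^\varepsilon}^2_X(\mu_1||\mu_2) \leq \delta$ holds iff for every probabilistic decision rule $\gamma \colon X \to \prob(\{\mathtt{Acc},\mathtt{Rej}\})$ the pair $(\Pr[\gamma(\mu_1) = \mathtt{Rej}],\Pr[\gamma(\mu_2) = \mathtt{Acc}])$ belongs to $R^{\Delta^\varepsilon}(\delta)$.

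Finally, I invoke quasi-convexity of $\Delta^\varepsilon$ (it is an $f$-divergence, and the paper has already noted all $f$-divergences are quasi-convex) together with the reformulation immediately preceding the corollary. This lets me replace the quantification over probabilistic $\gamma$ by a quantification over subsets $S \subseteq X$, identifying $\Pr[\gamma(\mu_1) = \mathtt{Rej}]$ with $\Pr[\mu_1 \in S]$ and $\Pr[\gamma(\mu_2) = \mathtt{Acc}]$ with $\Pr[\mu_2 \notin S]$ when $\gamma$ is the deterministic rule associated to $S$.

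There is no real obstacle here: the corollary is a bookkeeping step that collects the previously established facts. The only thing to double-check carefully is that the quasi-convex reformulation applies in both directions, i.e.\ that membership in $R^{\Delta^\varepsilon}(\delta)$ for every deterministic $S$ is indeed enough to conclude membership for every probabilistic $\gamma$; this follows because any probabilistic rule is a convex combination of deterministic ones by Theorem \ref{thm:weak-bvn}, and the privacy region $R^{\Delta^\varepsilon}(\delta)$ is convex (being the intersection of two half-planes).
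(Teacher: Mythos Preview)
Your proposal is correct and matches the paper's approach exactly: the paper derives the corollary in one line by noting that $\Delta^\varepsilon$ is $2$-generated (Theorem \ref{varepsilon-divergence_is_2-generated}) and quasi-convex, then invoking the quasi-convex reformulation of Theorem \ref{2-generated_and_HT} stated just before the corollary. Your write-up simply unpacks this with more care, including the justification via Theorem \ref{thm:weak-bvn} and the convexity of $R^{\Delta^\varepsilon}(\delta)$ for the passage from deterministic to probabilistic rules.
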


We conclude this section by stressing that
Theorem~\ref{2-generated_and_HT}  tell us two important things:

\begin{itemize}
\item Every privacy definition similar to differential privacy but based on a $2$-generated divergence  is characterized completely by
  its hypothesis testing interpretation.
\item
For every  privacy definition similar to differential privacy but
based on an arbitrary divergence we can have an  hypothesis testing
interpretations by considering its $2$-cut. However, this
characterization will not be necessarily complete.
\end{itemize}
The second remark applies in particular to relaxations of differential privacy based
on the R\'enyi divergence: if we want to have the hypothesis testing
interpretation for one of these relaxations we can use the $2$-cut of the R\'enyi divergence.

\section{Applications}
In this section we will use the technical tools we developed in
the previous sections to better study the relations between different
privacy definitions.
\subsection{Conversions from Divergences to DP}

Privacy regions can be used to give better conversion rules between
privacy definitions based on divergences and differential privacy.
Let $\Delta' = \{\Delta'_X\}_{X\colon\mathrm{set}}$ be a divergence satisfying
the data-processing inequality.  We want to find the minimal parameters
$(\varepsilon(\rho),\delta(\rho))$ such that $\Delta'_X(\mu_1||\mu_2)  \leq
\rho$ implies $\Delta^{\varepsilon(\rho)}_X(\mu_1||\mu_2) \leq
\delta(\rho)$.
%

By Theorem \ref{2-generated_and_HT} and Lemma \ref{fact:comparison},
$R^{\Delta'}(\rho) \subseteq R^{\Delta^{\varepsilon(\rho)}}(\delta(\rho))$ holds if and only if
for any pair $\mu_1,\mu_2 \in \prob(X)$, 
\[
\Delta'_X(\mu_1||\mu_2) \leq \rho \implies \Delta^{\varepsilon(\rho)}_X(\mu_1||\mu_2) \leq \delta(\rho).
\]
This means that to find a good conversion law we can just compare the privacy regions.

\subsubsection{Better Conversion from RDP to DP}
\begin{figure}[h]
  \centering
  \includegraphics[height=7cm]{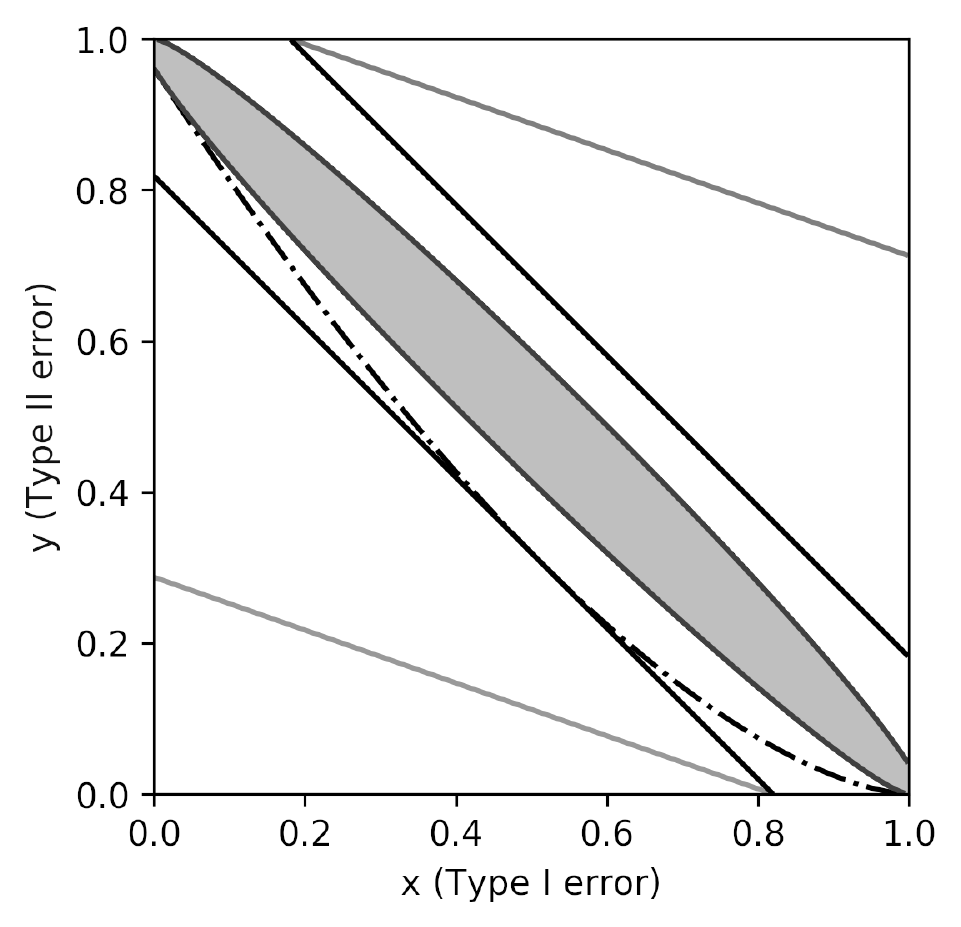}
  \caption{A refined conversion law from RDP to DP. The gray region is $R^{D^\alpha}(\rho)$. The gray and black lines show original and refiend DP-bounds for the same $\delta$.}
\end{figure}
Using privacy regions, we can refine Mironov's conversion law from RDP to DP in a simple way.
\begin{lemma}[{\cite[Prop. 3]{Mironov17}}]\label{lemma:Mironov}
If a mechanism $\mathcal{M}$ is $(\alpha,\rho)$-RDP then the mechanism is also
$(\rho - \log \delta/(\alpha-1),\delta)$-DP for any $0 < \delta < 1$.
\end{lemma}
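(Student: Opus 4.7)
The plan is to prove the lemma by the classical tail-bound argument on the privacy loss. Fix a pair of adjacent inputs $x_0, x_1$ and write $\mathcal{L}(y) = \mathcal{M}(x_0)(y) / \mathcal{M}(x_1)(y)$. As recorded in the table preceding the lemma, the hypothesis $(\alpha,\rho)$-RDP is equivalent to the moment bound $\EE_{y \sim \mathcal{M}(x_1)}[\mathcal{L}(y)^\alpha] \le e^{(\alpha-1)\rho}$. My goal is to deduce from this, for every $S \subseteq Y$,
\[
\Pr[\mathcal{M}(x_0) \in S] \le e^\varepsilon \Pr[\mathcal{M}(x_1) \in S] + \delta
\]
with $\varepsilon = \rho - \log\delta/(\alpha-1)$; note that $\varepsilon > \rho$ because $\delta < 1$ makes $\log\delta < 0$.

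The key step is a decomposition of $S$ by the value of the privacy loss. Let $t = e^\varepsilon$ and split $S = S_L \cup S_H$ with $S_L = \{y \in S : \mathcal{L}(y) \le t\}$ and $S_H = S \setminus S_L$. On $S_L$ the pointwise inequality $\mathcal{M}(x_0)(y) \le t\,\mathcal{M}(x_1)(y)$ immediately yields $\Pr[\mathcal{M}(x_0) \in S_L] \le e^\varepsilon \Pr[\mathcal{M}(x_1) \in S]$. For the tail part, I would use the identity $\mathcal{M}(x_0)(y) = \mathcal{L}(y)\,\mathcal{M}(x_1)(y)$ together with the inequality $\mathcal{L}(y) \le \mathcal{L}(y)^\alpha / t^{\alpha-1}$, valid on $S_H$ since $\mathcal{L}(y) > t$ there, to obtain
\[
\Pr[\mathcal{M}(x_0) \in S_H] = \sum_{y \in S_H} \mathcal{M}(x_1)(y)\,\mathcal{L}(y) \le t^{-(\alpha-1)} \EE_{y \sim \mathcal{M}(x_1)}[\mathcal{L}(y)^\alpha] \le e^{(\alpha-1)(\rho - \varepsilon)}.
\]

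Substituting $\varepsilon = \rho - \log\delta/(\alpha-1)$ turns the right-hand side into exactly $\delta$, and adding the two contributions gives the $(\varepsilon,\delta)$-DP bound. The only non-routine ingredient is the tail estimate, which is a Markov-type inequality tailored to the change of measure between $\mathcal{M}(x_0)$ and $\mathcal{M}(x_1)$: the factor $\mathcal{L}(y)$ coming from that change of measure combines with $\mathcal{L}(y)^{\alpha-1}/t^{\alpha-1}$ on $S_H$ to invoke the RDP moment bound. After that, optimizing the threshold to match the tail mass to $\delta$ is a one-line computation. An alternative route, in the spirit of the surrounding discussion, would be to establish the containment $R^{D^\alpha}(\rho) \subseteq R^{\Delta^\varepsilon}(\delta)$ by Theorem~\ref{2-generated_and_HT} and Lemma~\ref{fact:comparison}, but unwinding the privacy regions of the two-point distributions reduces to the same tail inequality.
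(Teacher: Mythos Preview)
Your proof is correct; it is essentially Mironov's original argument: split $S$ by whether the likelihood ratio exceeds $e^\varepsilon$, bound the low part pointwise, and control the high part by a Markov-type moment inequality using the RDP hypothesis. The substitution $\varepsilon = \rho - \log\delta/(\alpha-1)$ makes the tail exactly $\delta$.

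The paper, however, rederives the lemma through its $2$-cut/privacy-region machinery rather than directly via the privacy loss. It works on two-point distributions: from $x^\alpha(1-y)^{1-\alpha} + (1-x)^\alpha y^{1-\alpha} \le e^{\rho(\alpha-1)}$ it drops the first term to get $(1-x) \le (e^\rho y)^{(\alpha-1)/\alpha}$, then does a case split on whether $e^\rho y$ exceeds $\delta^{\alpha/(\alpha-1)}$ to conclude $(1-x) \le e^{\rho - \log\delta/(\alpha-1)} y + \delta$, i.e., $R^{D^\alpha}(\rho) \subseteq R^{\Delta^\varepsilon}(\delta)$. This is exactly the ``alternative route'' you mention at the end, and the two computations are morally the same---your set split $S_L/S_H$ corresponds to the paper's case split on $y$. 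The reason the paper prefers the privacy-region formulation is not incidental: staying in that picture lets one immediately sharpen the bound by replacing the crude curve $1-x = (e^\rho y)^{(\alpha-1)/\alpha}$ with its tangent line, which is how the improved conversion in the next theorem is obtained.
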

The privacy region of R\'enyi divergence is given by
\[
R^{D^\alpha}(\rho)
=
\Set{(x,y) | x^\alpha (1-y)^{1-\alpha} + (1-x)^\alpha y^{1-\alpha} \leq e^{\rho(\alpha - 1)}}.
\]
Here we assume $0^{1-\alpha} = 0$.

By an extension of Lemma 15, to find $\varepsilon$ satisfying  
\[
\forall{X}.\forall{\mu_1,\mu_2\in\prob(X)}.~D^\alpha_X(\mu_1||\mu_2) \leq \rho \implies \Delta^\varepsilon_X(\mu_1 || \mu_2) \leq \delta,
\]
it is necessary and sufficient to find $\varepsilon$ satisfying 
\[
\forall{X}.\forall{\mu_1,\mu_2\in\prob(X)}.~\overline{D^\alpha}^2_X(\mu_1||\mu_2) \leq \rho \implies \Delta^\varepsilon_X(\mu_1 || \mu_2) \leq \delta.
\]
By Theorem 18, this is equivalent to find $\varepsilon$ satisfying 
$R^{D^\alpha}(\rho) \subseteq R^{\Delta^\varepsilon}(\delta)$.
Inspired from Mironov's proof of conversion law from RDP to DP \cite[Propisition 3]{Mironov17}: we obtain,
\begin{align*}
\lefteqn{x^\alpha (1-y)^{1-\alpha} + (1-x)^\alpha y^{1-\alpha} \leq e^{\rho(\alpha - 1)}}\\
&\implies (1-x)^\alpha y^{1-\alpha} \leq e^{\rho(\alpha - 1)} \\
&\implies (1-x) \leq (e^{\rho} y)^{\frac{\alpha-1}{\alpha}} \tag{\dag}\\
&\implies (e^{\rho} y > \delta^{\frac{\alpha}{\alpha - 1}} \implies (1-x) \leq e^{\rho - \log d/(\alpha-1)} y)\\
& \quad \land (e^{\rho} y \leq \delta^{\frac{\alpha}{\alpha - 1}} \implies (1-x) \leq \delta) \\
& \implies (1-x) \leq e^{\rho - \log d/(\alpha-1)} y + \delta \tag{\ddag}.
\end{align*}
The equality (\ddag) derives original Mironov's result \cite[Propisition 3]{Mironov17}.
Now, starting from (\dag), we have a better bound for DP as follows: consider a curve $C$ given by the equation
\[
1-x = (e^{\rho}y)^{\frac{\alpha-1}{\alpha}}
\iff
x = 1-(e^{\rho}y)^{\frac{\alpha-1}{\alpha}}
\].
We have the derivative of $x$ as follows:
\[
\frac{dx}{dy}
=
-\frac{\alpha-1}{\alpha}e^{\frac{\alpha-1}{\alpha}\rho}y^{-\frac{1}{\alpha}}
\]
We can take the tangent of the curve $C$ by
\[
x = \frac{dx}{dy}(t)(y - t) + (e^{\rho}(1-t))^{\frac{\alpha-1}{\alpha}}
\]
We will find parameters that a tangent of $C$ meets
$(1-x) = e^\varepsilon y + \delta$.
$x = -e^\varepsilon y - \delta +1$
We first solve 
\[
-e^\varepsilon
=
\frac{dx}{dy}(t)
=
-\frac{\alpha-1}{\alpha}e^{\frac{\alpha-1}{\alpha}\rho}t^{-\frac{1}{\alpha}}
\iff 
\varepsilon
=
\log(\frac{\alpha-1}{\alpha}) + \frac{\alpha-1}{\alpha}\rho -\frac{1}{\alpha}\log t.
\]
Next we solve
\[
1-\delta = -t\frac{dx}{dy}(t) + 1-(e^{\rho}t)^{\frac{\alpha-1}{\alpha}}
\iff
1-\delta =
\frac{\alpha-1}{\alpha}e^{\frac{\alpha-1}{\alpha}\rho}t^{-\frac{1}{\alpha}}t + 1-(e^{\rho}t)^{\frac{\alpha-1}{\alpha}}
\]
We then have 
\[
\delta = (e^{\rho}t)^{\frac{\alpha-1}{\alpha}} - \frac{\alpha-1}{\alpha}e^{\frac{\alpha-1}{\alpha}\rho}t^{-\frac{1}{\alpha}}t
=
\frac{1}{\alpha}(e^{\rho}t)^{\frac{\alpha-1}{\alpha}}
\iff
t = (\delta \alpha e^{-\frac{\alpha-1}{\alpha}\rho})^{\frac{\alpha}{\alpha-1}}
\]
Simple computations give the following:
\[
\varepsilon
=
\log(\frac{\alpha-1}{\alpha}) + \rho - \frac{\log\delta + \log \alpha}{\alpha-1}.
\]
By the symmetry of $R^{D^\alpha}(\rho)$ and $R^{\Delta^{\varepsilon}}(\delta)$, we have
\[
R^{D^\alpha}(\rho) \subseteq R^{\Delta^{\varepsilon}}(\delta).
\]
As we mentioned, it is equivalent to
\[
\forall{X}.\forall{\mu_1,\mu_2\in\prob(X)}.~
D^\alpha_X(\mu_1 || \mu_2) \leq \rho
\implies
\Delta^{\varepsilon}_X(\mu_1 || \mu_2) \leq \delta.
\]

Therefore, we have the following better conversion law:
\begin{theorem}
If a mechanism $\mathcal{M}$ is $(\alpha,\rho)$-RDP then it is
$(\rho + \log((\alpha-1)/\alpha) - (\log \delta + \log \alpha)/(\alpha-1),\delta)$-DP for any $0 < \delta < 1$.
\end{theorem}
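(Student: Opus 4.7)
The plan is to reduce the desired RDP-to-DP conversion to a question about inclusion of privacy regions, then to a geometric problem about majorizing a concave curve by a line.

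First, I would apply Theorem 18 together with the comparison lemma (Lemma \ref{fact:comparison}) to reformulate the goal. Since $\Delta^\varepsilon$ is $2$-generated and both divergences satisfy the data-processing inequality, the implication $D^\alpha_X(\mu_1\|\mu_2)\le\rho\Rightarrow\Delta^\varepsilon_X(\mu_1\|\mu_2)\le\delta$ (uniformly in $X,\mu_1,\mu_2$) is equivalent to the inclusion $R^{D^\alpha}(\rho)\subseteq R^{\Delta^\varepsilon}(\delta)$. Thus the problem becomes: given $\rho$ and $\delta$, find the smallest $\varepsilon$ so that this inclusion holds. Using the explicit description $R^{D^\alpha}(\rho)=\{(x,y):x^\alpha(1-y)^{1-\alpha}+(1-x)^\alpha y^{1-\alpha}\le e^{(\alpha-1)\rho}\}$ and dropping the nonnegative first term, every $(x,y)\in R^{D^\alpha}(\rho)$ satisfies $(1-x)\le (e^\rho y)^{(\alpha-1)/\alpha}$ (step (\dag) in the excerpt). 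So it suffices to majorize the curve $C:\;(1-x)=(e^\rho y)^{(\alpha-1)/\alpha}$ by the affine bound $(1-x)\le e^\varepsilon y+\delta$ that defines $R^{\Delta^\varepsilon}(\delta)$.

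The key geometric observation is that, for $\alpha>1$, the exponent $(\alpha-1)/\alpha\in(0,1)$, so the function $y\mapsto(e^\rho y)^{(\alpha-1)/\alpha}$ is strictly concave on $(0,1]$. Consequently, every tangent line to $C$ lies above $C$ globally, and the optimal affine upper bound of the prescribed form is precisely a tangent line. So I would fix $\delta$ and solve the two tangency conditions: matching slopes, $-e^\varepsilon=\tfrac{dx}{dy}(t)=-\tfrac{\alpha-1}{\alpha}e^{(\alpha-1)\rho/\alpha}t^{-1/\alpha}$, and matching $y$-intercepts, $1-\delta=-t\cdot\tfrac{dx}{dy}(t)+1-(e^\rho t)^{(\alpha-1)/\alpha}$. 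The second equation simplifies to $\delta=\tfrac{1}{\alpha}(e^\rho t)^{(\alpha-1)/\alpha}$, which can be solved explicitly for the tangent point $t=(\delta\alpha e^{-(\alpha-1)\rho/\alpha})^{\alpha/(\alpha-1)}$. Substituting into the slope equation yields the claimed $\varepsilon=\rho+\log((\alpha-1)/\alpha)-(\log\delta+\log\alpha)/(\alpha-1)$.

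To finish I would verify the range: for $0<\delta<1$ the candidate tangent point $t$ lies in $(0,1]$ provided $\delta$ is not absurdly large relative to $\rho$, and in the degenerate regime the statement is vacuous or strictly weaker than Mironov's bound, so the new bound dominates. Finally, both $R^{D^\alpha}(\rho)$ and $R^{\Delta^\varepsilon}(\delta)$ are symmetric under the involution $(x,y)\mapsto(1-y,1-x)$ (reflecting the involution on the underlying test), so the inclusion obtained for the half governed by step (\dag) automatically yields the corresponding inclusion for the mirror half, giving $R^{D^\alpha}(\rho)\subseteq R^{\Delta^\varepsilon}(\delta)$ and hence the theorem. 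The main obstacle I anticipate is the concavity/tangent-line argument: one must justify carefully that dropping the $x^\alpha(1-y)^{1-\alpha}$ term does not lose the optimum (which is where concavity of $C$ plays in: the constraint boundary is dominated by its $(1-x)$-projection precisely at the worst-case point), and that the tangent point really lies on the relevant arc of $C$. Everything else reduces to the elementary calculus already sketched in the excerpt.
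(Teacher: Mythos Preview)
Your proposal is correct and follows essentially the same route as the paper: reduce to the privacy-region inclusion $R^{D^\alpha}(\rho)\subseteq R^{\Delta^\varepsilon}(\delta)$, drop the term $x^\alpha(1-y)^{1-\alpha}$ to obtain the majorizing curve $C$, and solve the two tangency equations to extract the stated $\varepsilon$. Your explicit invocation of concavity (since $(\alpha-1)/\alpha\in(0,1)$) is a welcome justification that the paper leaves implicit; note also that the paper does \emph{not} claim this bound is optimal (it conjectures optimality would require working with the true boundary of $R^{D^\alpha}(\rho)$), so your worry about ``losing the optimum'' when dropping the first term is unnecessary---only validity of the bound is being asserted.
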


As a conjecture, if we calculate tangents of the boundary 
of the privacy region $R^{D^\alpha}(\rho)$, we have 
\emph{optimal} conversion law from $(\alpha,\rho)$-RDP to DP.
The boundary of $R^{D^\alpha}(\rho)$ is given by the equation
\[
x^\alpha (1-y)^{1-\alpha} + (1-x)^\alpha y^{1-\alpha} = e^{\rho(\alpha - 1)}.
\]

\subsection{On Gaussian Differential Privacy}
Gaussian differential privacy (GDP) \cite[Def. 2.6]{DongRS19} has been
recently proposed as a privacy definition trading-off PMD and
PFA. This can be characterized by means of privacy regions. 
We have seen that privacy regions correspond to $2$-generated divergence.
Thus, a natural question is: can we characterize GDP using a $2$-generated divergence.
The answer is yes. We can characterize GDP by the following divergence:
\begin{align*}
&\Delta^{\mathrm{Gauss}}_X(\mu_1||\mu_2)\\
&=
\sup
\Set{ \delta | \begin{array}{l@{}} \exists \gamma \colon X \to \prob(\{\mathtt{Acc},\mathtt{Rej}\}).\\
\quad \Pr[\gamma(\mu_2) = 
\mathtt{Acc}] \\
\quad \geq \Phi(\inverse{\Phi}(\Pr[\gamma(\mu_1) = \mathtt{Rej}]) - \delta)
\end{array}
}.
\end{align*}
where $\Phi$ is the standard normal CDF.
The data-processing inequality of the divergence $\Delta^{\mathrm{Gauss}}$ is proved from \cite[Lem. 2.6]{DongRS19}
Hence, the privacy region is given as follows:
\[
R^{\Delta^{\mathrm{Gauss}}}(\delta)=
\Set{ (x,y) | \begin{array}{l@{}}
y \geq \Phi(\inverse{\Phi}(1-x) - \delta)\\
1-y \geq \Phi(\inverse{\Phi}(x) - \delta)
\end{array}
}.
\]
By Theorem \ref{2-generated_and_HT}, $\Delta^{\mathrm{Gauss}}$ is $2$-generated.
\subsection{Informativeness of $k$-cuts}
The concept of $k$-cut can be related to the ability that a divergence
has of distinguishing two 
distributions.
\begin{definition}
We say that a divergence $\Delta$ is $\delta$-distinguishing a pair $\mu_1, \mu_2 \in \prob(X)$ of probability distributions if
$\Delta_X (\mu_1,\mu_2) > \delta$.
\end{definition}
Now, consider a divergence $\Delta$ satisfying the data-processing inequality.
Then the $k$-cuts form a monotone increasing sequence:
$
\overline{\Delta}^1 \leq \overline{\Delta}^2 \leq \overline{\Delta}^3 \leq \cdots \leq \overline{\Delta}^{k} \leq \overline{\Delta}^{k+1} \leq \cdots.
$
Thus for any divergence with data-processing inequality, $k+1$-cut of $\Delta$ is always more informative than the $k$-cut of $\Delta$ for every $k \in \mathbb{N}$ in the following sense.
If the $k$-cut of a divergence is $\delta$-distinguishing a pair
$\mu_1, \mu_2 \in \prob(X)$ then the $k+1$ cut is
$\delta$-distinguishing them too: 
\[
\overline{\Delta}^{k+1}_X (\mu_1||\mu_2) \geq \overline{\Delta}^{k}_X (\mu_1||\mu_2) > \delta.
\]
For example, for the pair $\mu_1,\mu_2 \in \prob(\{a,b,c\})$ in the counterexample given in Section \ref{section:Renyi:infty}, we can find a value of $\delta$ such that
$\overline{D^\alpha}^{3}$ is $\delta$-distinguishing $\mu_1,\mu_2$ and
$\overline{D^\alpha}^{2}$ is \emph{not} $\delta$-distinguishing them.
\section{A characterization of $k$-generated divergences}
As we have seen, $k$-generated divergences satisfy a number of useful
properties; known divergences from the literature can be classified according to
this parameter $k$ - we have shown some examples here, more examples
are in the supplemental material. In the other direction, we give a simple condition to ensure
that a divergence is $k$-generated: the suprema of quasi-convex functions over size
$k$-partitions determine $k$-generated divergences.

\begin{theorem} \label{thm:countable-sup-implies-kgen}
Let $F \colon [0, 1]^{2k} \to [0, \infty]$ be a quasi-convex function.
Then the divergence $\Delta^F$ defined below is $k$-generated and quasi-convex.
\[
\Delta^F_X(\mu_1||\mu_2) \hspace{0.5em} \defeq\hspace{-1em}\sup_{\substack{\{ A_i \}_{i = 1}^k \\\text{partition of } X } }
\hspace{-1em} F\left(\mu_1(A_1),\cdots,\mu_1(A_k),\mu_2(A_1),\cdots,\mu_2(A_k)\right).
\]
\end{theorem}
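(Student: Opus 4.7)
The plan is to prove the two assertions, quasi-convexity and $k$-generatedness, separately.

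Quasi-convexity of $\Delta^F$ follows directly from that of $F$. Suppose $\mu_j = \sum_{m=1}^N \alpha_m d_{j,m}$ for $j \in \{1,2\}$. For any partition $\{A_i\}_{i=1}^k$ of $X$, I have $\mu_j(A_i) = \sum_m \alpha_m d_{j,m}(A_i)$, so the argument vector $(\mu_1(A_1),\ldots,\mu_2(A_k))$ is the convex combination $\sum_m \alpha_m (d_{1,m}(A_1),\ldots,d_{2,m}(A_k))$. Quasi-convexity of $F$ then bounds its value by $\max_m F(d_{1,m}(A_1),\ldots,d_{2,m}(A_k)) \leq \max_m \Delta^F_X(d_{1,m}||d_{2,m})$. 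Taking the supremum over partitions yields quasi-convexity of $\Delta^F$.

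For $k$-generatedness, I would first establish the easy direction $\Delta^F \leq \overline{\Delta^F}^k$. Given a partition $\{A_i\}_{i=1}^k$ of $X$ and a set $Y=\{y_1,\ldots,y_k\}$, consider the deterministic map $\gamma \colon X \to Y$ sending each point of $A_i$ to $y_i$. Then $\gamma(\mu_j)(y_i) = \mu_j(A_i)$, and the singleton partition $\{\{y_i\}\}_{i=1}^k$ of $Y$ witnesses $\Delta^F_Y(\gamma(\mu_1)||\gamma(\mu_2)) \geq F(\mu_1(A_1),\ldots,\mu_2(A_k))$. Taking the supremum over partitions of $X$ gives $\overline{\Delta^F}^k_X(\mu_1||\mu_2) \geq \Delta^F_X(\mu_1||\mu_2)$.

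The reverse inequality $\overline{\Delta^F}^k \leq \Delta^F$ is the substantive step and amounts to a data-processing-type argument. Fix any $\gamma \colon X \to \prob(Y)$ with $|Y|=k$ and any partition $\{B_i\}_{i=1}^k$ of $Y$. Define the probabilistic map $\gamma' \colon X \to \prob(\{1,\ldots,k\})$ by $\gamma'(x)(i) \defeq \gamma(x)(B_i)$, so that $\gamma(\mu_j)(B_i) = \gamma'(\mu_j)(i)$. By Theorem~\ref{thm:weak-bvn} (extended to the case $|X|\leq k$ by the direct product decomposition $a_f \defeq \prod_{x}\gamma'(x)(f(x))$), I can write $\gamma'(x) = \sum_m a_m \dirac_{\gamma_m(x)}$ for deterministic $\gamma_m \colon X \to \{1,\ldots,k\}$ with weights $a_m$ summing to one. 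Then $\gamma'(\mu_j)(i) = \sum_m a_m \mu_j(\gamma_m^{-1}(i))$, so the argument to $F$ is a convex combination of the vectors $v_m \defeq (\mu_1(\gamma_m^{-1}(1)),\ldots,\mu_2(\gamma_m^{-1}(k)))$. By quasi-convexity of $F$, its value on the combination is at most $\max_m F(v_m)$; and since each $\{\gamma_m^{-1}(i)\}_{i=1}^k$ is a partition of $X$, $F(v_m) \leq \Delta^F_X(\mu_1||\mu_2)$. Taking the supremum over $\{B_i\}$ and then over $\gamma$ gives the desired inequality.

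The hard part is exactly this reverse inequality: one must transfer the partition-based supremum on $X$ to one on $Y$ in spite of the probabilistic nature of $\gamma$. Combining weak Birkhoff--von Neumann with the quasi-convexity of $F$ reduces the problem to deterministic maps, whose preimages of singletons are partitions of $X$, at which point the bound is immediate.
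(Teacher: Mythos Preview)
Your proposal is correct and follows essentially the same route as the paper: both arguments prove quasi-convexity directly from that of $F$, and for $k$-generatedness both reduce a probabilistic $\gamma$ to deterministic maps via the weak Birkhoff--von Neumann decomposition and then invoke quasi-convexity of $F$ to pass the supremum inside. Your handling of the inner partition $\{B_i\}$ by absorbing it into a new map $\gamma'$ is just a repackaging of the paper's composition with a relabeling $p\colon\{1,\dots,k\}\to\{1,\dots,k\}$, and your explicit remark on the case $|X|\le k$ (which the stated hypothesis of Theorem~\ref{thm:weak-bvn} excludes) is a welcome bit of extra care.
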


\begin{proof}
The quasi-convexity is obvious from the quasi-convexity of $F \colon [0, 1]^{2k} \to [0, \infty]$.
We show the $k$-generatedness.
We take the $k$-cut with respect to the $k$-element set $\{1,2,\ldots,k\}$.
We may assume $X$ is countable. For any $\mu_1,\mu_2\in\prob(X)$, 
\begin{align*}
\lefteqn{\overline{\Delta^F}^k_X(\mu_1 || \mu_2)}\\
&= \sup_{\gamma \colon X \to \prob(\{1,2,\ldots,k\})} \Delta^F_{\{1,2,\ldots,k\}}(\gamma(\mu_1) || \gamma(\mu_2))\\
&= \sup_{\gamma \colon X \to \prob(\{1,2,\ldots,k\})} \sup_{\substack{\{ A_i \}_{i = 1}^k \\\text{partition of } \\\{1,2,\ldots,k\} } } F\left( \begin{array}{l@{}}(\gamma(\mu_1))(A_1),\cdots,(\gamma(\mu_1))(A_k),\\(\gamma(\mu_2))(A_1),\cdots,(\gamma(\mu_2))(A_k)\end{array} \right)\\
&= \sup_{\substack{\gamma \colon X \to \prob(\{1,2,\ldots,k\}) \\ p \colon \{1,2,\ldots,k\} \to \{1,2,\ldots,k\}}} F\left( \begin{array}{l@{}}(\gamma(\mu_1))(\inverse{p}(1)),\cdots,(\gamma(\mu_1))(\inverse{p}(k)),\\(\gamma(\mu_2))(\inverse{p}(1)),\cdots,(\gamma(\mu_2))(\inverse{p}(k))\end{array} \right)\\
&= \sup_{\substack{\gamma \colon X \to \prob(\{1,2,\ldots,k\}) \\ p \colon \{1,2,\ldots,k\} \to \{1,2,\ldots,k\}}} F\left( \begin{array}{l@{}}( ((p \bullet \gamma)(\mu_1))(1),\cdots,((p \bullet \gamma)(\mu_1))(k),\\((p \bullet \gamma)(\mu_2))(1),\cdots,((p \bullet \gamma)(\mu_2))(k)\end{array} \right)\\
&= \sup_{\gamma \colon X \to \prob(\{1,2,\ldots,k\})} F\left( \begin{array}{l@{}}( (\gamma(\mu_1))(1),\cdots,(\gamma(\mu_1))(k),\\(\gamma(\mu_2))(1),\cdots,(\gamma(\mu_2))(k)\end{array} \right).
\end{align*}
Here by weak Birkhoff-von Neumann thorem (countable version), every function $\gamma \colon X \to \prob(\{1,2,\ldots,k\})$ is decomposed into a (countable) convex combination
$\sum_{i \in I} a_i (\eta_{\{1,2,\ldots,k\}} \circ \gamma_i)$ of $\gamma_i \colon  X \to \{1,2,\ldots,k\}$.
Hence,
\begin{align*}
\lefteqn{\overline{\Delta^F}^k_X(\mu_1 || \mu_2)}\\
&= \sup_{\gamma \colon X \to \prob(\{1,2,\ldots,k\})} F\left( \begin{array}{l@{}}( (\gamma(\mu_1))(1),\cdots,(\gamma(\mu_1))(k),\\(\gamma(\mu_2))(1),\cdots,(\gamma(\mu_2))(k)\end{array} \right) \tag{\dag}\\
&= \sup_{\gamma \colon X \to \prob(\{1,2,\ldots,k\})}
F\left( \begin{array}{l@{}}( (\sum_{i \in I} a_i (\eta_{\{1,2,\ldots,k\}} \circ \gamma_i)(\mu_1))(1),\\\qquad \cdots,(\sum_{i \in I} a_i (\eta_{\{1,2,\ldots,k\}} \circ \gamma_i)(\mu_1))(k),\\(\sum_{i \in I} a_i (\eta_{\{1,2,\ldots,k\}} \circ \gamma_i)(\mu_2))(1),\\\qquad \cdots,(\sum_{i \in I} a_i (\eta_{\{1,2,\ldots,k\}} \circ \gamma_i)(\mu_2))(k)\end{array} \right)\\
&= \sup_{\gamma \colon X \to \prob(\{1,2,\ldots,k\})}
F\left( \begin{array}{l@{}}( \sum_{i \in I} a_i (\gamma_i(\mu_1))(1),\cdots, \sum_{i \in I} a_i (\gamma_i(\mu_1))(k),\\\sum_{i \in I} a_i (\gamma_i(\mu_2))(1),\cdots, \sum_{i \in I} a_i (\gamma_i(\mu_2))(k)\end{array} \right)\\
&\leq 
\sup_{\gamma \colon X \to \prob(\{1,2,\ldots,k\})}
\sup_{i \in I}
F\left( \begin{array}{l@{}}((\gamma_i(\mu_1))(1),\cdots,(\gamma_i(\mu_1))(k),\\
(\gamma_i(\mu_2))(1),\cdots,(\gamma_i(\mu_2))(k)\end{array} \right)\\
&\leq 
\sup_{\gamma \colon X \to \{1,2,\ldots,k\}}
F\left( \begin{array}{l@{}}((\gamma(\mu_1))(1),\cdots,(\gamma(\mu_1))(k),\\
(\gamma(\mu_2))(1),\cdots,(\gamma(\mu_2))(k)\end{array} \right)\\
&=
\sup_{\gamma \colon X \to \{1,2,\ldots,k\}}
F\left( \begin{array}{l@{}}(\mu_1(\inverse{\gamma}(1)),\cdots,\mu_1(\inverse{\gamma}(k)),\\
\mu_2(\inverse{\gamma}(1)),\cdots,\mu_2(\inverse{\gamma}(k))\end{array} \right)\\
&\leq
\sup_{\substack{\{ A_i \}_{i = 1}^k \\\text{partition of } \\\{1,2,\ldots,k\} } } F\left( \begin{array}{l@{}}(\gamma(\mu_1))(A_1),\cdots,(\gamma(\mu_1))(A_k),\\(\gamma(\mu_2))(A_1),\cdots,(\gamma(\mu_2))(A_k)\end{array} \right)\\
&= \Delta^F_X(\mu_1 || \mu_2).
\end{align*}
We have $\overline{\Delta^F}^k_X(\mu_1 || \mu_2) \leq \Delta^F_X(\mu_1 || \mu_2)$.
Conversely, by equality $(\dag)$, we also have
$\overline{\Delta^F}^k_X(\mu_1 || \mu_2) \geq \Delta^F_X(\mu_1 || \mu_2)$.
This completes the proof.
\end{proof}

This result characterizes $k$-generated quasi-convex divergences.
It also serves as a useful tool to construct new divergences with
a hypothesis testing interpretation, by varying the quasi-convex function $F$.

\section{Conclusion}
In this paper we have developed analytical tools to study the
hypothesis testing interpretation of privacy definitions similar to
differential privacy but measured with another statistical divergence. We introduced
the notions of $k$-cut and $k$-generatedness for divergences. These
notions quantifies the number of decisions that are needed in an experiment similar to
the ones used in hypothesis testing to fully characterize the
divergence. We used these notions to study the hypothesis testing
interpretation of relaxations of differential privacy based on the R\'enyi divergence.
These notions give a measure of the complexity that
tools for formal verification may have. We leave the study of this
connection for future work.

\newpage
\appendix

\section{Weak version of Birkhoff-von Neumann Theorem}

\begin{theorem}[Weak Birkhoff-von Neumann theorem (Theorem 11)]
Let $k,l \in \mathbb{N}$ and $k > l$.
For any $\gamma \colon k \to \prob(l)$,
there are $\gamma_1,\gamma_2,\ldots,\gamma_N \colon k \to l$ and $0 \leq a_1,a_2,\ldots,a_N \leq 1$ such that $\sum_{m = 1}^N a_m = 1$ and $\gamma(i) = \sum_{m = 1}^N a_m \mathbf{d}_{\gamma_m(i)}$ for any $1 \leq i \leq k$.
\end{theorem}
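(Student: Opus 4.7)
The plan is to give an explicit convex decomposition by a product formula over coordinates. Since $|X|=k$ and $|Y|=l$ are finite, there are exactly $N = l^k$ deterministic functions $X \to Y$; I would enumerate them as $\gamma_1,\ldots,\gamma_N$ and assign the weights
\[
a_m \defeq \prod_{x \in X} \gamma(x)(\gamma_m(x)) \in [0,1].
\]
Geometrically, the space of $k \times l$ row-stochastic matrices is a Cartesian product of $k$ copies of the $(l-1)$-simplex, whose extreme points are precisely the 0-1 matrices with a single $1$ per row, i.e.\ the deterministic functions $X \to Y$. So by Krein--Milman (or just Carath\'eodory applied row-by-row) a convex decomposition into such functions must exist; the product formula above is the natural guess, obtained by treating the rows as independent Dirac convex combinations.

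Both required identities reduce to the distributive law $\sum_{f : X \to Y} \prod_x h(x,f(x)) = \prod_x \sum_y h(x,y)$, instantiated with $h(x,y) = \gamma(x)(y)$. For $\sum_m a_m = 1$, I would apply this identity directly and use $\sum_{y \in Y} \gamma(x)(y) = 1$ at each $x$ to collapse the product to $1$. For the pointwise equality $\gamma(x) = \sum_m a_m \dirac_{\gamma_m(x)}$, I would fix $x_0 \in X$ and $y_0 \in Y$ and rewrite
\[
\sum_{m=1}^N a_m\, \dirac_{\gamma_m(x_0)}(y_0) \;=\; \sum_{\substack{m\ :\ \gamma_m(x_0) = y_0}} \prod_{x \in X} \gamma(x)(\gamma_m(x)),
\]
pin the $x_0$-factor to $\gamma(x_0)(y_0)$, and let the remaining coordinates $x \neq x_0$ range freely over $Y$; the distributive identity then turns the inner sum into $\prod_{x \neq x_0} 1 = 1$, so the whole expression collapses to $\gamma(x_0)(y_0)$, as required.

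There is no real obstacle beyond the bookkeeping in the factor-and-sum swap, which is fully justified by finiteness of $X$ and $Y$. I note that the hypothesis $k > l$ is never used in the argument---the claim holds for arbitrary finite $k$ and $l$---and appears in the statement only because the intended application in Section~3 is to decision rules that reduce a large input domain to a smaller label set. If a countable version is desired for the proof of Theorem~\ref{thm:countable-sup-implies-kgen}, the same product-weight definition works verbatim provided one interprets the sum as an unordered sum of nonnegative reals indexed by the countable set $Y^X$, which converges absolutely to $1$ by the same factoring argument.
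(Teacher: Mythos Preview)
Your finite-case argument is correct and is genuinely different from the paper's. The paper proves the theorem by a greedy peeling algorithm on the $k\times l$ row-stochastic matrix: at each step it picks, for every row, a column attaining the row maximum, sets $\alpha_{m+1}$ to be the smallest of these row maxima, subtracts $\alpha_{m+1}$ times the corresponding 0/1 matrix, and repeats; termination in at most $kl$ steps follows because each step zeroes out at least one additional entry. Your product formula $a_m=\prod_x\gamma(x)(\gamma_m(x))$ is shorter and conceptually cleaner, and the verification via the distributive identity is exactly right. The trade-off is that the paper's decomposition uses at most $kl$ terms, whereas yours uses $N=l^k$ (most of them zero in general); for the application in the paper only existence matters, so this costs nothing, but it is worth noting. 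Your observation that the hypothesis $k>l$ is irrelevant is also correct.

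Your countable remark, however, does not go through. In the countable version used in the proof of Theorem~\ref{thm:countable-sup-implies-kgen}, the domain $X$ is countably infinite and $Y=\{1,\dots,k\}$ is finite, so $Y^X$ has cardinality $k^{\aleph_0}=2^{\aleph_0}$ and is \emph{not} countable. Worse, the infinite product $\prod_{x\in X}\gamma(x)(\gamma_m(x))$ can vanish for every deterministic $\gamma_m$: take $\gamma(x)\equiv$ the uniform distribution on $Y$, so each weight is $\prod_{x\in X}(1/k)=0$. Thus the product formula neither gives a countable index set nor weights summing to $1$. To obtain the countable statement one really needs an iterative construction in the spirit of the paper's greedy algorithm (run for countably many steps, with the residual mass tending to $0$), or an appeal to Choquet-type theory; the product trick does not survive the passage to infinite $X$.
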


The cardinal $k$ can be relaxed to countable infinite cardinal $\omega$, and then the families $\{\gamma_j\}_j$ and $\{a_j\}_j$ may be countable infinite.

\begin{proof}
Consider the following matrix representation $f$ of $\gamma$:
\[
f =
\left(
\begin{array}{c@{\hspace{1em}}c@{\hspace{1em}}c}
f_{1,1} & \cdots & f_{l,1}\\
\vdots  &        & \vdots\\
f_{1,k} & \cdots & f_{l,k}
\end{array}
\right).
\]
where $f_{i,j}=\gamma(i)(j)$ and $\sum_{j=1}^N f_{i,j} = 1$ for any $1 \leq i \leq l$.

For any $h \colon k \to l$, the matrix representation $g$ of $(\{x \mapsto \mathbf{d}_x \}\circ h)$ is
\[
g =
\left(
\begin{array}{c@{\hspace{1em}}c@{\hspace{1em}}c}
g_{1,1} & \cdots & g_{l,1}\\
\vdots  &        & \vdots\\
g_{1,k} & \cdots & g_{l,k}
\end{array}
\right)
\]
satisfying that for any $1 \leq i \leq l$, there is exactly $1 \leq j \leq k$ such that $g_{i,j} = 1$ and $g_{i,s} = 0$ for $s \neq j$.
Conversely, any matrix $g$ satisfying this condition corresponds to some function $h \colon k \to l$. 
Consider the family $G$ of matrix representations of maps of the form $(\{x \mapsto \mathbf{d}_x \}\circ h)$.
We give an algorithm decomposing $f$ to a convex sum of $g$:
\begin{enumerate}
\item 
Let $r_0 = 1$ and $\tilde f_0 = f$. We have $\sum_{j} (\tilde f_0)_{i,j} = r_0$ for all $1 \leq i \leq l$.
\item
For given $0 \leq r_m \leq 1$ and $\tilde f_m$ satisfying $\sum_{j} (\tilde f_m)_{i,j} = r_m$ for all $1 \leq i \leq l$,
we define $g_{m+1}\in G$, $\alpha_{m+1} \in [0,1]$, $\tilde f_{m+1}$ and $r_{m+1} \in [0,1]$ as follows:
\begin{mathpar}
\alpha_{m+1} =
\min_s \max_t (\tilde f_m)_{s,t},

r_{m+1} = r_m - \alpha_{m+1},

(g_{m+1})_{i,j} =
\begin{cases}
1 & j = \argmax_s (\tilde f_m)_{i,s}\\
0 & \text{ (otherwise) }
\end{cases},

\tilde f_{m+1} =
\tilde f_{m} - \alpha_{m+1} \cdot g_{m+1}.
\end{mathpar}
\item If $r_{s+1}= 0$ then we terminate. Otherwise, we repeat the previous step.
\end{enumerate}
In each step, we obtain the following conditions:
\begin{itemize} 
\item We have $g_{m+1} \in G$ because $g_{m+1}$ can be written as $g_{m+1} = \{x \mapsto \mathbf{d}_x \} \circ (\lambda i.~ \argmax_s (\tilde f_m)_{i,s})$.

\item We have $0 < \alpha_{m+1}$ whenever $0 < r_{m}$ because
\[
\alpha_{m+1} = 0
\iff
\exists i.~\max_{j} (\tilde f_m)_{i,j} = 0
\implies \exists i.~r_m = \sum_{j} (\tilde f_m)_{i,j} = 0.
\]
\item We have $0 \leq (\tilde f_{m+1})_{i,j} \leq 1$ for any $(i,j)$ from the following equation:
\[
(\tilde f_{m+1})_{i,j}
=
\begin{cases}
(\tilde f_{m})_{i,j} - \min_t \max_s (\tilde f_m)_{t,s} & \text{ if } j = \argmax_s (\tilde f_m)_{i,s}\\
(\tilde f_{m})_{i,j} & \text{ otherwise }.
\end{cases}
\]
When $i = \argmin_s \max_t (\tilde f_m)_{s,t}$ and $ j = \argmax_s (\tilde f_m)_{i,s}$, we obtain $(\tilde f_{m+1})_{i,j} = 0$ while $0 < (\tilde f_{m+1})_{i,j}$.
This implies that the number of $0$ in $\tilde f_m$ increases in this operation.
\item We also have $\sum_{j} (\tilde f_{m+1})_{i,j} = r_{m+1}$ for all $1 \leq i \leq k$ because
\[
\sum_{j} (\tilde f_{m+1})_{i,j} =
\sum_{j} (\tilde f_{m})_{i,j} - \alpha_{m+1} \cdot \sum_{j} (\tilde g_{m+1})_{i,j}
=
r_{m} - \alpha_{m+1} \cdot 1 = r_{m+1}. 
\]
\end{itemize}
Therefore the construction of $g_{l}\in G$, $\alpha_{l} \in [0,1]$, $\tilde f_{l}$ and $r_{l} \in [0,1]$ terminates within $k \cdot l$ steps.
When the construction terminates at the step $N$ ($r_N = 0$ also holds), we have a convex decomposition of $f$ by $f = \sum_{m = 1}^{N} \alpha_m \cdot g_m$ where $\sum_{m = 1}^{N}\alpha_m = 1$. 
This implies 
By taking $\gamma_1,\gamma_2,\ldots,\gamma_N \colon k \to l$ such that $g_m$ is a matrix representation of $(\{x \mapsto \mathbf{d}_x \}\circ \gamma_m)$,
we obtain $\gamma(i) = \sum_{m = 1}^N a_m \mathbf{d}_{\gamma_m(i)}$ for any $1 \leq i \leq k$ with $0 \leq a_1,a_2,\ldots,a_N \leq 1$ and $\sum_{m = 1}^N a_m = 1$.
\end{proof}


\section{Omitted Proofs}

\subsection{Compositions of probabilistic processes}

For simplicity, we introduce the composition operator of probabilistic processes (inspired from \cite{Giry1982}).
For any $\gamma_1 \colon X \to \prob(Z)$ and $\gamma \colon Z \to \prob(Y)$, we define their composition $(\gamma \bullet \gamma_1) \colon X \to \prob(Z)$ by $(\gamma \bullet \gamma_1)(x) \defeq \gamma (\gamma_1(x))$.
It is easy to check that the composition $(\gamma \bullet \gamma_1)$ satisfies $(\gamma \bullet \gamma_1)(\mu) = \gamma(\gamma_1 (\mu))$ for every $\mu \in \prob(X)$.
\begin{itemize}
\item The composititon operator $\bullet$ is associative: $\gamma \bullet (\gamma_1 \bullet \gamma_2) = (\gamma \bullet \gamma_1) \bullet \gamma_2$ holds for all $\gamma_2  \colon W \to \prob(X)$, $\gamma_1  \colon X \to \prob(Z)$, and $\gamma \colon Z \to \prob(Y)$.
\item The function $\eta_X  \colon X \to \prob(X)$ defined by $\eta_X = \{ x \mapsto \mathbf{d}_X\}$ is the unit of  operator $\bullet$: we have $\gamma \bullet \eta_X = \gamma$ and $\eta_Y \bullet \gamma = \gamma$ for all $\gamma \colon X \to \prob(Y)$
\end{itemize}
Thanks to the unit law and associativity of $\bullet$ as an abuse of notations, we define 
\begin{itemize}
\item $(\gamma \bullet \gamma_1) \colon X \to \prob(Z)$ for $\gamma_1 \colon X \to Z$ and $\gamma \colon Z \to \prob(Y)$ by $\gamma \bullet (\eta_Z \circ \gamma_1)$.
\item $(\gamma \bullet \gamma_1) \colon X \to \prob(Z)$ for $\gamma_1  \colon X \to \prob(Z)$ and $\gamma \colon Z \to Y$ by $(\eta_Y \circ \gamma) \bullet \gamma_1$.
\item $(\gamma \bullet \gamma_1) \colon X \to \prob(Z)$ for $\gamma_1  \colon X \to Z$ and $\gamma \colon Z \to Y$ by $(\eta_Y \circ \gamma) \bullet (\eta_Z \circ \gamma_1)$, which is equal to $\eta_Y \circ (\gamma \circ \gamma_1)$.
\end{itemize}
Notice that, $\gamma(\mu) \in \prob(Y)$ defined under $\gamma \colon X \to Y$ and $\mu \in \prob(X)$ is exactly $(\eta_Y \circ \gamma)(\mu)$.

\subsection{Proof of the data-processing inequality of $k$-cuts}

\begin{lemma}
For any divergence $\Delta$, every $k$-cut $\overline{\Delta}^k$ satisfies data-processing inequality.
\end{lemma}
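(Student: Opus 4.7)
The plan is to unfold the definition of $\overline{\Delta}^k$ on both sides of the desired inequality and observe that the supremum on the ``processed'' side ranges over a subfamily of the supremum on the ``unprocessed'' side, via pre-composition with the data-processing map.

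Concretely, fix $\gamma' \colon X \to \prob(Z)$ and $\mu_1, \mu_2 \in \prob(X)$. Let $Y$ be the fixed set of cardinality $k$ used in the definition of the $k$-cut. By definition,
\[
\overline{\Delta}^k_Z(\gamma'(\mu_1) \| \gamma'(\mu_2)) = \sup_{\gamma \colon Z \to \prob(Y)} \Delta_Y(\gamma(\gamma'(\mu_1)) \| \gamma(\gamma'(\mu_2))).
\]
Using the composition operator $\bullet$ introduced earlier in the appendix, we rewrite $\gamma(\gamma'(\mu_i)) = (\gamma \bullet \gamma')(\mu_i)$ for $i = 1, 2$. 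For every $\gamma \colon Z \to \prob(Y)$, the composite $\gamma \bullet \gamma' \colon X \to \prob(Y)$ is a particular function from $X$ to $\prob(Y)$, so it is a candidate in the supremum defining $\overline{\Delta}^k_X(\mu_1 \| \mu_2)$.

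Therefore,
\[
\sup_{\gamma \colon Z \to \prob(Y)} \Delta_Y((\gamma \bullet \gamma')(\mu_1) \| (\gamma \bullet \gamma')(\mu_2)) \leq \sup_{\gamma'' \colon X \to \prob(Y)} \Delta_Y(\gamma''(\mu_1) \| \gamma''(\mu_2)) = \overline{\Delta}^k_X(\mu_1 \| \mu_2),
\]
which establishes the data-processing inequality $\overline{\Delta}^k_Z(\gamma'(\mu_1) \| \gamma'(\mu_2)) \leq \overline{\Delta}^k_X(\mu_1 \| \mu_2)$.

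There is essentially no obstacle here: the key insight is that taking a $k$-cut turns the resulting family into one whose defining supremum ranges over \emph{all} maps into a fixed $k$-element target, and such a family is automatically closed under pre-composition. The associativity of $\bullet$ and its compatibility with the action on distributions (both set up in the preceding subsection of the appendix) are the only ingredients beyond the definition. No assumption on the original $\Delta$ — in particular no data-processing inequality for $\Delta$ itself — is needed, which matches the claim stated in the paper that $k$-cuts always satisfy data-processing.
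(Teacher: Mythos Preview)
Your proof is correct and essentially identical to the paper's: both unfold the definition of the $k$-cut, rewrite $\gamma(\gamma'(\mu_i))$ as $(\gamma \bullet \gamma')(\mu_i)$ via the composition operator, and conclude by the inclusion $\{\gamma \bullet \gamma' \mid \gamma \colon Z \to \prob(Y)\} \subseteq \{\gamma'' \colon X \to \prob(Y)\}$. The only differences are notational (you write $\gamma'$ for the paper's $\gamma_1$ and $\gamma''$ for the paper's $\gamma'$).
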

\begin{proof}
We consider the $k$-cut of $\Delta$ with respect to a set $Y$ satisfying $|Y| = k$
\[
\overline{\Delta}^k_X(\mu_1||\mu_2) \defeq \sup_{\gamma \colon X \to \prob (Y)}\Delta_Y(\gamma (\mu_1)|| \gamma (\mu_2)).
\] 
For every pair $\mu_1,\mu_2 \in \prob(X)$, and any function $\gamma_1  \colon X \to \prob(Z)$, we obtain the data-processing inequality
\begin{align*}
\overline{\Delta}^k_Z(\gamma_1 (\mu_1)||\gamma_1 (\mu_2)) 
& = \sup_{\gamma \colon Z \to \prob (Y)}\Delta_Y(\gamma(\gamma_1 (\mu_1))|| \gamma(\gamma_1 (\mu_2)))\\
& = \sup_{\gamma \colon Z \to \prob (Y)}\Delta_Y((\gamma \bullet \gamma_1)(\mu_1)|| (\gamma \bullet \gamma_1)(\mu_2))\\
& \leq \sup_{\gamma' \colon X \to \prob (Y)}\Delta_Y(\gamma' (\mu_1)|| \gamma' (\mu_2))
 = \overline{\Delta}^k_X (\mu_1 || \mu_2).
\end{align*}
The inequality is obtained by the inclusion
\[
\Set{ (\gamma \bullet \gamma_1) \colon X \to \prob Y| \gamma \colon Z \to \prob (Y) }
\subseteq \Set{ \gamma' \colon X \to \prob (Y) }.
\]
\end{proof}

\subsection{Proof of Lemma 10}

\begin{lemma}[Lemma 10]
If a divergence $\Delta$ has the data-processing inequality,
we have the inequality $\overline{\Delta}^k \leq \Delta$ and
the equality $\overline{\Delta}^k_Y = \Delta_Y$ for any set $Y$ with $|Y| = k$.
\end{lemma}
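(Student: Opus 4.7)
The plan is to prove the two claims in sequence, using only the data-processing inequality (DPI) and the fact that the Dirac embedding $\eta_Y \colon Y \to \prob(Y)$, $y \mapsto \dirac_y$, acts as the identity on distributions: $\eta_Y(\mu) = \mu$ for every $\mu \in \prob(Y)$.

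For the inequality $\overline{\Delta}^k \leq \Delta$, I would fix arbitrary $X$ and $\mu_1, \mu_2 \in \prob(X)$, and observe that for every $\gamma \colon X \to \prob(Y)$ (where $Y$ is the reference set of cardinality $k$ in the definition of the $k$-cut), the DPI immediately gives
\[
\Delta_Y(\gamma(\mu_1) \,\|\, \gamma(\mu_2)) \leq \Delta_X(\mu_1 \,\|\, \mu_2).
\]
Taking the supremum over all such $\gamma$ yields $\overline{\Delta}^k_X(\mu_1 \,\|\, \mu_2) \leq \Delta_X(\mu_1 \,\|\, \mu_2)$.

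For the equality $\overline{\Delta}^k_Y = \Delta_Y$ on the reference set $Y$ of size $k$, the inequality $\leq$ is already in hand. For the reverse direction, I would simply plug the Dirac embedding $\eta_Y$ into the supremum: since $\eta_Y(\mu) = \mu$, the particular choice $\gamma = \eta_Y$ gives $\Delta_Y(\eta_Y(\mu_1) \,\|\, \eta_Y(\mu_2)) = \Delta_Y(\mu_1 \,\|\, \mu_2)$, so the supremum is at least $\Delta_Y(\mu_1 \,\|\, \mu_2)$.

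To upgrade this to the stronger claim that $\overline{\Delta}^k_{Y'} = \Delta_{Y'}$ for \emph{any} set $Y'$ with $|Y'| = k$ (not just the fixed reference $Y$), one composes with a bijection $b \colon Y' \to Y$. Setting $\gamma = \eta_Y \circ b$ in the supremum defining $\overline{\Delta}^k_{Y'}$ and using DPI on $b$ and $b^{-1}$ (both of which are valid probabilistic maps, so DPI applies in both directions) shows $\Delta_Y(\gamma(\mu_1) \,\|\, \gamma(\mu_2)) = \Delta_{Y'}(\mu_1 \,\|\, \mu_2)$, which combined with the already-proved upper bound gives the claimed equality. The main obstacle, such as it is, is simply book-keeping around the distinction between the reference set $Y$ baked into the definition of the $k$-cut and an arbitrary $k$-element set $Y'$ where we are evaluating it; once one notes that DPI makes $\Delta$ invariant under relabellings by bijections, the rest is immediate from plugging the right test function into the supremum.
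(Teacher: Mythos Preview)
Your proposal is correct and follows essentially the same approach as the paper. The paper proves the first inequality identically via DPI, and for the equality on an arbitrary $k$-element set it uses a bijection $f$ to the reference set together with DPI applied to $f$ and $f^{-1}$---the same bijection/relabelling trick you describe, just without first singling out the special case $Y' = Y$ via $\eta_Y$.
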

\begin{proof}
We consider the $k$-cut of $\Delta$ with respect to a set $W$ satisfying $|W| = k$
\[
\overline{\Delta}^k_X(\mu_1||\mu_2) \defeq \sup_{\gamma \colon X \to \prob (W)}\Delta_W(\gamma (\mu_1)|| \gamma (\mu_2)).
\] 
Thanks to the data-processing inequality of $\Delta$, we have $\overline{\Delta}^k \leq \Delta$: for every pair $\mu_1,\mu_2 \in \prob(X)$, we obtain
\[
\overline{\Delta}^k_X(\mu_1||\mu_2) 
= \sup_{\gamma \colon X \to \prob (W)}\Delta_W(\gamma (\mu_1)|| \gamma (\mu_2))
\leq \Delta_X(\mu_1,\mu_2).
\]

Now, we consider a set $Y$ with $|Y| = k$.
We already have $\overline{\Delta}^k_Y \leq \Delta_Y$. We want to prove $\Delta_Y \leq \overline{\Delta}^k_Y$ 
Since $|Y| = |W| = k$, there is a bijection $f \colon Y \to W$.
We then obtain for every pair $\nu_1,\nu_2 \in \prob(Y)$,
\[
\Delta_Y(\nu_1 || \nu_2)
= \Delta_Y(\inverse{f}(f(\nu_1)) || \inverse{f}(f(\nu_2)))
\leq \Delta_W( f(\nu_1) || f(\nu_2))
\leq \overline{\Delta}^k_Y(\nu_1||\nu_2)
\]
The first and second inequalities are obtained by the dataprocessing inequality and the definition of $k$-cut respectively.
\end{proof}

\subsection{Proof of Lemma 13}

\begin{lemma}[Lemma 13]
If $\Delta = \{\Delta_X\}_{X\colon\mathrm{set}}$ is $k$-generated,
for \emph{any} set $|Y|$ with $|Y| = k$, we have 
\[
\Delta_X(\mu_1||\mu_2) = \sup_{\gamma \colon X \to \prob (Y)}\Delta_Y(\gamma(\mu_1)||\gamma(\mu_2)).
\]
\end{lemma}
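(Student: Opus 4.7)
The plan is to reduce the statement to an application of the data-processing inequality together with a bijection argument between $k$-element carrier sets. By definition, $k$-generatedness provides some fixed set $Y_0$ with $|Y_0| = k$ such that
\[
\Delta_X(\mu_1 || \mu_2) = \sup_{\gamma_0 \colon X \to \prob(Y_0)} \Delta_{Y_0}(\gamma_0(\mu_1) || \gamma_0(\mu_2)).
\]
The first observation I would make is that, as already remarked right after the definition of $k$-cut, every $k$-cut $\overline{\Delta}^k$ satisfies the data-processing inequality. Since $\Delta = \overline{\Delta}^k$ under our hypothesis, $\Delta$ itself inherits data-processing, even though this was not assumed a priori. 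This is the conceptual key of the argument.

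Next, I would transfer from $Y_0$ to an arbitrary $Y$ with $|Y|=k$. Fix a bijection $f \colon Y_0 \to Y$. Given $\gamma_0 \colon X \to \prob(Y_0)$, set $\gamma \defeq f \bullet \gamma_0 \colon X \to \prob(Y)$. Applying data-processing through $f$ yields $\Delta_Y(\gamma(\mu_1) || \gamma(\mu_2)) \leq \Delta_{Y_0}(\gamma_0(\mu_1) || \gamma_0(\mu_2))$, while applying it through $\inverse{f}$ gives the reverse inequality (this is precisely the two-sided trick used in the proof of Lemma 10). Hence $\Delta_{Y_0}(\gamma_0(\mu_1) || \gamma_0(\mu_2)) = \Delta_Y(\gamma(\mu_1) || \gamma(\mu_2))$ for every $\gamma_0$. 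Because $\gamma_0 \mapsto f \bullet \gamma_0$ is a bijection between $\{\gamma_0 \colon X \to \prob(Y_0)\}$ and $\{\gamma \colon X \to \prob(Y)\}$ (with inverse $\gamma \mapsto \inverse{f} \bullet \gamma$), the two suprema coincide.

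Chaining the two steps gives
\[
\Delta_X(\mu_1 || \mu_2) = \sup_{\gamma_0 \colon X \to \prob(Y_0)} \Delta_{Y_0}(\gamma_0(\mu_1) || \gamma_0(\mu_2)) = \sup_{\gamma \colon X \to \prob(Y)} \Delta_Y(\gamma(\mu_1) || \gamma(\mu_2)),
\]
which is exactly the desired identity.

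The main (and rather modest) obstacle is recognizing that the hypothesis of $k$-generatedness implicitly endows $\Delta$ with data-processing; without this, there is no mechanism to compare $\Delta_{Y_0}$ and $\Delta_Y$ across different carrier sets of the same cardinality. Once this observation is made, the bijection step is essentially automatic, and the argument parallels the structure already deployed in the proof of Lemma 10.
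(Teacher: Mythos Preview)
Your proof is correct and follows essentially the same approach as the paper: both first observe that $k$-generatedness forces $\Delta$ to satisfy the data-processing inequality (since $\Delta=\overline{\Delta}^k$ and every $k$-cut has data-processing), and then transfer between two $k$-element carriers via a bijection $f$ together with data-processing applied to $f$ and $\inverse{f}$. The only cosmetic difference is that the paper writes the argument as a cyclic chain of inequalities, whereas you phrase it as an equality of suprema induced by the bijection $\gamma_0\mapsto f\bullet\gamma_0$.
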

\begin{proof}
Suppose that $\Delta$ is equal to the $k$-cut of $\Delta$ with respect to a set $W$ satisfying $|W| = k$.
\[
\Delta_X(\mu_1||\mu_2)
= \overline{\Delta}^k_X(\mu_1||\mu_2) = \sup_{\gamma \colon X \to \prob (W)}\Delta_W(\gamma (\mu_1)|| \gamma(\mu_2)).
\]
Since $\overline{\Delta}^k$ always satisfies data-processing inequality, the divergence $\Delta$ itself do so.
We fix an arbitrary set $|Y|$ with $|Y| = k$.
Since $|Y| = |W| = k$, there is a bijection $f \colon Y \to W$.
For every pair $\mu_1,\mu_2 \in \prob(X)$, we obtain
\begin{align*}
\Delta_X(\mu_1||\mu_2)
& = \sup_{\gamma \colon X \to \prob (W)}\Delta_W(\gamma (\mu_1)|| \gamma(\mu_2))\\
& = \sup_{\gamma \colon X \to \prob (W)}\Delta_W(f(\inverse{f}(\gamma (\mu_1)))||f(\inverse{f}(\gamma (\mu_1))))\\
& \leq \sup_{\gamma \colon X \to \prob (W)}\Delta_Y(\inverse{f}(\gamma (\mu_1))||\inverse{f}(\gamma (\mu_1)))\\
& = \sup_{\gamma \colon X \to \prob (W)}\Delta_Y((\inverse{f} \bullet \gamma) (\mu_1)||(\inverse{f} \bullet \gamma)(\mu_1))\\
& \leq \sup_{\gamma \colon X \to \prob (Y)}\Delta_Y(\gamma(\mu_1)||\gamma(\mu_2)) \leq \Delta_X(\mu_1||\mu_2).
\end{align*}
Here, the first and last inequalities are obtained from the data-processing inequality of $\Delta$.
The second inequality is proved from the inclusion
\[
\Set{ (\inverse{f} \bullet \gamma) \colon X \to \prob(Y) | \gamma \colon X \to \prob (W)}
\subseteq \Set{\gamma \colon X \to \prob (Y)}.
\]
\end{proof}

\subsection{Proof of Basic Properties of $k$-generatedness (Lemma 14)}

\begin{lemma}[Lemma 14 (1)]
If $\Delta$ is $1$-generated, then $\Delta$ is constant, i.e. there exists $c \in [0,\infty]$ such that for every $X$ and every $\mu_1,\mu_2\in\prob(X)$, we have $\Delta_X (\mu_1 || \mu_2) = c$.
\end{lemma}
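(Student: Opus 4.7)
The plan is to unfold the definition of $1$-generatedness directly. Since $\Delta$ is $1$-generated, we have $\Delta = \overline{\Delta}^1$, and by Lemma~\ref{lem:well-definedness_of_k-generation} we may compute the $1$-cut with respect to any singleton set. I therefore fix an arbitrary singleton $Y = \{*\}$ and use it to evaluate $\overline{\Delta}^1_X(\mu_1 \| \mu_2)$ for arbitrary $X$ and $\mu_1, \mu_2 \in \prob(X)$.

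The key observation is that $\prob(\{*\})$ contains only a single element, namely the Dirac distribution $\dirac_*$. Consequently, the set of maps $\gamma \colon X \to \prob(\{*\})$ is a singleton: the only such map sends every $x \in X$ to $\dirac_*$. For this unique $\gamma$, the pushforwards satisfy $\gamma(\mu_1) = \gamma(\mu_2) = \dirac_*$, regardless of the original distributions $\mu_1$ and $\mu_2$. Therefore
\[
\overline{\Delta}^1_X(\mu_1 \| \mu_2) \;=\; \Delta_{\{*\}}(\dirac_* \| \dirac_*).
\]
Setting $c \defeq \Delta_{\{*\}}(\dirac_* \| \dirac_*) \in [0, \infty]$, this value is a single element of $[0,\infty]$ that depends neither on $X$ nor on $\mu_1, \mu_2$.

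Since $\Delta$ is $1$-generated, $\Delta_X(\mu_1 \| \mu_2) = \overline{\Delta}^1_X(\mu_1 \| \mu_2) = c$ for every set $X$ and every pair $\mu_1, \mu_2 \in \prob(X)$, proving that $\Delta$ is constant. There is essentially no obstacle here; the entire statement follows from the observation that any target space of cardinality $1$ collapses all probability distributions to a single point, leaving the supremum in the definition of $\overline{\Delta}^1$ with only one term to evaluate.
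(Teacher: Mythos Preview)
Your proof is correct and follows essentially the same argument as the paper's: both observe that $\prob(\{*\})$ is a singleton, so every $\gamma$ collapses $\mu_1$ and $\mu_2$ to $\dirac_*$, making the $1$-cut equal to the constant $\Delta_{\{*\}}(\dirac_* \| \dirac_*)$. The only cosmetic difference is that you invoke Lemma~\ref{lem:well-definedness_of_k-generation} to justify the choice of singleton, whereas the paper works directly from the definition of $1$-generatedness (which already guarantees \emph{some} singleton works).
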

\begin{proof}
When $\Delta$ is $1$-generated, there is a singleton set $\{a\}$ such that, for every pair $\mu_1,\mu_2 \in \prob(X)$,
\[
\Delta_X (\mu_1 || \mu_2) = \sup_{\gamma \colon X \to \prob(\{a\})}\Delta_{\{a\}}(\gamma(\mu_1) || \gamma(\mu_2)). 
\]
Now, the set $\prob(\{a\})$ is a singleton set $\{ \mathbf{d}_{a} \}$, and therefore both $\gamma(\mu_1)$ and $\gamma(\mu_1)$ are equal to $ \mathbf{d}_{a}$ for every $\gamma \colon X \to \prob(\{a\})$ and every pair $\mu_1,\mu_2 \in \prob(X)$.
Hence, $\Delta_X (\mu_1 || \mu_2) = c$ where $c = \Delta_{\{a\}}(\mathbf{d}_{a} || \mathbf{d}_{a})$.
\end{proof}

\begin{lemma}[Lemma 14 (2)]
If $\Delta$ is $k$-generated, then it is also $k+1$-generated.
\end{lemma}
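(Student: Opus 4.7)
The plan is to show both $\overline{\Delta}^{k+1}\leq\Delta$ and $\Delta\leq\overline{\Delta}^{k+1}$, pointwise. First I would observe that $k$-generatedness gives $\Delta=\overline{\Delta}^{k}$, and by the already established fact that every $k$-cut satisfies the data-processing inequality, this immediately yields that $\Delta$ itself satisfies the data-processing inequality. From Lemma~10 applied to $\Delta$, we then obtain the first inequality $\overline{\Delta}^{k+1}\leq\Delta$ for free.

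For the reverse inequality, fix sets $Y$ and $Z$ with $|Y|=k$ and $|Z|=k+1$, together with an injection $\iota\colon Y\to Z$ and a left inverse $\pi\colon Z\to Y$ satisfying $\pi\circ\iota=\mathrm{id}_Y$. Using Lemma~13 (the reformulation of $k$-generatedness for an arbitrary set of cardinality $k$), for every pair $\mu_1,\mu_2\in\prob(X)$ I would write
\[
\Delta_X(\mu_1||\mu_2) \;=\; \sup_{\gamma\colon X\to\prob(Y)}\Delta_Y(\gamma(\mu_1)||\gamma(\mu_2)).
\]
For any such $\gamma$, the composition $\iota\bullet\gamma\colon X\to\prob(Z)$ is a valid candidate in the $(k+1)$-cut of $\Delta$ at $X$, and applying the data-processing inequality along $\pi$ gives
\[
\Delta_Y(\gamma(\mu_1)||\gamma(\mu_2)) \;=\; \Delta_Y(\pi((\iota\bullet\gamma)(\mu_1))||\pi((\iota\bullet\gamma)(\mu_2))) \;\leq\; \Delta_Z((\iota\bullet\gamma)(\mu_1)||(\iota\bullet\gamma)(\mu_2)).
\]
Taking the supremum over $\gamma\colon X\to\prob(Y)$ on the left and bounding the right-hand side by the supremum over all $\gamma'\colon X\to\prob(Z)$, we obtain $\Delta_X(\mu_1||\mu_2)\leq\overline{\Delta}^{k+1}_X(\mu_1||\mu_2)$, which combined with the first inequality proves the claim.

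The proof is largely routine once the data-processing inequality is in hand; the only mildly non-trivial point to be careful about is that $k$-generatedness is stated with respect to some fixed set of cardinality $k$, so one must invoke Lemma~13 to freely choose a $Y$ that embeds into the target set of cardinality $k+1$. Everything else is just composing decision rules and appealing to the data-processing inequality of $\Delta$ along the retraction $\pi$.
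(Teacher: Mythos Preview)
Your proposal is correct and takes essentially the same approach as the paper: both arguments hinge on a section/retraction pair between sets of cardinality $k$ and $k{+}1$ to push decision rules back and forth. The paper packages this as a single computation showing $\overline{\overline{\Delta}^k}^{\,k+1}=\overline{\Delta}^k$ via the set equality $\{\gamma_1\bullet\gamma:\gamma\colon X\to\prob(V),\ \gamma_1\colon V\to\prob(W)\}=\{\gamma'\colon X\to\prob(W)\}$, whereas you split the argument into two inequalities and invoke Lemma~10 and Lemma~13 explicitly; the underlying mechanism is identical.
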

\begin{proof}
Suppose that $\Delta$ is equal to the $k$-cut of $\Delta$ with respect to a set $W$ satisfying $|W| = k$.
\[
\Delta_X(\mu_1||\mu_2)
= \overline{\Delta}^k_X(\mu_1||\mu_2) = \sup_{\gamma \colon X \to \prob (W)}\Delta_W(\gamma (\mu_1)|| \gamma(\mu_2)).
\]
Let $V$ be an arbitrary set with $|V| = k+1$.
We define the $k+1$-cut of $\overline{\Delta}^k$ with respect to the set $V$.
\[
\overline{\overline{\Delta}^k}^{k+1}_X(\mu_1||\mu_2) \defeq
\sup_{\gamma \colon X \to \prob (V)} \overline{\Delta}^k_V(\gamma (\mu_1)|| \gamma(\mu_2)).
\]
We then have 
\begin{align*}
\overline{\overline{\Delta}^k}^{k+1}_X(\mu_1||\mu_2)
&=
\sup_{\gamma \colon X \to \prob (V)} \overline{\Delta}^k_V(\gamma (\mu_1)|| \gamma(\mu_2))\\
&=
\sup_{\gamma \colon X \to \prob (V)}\sup_{\gamma_1 \colon V \to \prob (W)} \Delta_W(\gamma_1 (\gamma(\mu_1))|| \gamma_1 (\gamma(\mu_2)))\\
&=
\sup_{\gamma \colon X \to \prob (V)}\sup_{\gamma_1 \colon V \to \prob (W)} \Delta_W((\gamma_1 \bullet \gamma)(\mu_1)|| (\gamma_1 \bullet \gamma)(\mu_2))\\
&\stareq
\sup_{\gamma' \colon X \to \prob (W)} \Delta_W(\gamma'(\mu_1)|| \gamma'(\mu_2)) = \overline{\Delta}^k_X(\mu_1||\mu_2)\\
\end{align*}
The equality is obtained by the equality
\[
\Set{ (\gamma_1 \bullet \gamma) \colon X \to \prob (W) |\begin{array}{l@{}}\gamma \colon X \to \prob (V),\\\gamma_1 \colon V \to \prob (W)\end{array}}
=
\Set{\gamma' \colon X \to \prob (W)}
\]
The inclusion $\subseteq$ is obvious.
We show the reverse inclusion $\supseteq$.
Since $|V| \geq |W|$, there is a pair of function $f \colon W \to V$ and $g \colon V \to W$ such that $g \circ f = \mathrm{id}_W$.
Then, every $\gamma' \colon X \to \prob (W)$ can be decomposed into
$\gamma' = \gamma_1 \bullet \gamma$ where $\gamma =  (f \bullet \gamma)$ and
$\gamma_1 = g$ (strictly, $\gamma =  ((\eta_V \circ f) \bullet \gamma)$ and $\gamma_1 = \eta_W \circ g$ ).
\end{proof}

\begin{lemma}[Lemma 14 (3)]
If $\Delta$ has the data-processing inequality, then it is at least $\infty$-generated.
\end{lemma}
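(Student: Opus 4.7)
The plan is to establish $\overline{\Delta}^{\infty} = \Delta$ in both directions. The inequality $\overline{\Delta}^{\infty} \leq \Delta$ is immediate from Lemma 10 applied with $k = \infty$ (using any countably infinite set $Y$ for the cut), since $\Delta$ has the data-processing inequality by assumption.

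For the reverse inequality $\Delta_X(\mu_1 || \mu_2) \leq \overline{\Delta}^{\infty}_X(\mu_1 || \mu_2)$, the key idea is to exploit the fact that distributions in $\prob(X)$ are discrete. For any fixed pair $\mu_1, \mu_2 \in \prob(X)$, the combined support $S \defeq \supp(\mu_1) \cup \supp(\mu_2)$ is at most countable. I would then fix a countably infinite set $Y$, choose an injection $\iota \colon S \hookrightarrow Y$, and extend $\iota$ to a function $\gamma \colon X \to Y$ by sending every point outside $S$ to one arbitrary fixed element of $Y$. Dually, I would introduce a ``left inverse'' $\gamma' \colon Y \to X$ that inverts $\iota$ on $\iota(S)$ and sends every other point of $Y$ to one arbitrary fixed element of $X$.

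A short bookkeeping calculation using the push-forward formula $(\gamma(\mu))(y) = \sum_{x \in \gamma^{-1}(y)} \mu(x)$, together with the fact that mass outside $S$ is zero under either $\mu_i$, shows that $\gamma'(\gamma(\mu_i)) = \mu_i$ for $i = 1, 2$. Applying the data-processing inequality of $\Delta$ to $\gamma'$ then yields
\[
\Delta_X(\mu_1 || \mu_2) = \Delta_X(\gamma'(\gamma(\mu_1)) || \gamma'(\gamma(\mu_2))) \leq \Delta_Y(\gamma(\mu_1) || \gamma(\mu_2)) \leq \overline{\Delta}^{\infty}_X(\mu_1 || \mu_2),
\]
and the reverse inequality follows by taking the supremum over $\gamma$.

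The argument presents no deep obstacle: the only delicate point is administrative, namely choosing the ``default'' values used to extend $\iota$ and $\iota^{-1}$ carefully enough that no extra mass is created on the wrong points. Once this is verified, the proof is a direct analogue of those already given for Lemma 10 and Lemma 13 in the appendix, with the countable cardinality of $S$ playing precisely the role that finite cardinality plays in those earlier arguments.
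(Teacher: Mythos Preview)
Your proposal is correct and follows essentially the same route as the paper: both arguments exploit that $\supp(\mu_1)\cup\supp(\mu_2)$ is at most countable, inject it into a countably infinite set (the paper uses $\mathbb{N}$), build a one-sided inverse, and then apply the data-processing inequality to that inverse to recover $\Delta_X(\mu_1\|\mu_2)$ from the pushed-forward pair. The only cosmetic difference is that the paper phrases the factorization as $g\circ f$ with $f\colon X\to\mathbb{N}$ and $g\colon\mathbb{N}\to X$ satisfying $(g\circ f)(x)=x$ on the support, without dwelling on the choice of default values.
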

\begin{proof}
We fix a pair $\mu_1,\mu_2 \in \prob(X)$.
The set $Y = \supp(\mu_1)\cup\supp(\mu_1)$ is at most countable.
Hence there are two functions $f \colon X \to \mathbb{N}$ and $g \colon \mathbb{N} \to X$ such that $(g \circ f)(x) = x$ for every $x \in Y$.
We then have $\mu_1 = (g\circ f)(\mu_1)$ and $\mu_2 = (g\circ f)(\mu_2)$.
Thus,
\begin{align*}
\Delta_{X}(\mu_1||\mu_2)
&= \Delta_{X}((g\circ f)(\mu_1)||(g\circ f)(\mu_2))\\
&\leq \Delta_{\mathbb{N}}( f(\mu_1)|| f(\mu_2))\\
&\leq \sup_{\gamma \colon X \to \prob(\mathbb{N})} \Delta_{\mathbb{N}}( \gamma(\mu_1)|| \gamma(\mu_2))\\
&\leq \Delta_{X}(\mu_1||\mu_2).
\end{align*}
The last part is an $\infty$-cut.
The first and last inequality is obtained by the data-processing inequality.
The second one is obvious ($f \colon X \to \mathbb{N}$ is regarded as $\{x \mapsto \mathbf{d}_{f(x)}\} \colon X \to \prob(\mathbb{N})$). 
\end{proof}

\begin{lemma}[Lemma 14 (4)]
Every $k$-cut of a divergence $\Delta$ is always $k$-generated.
\end{lemma}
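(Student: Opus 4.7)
The plan is to show directly that $\overline{\overline{\Delta}^k}^k = \overline{\Delta}^k$ by unfolding both levels of $k$-cut and collapsing them using a bijection between two $k$-element sets. The argument is essentially the same as for Lemma 14 (2), which established that $k$-generatedness is preserved under increasing $k$; here we use the same ``composition of decision rules'' trick, but with both sets having cardinality exactly $k$.

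First, I would fix two sets $V$ and $W$ with $|V|=|W|=k$, using $W$ as the witness for the inner $k$-cut and $V$ as the witness for the outer one. Unfolding the definitions gives
\begin{align*}
\overline{\overline{\Delta}^k}^k_X(\mu_1||\mu_2)
&= \sup_{\gamma \colon X \to \prob(V)} \overline{\Delta}^k_V(\gamma(\mu_1)||\gamma(\mu_2))\\
&= \sup_{\gamma \colon X \to \prob(V)} \sup_{\gamma_1 \colon V \to \prob(W)} \Delta_W((\gamma_1 \bullet \gamma)(\mu_1)||(\gamma_1 \bullet \gamma)(\mu_2)),
\end{align*}
where the associativity of $\bullet$ justifies the rewriting of the two successive probabilistic processes as a single one.

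Second, I would show the key set equality
\[
\Set{\gamma_1 \bullet \gamma \colon X \to \prob(W) | \gamma \colon X \to \prob(V),\ \gamma_1 \colon V \to \prob(W)} = \Set{\gamma' \colon X \to \prob(W)}.
\]
The inclusion $\subseteq$ is immediate. For $\supseteq$, given any $\gamma' \colon X \to \prob(W)$, pick a bijection $f \colon W \to V$ (which exists since $|V|=|W|=k$) and set $\gamma := f \bullet \gamma'$ and $\gamma_1 := f^{-1}$ (viewed as deterministic, i.e.\ $\eta_W \circ f^{-1}$). Then $\gamma_1 \bullet \gamma = f^{-1} \bullet f \bullet \gamma' = \gamma'$ by the unit law for $\bullet$.

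Substituting back gives
\[
\overline{\overline{\Delta}^k}^k_X(\mu_1||\mu_2) = \sup_{\gamma' \colon X \to \prob(W)} \Delta_W(\gamma'(\mu_1)||\gamma'(\mu_2)) = \overline{\Delta}^k_X(\mu_1||\mu_2),
\]
which is the desired $k$-generatedness. The one point to check is that this equality does not depend on the choice of $V$; since the preceding lemma establishes that every $k$-cut satisfies the data-processing inequality, Lemma 10 applied to $\overline{\Delta}^k$ ensures the outer $k$-cut is well-defined independently of $V$. The only mildly subtle step is the set equality using the bijection $f$, but this is the same construction already used in Lemma 14 (2), so no new obstacle arises.
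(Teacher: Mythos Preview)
Your proposal is correct and follows exactly the approach the paper intends: the paper's proof simply reads ``We can prove $\overline{\overline{\Delta}^k}^{k} = \overline{\Delta}^k$ in almost the same way as Lemma 14 (2),'' and your write-up is precisely that adaptation, replacing the section/retraction pair $f,g$ with $|V|\geq|W|$ by a bijection $f$ between two sets of equal cardinality $k$.
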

\begin{proof}
We can prove $\overline{\overline{\Delta}^k}^{k} = \overline{\Delta}^k$ in a almost the same way as Lemma 14 (2).
\end{proof}

\paragraph{Continuity of divergence (Lemma 14(3) in general setting)}
We can extend the results on divergences in the discrete setting to general measurable setting
using the continuity of divergences.
We say that a divergence $\Delta$ is continuous if for any pair $\mu_1, \mu_2 \in \prob(X)$,
\[
\Delta_X(\mu_1 || \mu_2)
=
\sup_{n \in \mathbb{N}} \sup_{\gamma \colon X \to \{0,1,2,\ldots,n\}}
\Delta_{\{0,1,2,\ldots,n\}} (\gamma(\mu_1) || \gamma(\mu_2)).
\]
If $\Delta$ is continuous and satisfies data-processing inequality we have $\infty$-generatedness (moreover we show  the ``countable''-generatedness) as follows:
\begin{align*}
\lefteqn{\Delta_X(\mu_1 || \mu_2)}\\
&= \sup_{n \in \mathbb{N}} \sup_{\gamma \colon X \to \{0,1,2,\ldots,n-1\}}
\Delta_{\{0,1,2,\ldots,n-1\}} (\gamma(\mu_1) || \gamma(\mu_2))\\
&= \sup_{n \in \mathbb{N}} \sup_{\gamma \colon X \to \{0,1,2,\ldots,n-1\}}
\Delta_{\{0,1,2,\ldots,n-1\}} ((g_n \circ f_n)(\gamma(\mu_1)) || (g_n \circ f_n)(\gamma(\mu_2)))\\
&= \sup_{n \in \mathbb{N}} \sup_{\gamma \colon X \to \{0,1,2,\ldots,n\}}
\Delta_{\{0,1,2,\ldots,n-1\}} (g_n((f_n \bullet \gamma)(\mu_1)) || g_n((f_n \bullet \gamma)(\mu_1)) )\\
&\leq \sup_{n \in \mathbb{N}} \sup_{\gamma \colon X \to \{0,1,2,\ldots,n-1\}}
\Delta_{\mathbb{N}} ((f_n \bullet \gamma)(\mu_1) || (f_n \bullet \gamma)(\mu_1) )\\
&\leq \sup_{\gamma \colon X \to \mathbb{N}} \Delta_{\mathbb{N}} (\gamma(\mu_1) || \gamma(\mu_2) ) \leq \overline{\Delta}^\infty_X (\mu_1,\mu_2) \\
&\leq \Delta_X(\mu_1,\mu_2).
\end{align*}
Here $f_n \colon \{0,1,2,\ldots,n-1\} \to \mathbb{N}$ is the inclusion mapping,
and $g_n \colon \mathbb{N} \to \{0,1,2,\ldots,n-1\}$ is defined by $g_n(k) = k$ if ($k < n$) and $g_n(k) = n-1$ otherwise. We have $(g_n \circ f_n) = \mathrm{id}_{\{0,1,2,\ldots,n-1\}}$.

The first and last inequalities are obtained from data-processing inequality.
The second inequality is obvious.

\subsection{Proof of Lemma 15}

\begin{lemma}[Lemma 15]
Consider a divergence $\Delta$ and a $k$-generated divergence $\Delta'$.
For any $k$-cut $\overline{\Delta}^k$ of $\Delta$,
\[
\Delta' \leq \Delta \implies \Delta' \leq \overline{\Delta}^k.
\]
Also, if $\Delta$ has the data-processing inequality, the $k$-cut is the greatest $k$-generated divergence below $\Delta$:
\[
\Delta' \leq \Delta \iff \Delta' \leq \overline{\Delta}^k \leq \Delta.
\]
\end{lemma}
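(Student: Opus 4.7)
The plan is to prove the two statements by directly exploiting the characterization of $k$-generated divergences given in Lemma 13 together with the basic inequality of Lemma 10.

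For the first implication, I would start from the fact that $\Delta'$ is $k$-generated. Fix any set $Y$ with $|Y| = k$; by Lemma 13 we have, for every $\mu_1, \mu_2 \in \prob(X)$,
\[
\Delta'_X(\mu_1 \| \mu_2) \;=\; \sup_{\gamma \colon X \to \prob(Y)} \Delta'_Y(\gamma(\mu_1) \| \gamma(\mu_2)).
\]
Now apply the pointwise hypothesis $\Delta' \leq \Delta$ inside the supremum, which gives $\Delta'_Y(\gamma(\mu_1) \| \gamma(\mu_2)) \leq \Delta_Y(\gamma(\mu_1) \| \gamma(\mu_2))$ for every $\gamma$. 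Taking the supremum over $\gamma \colon X \to \prob(Y)$ on the right produces exactly the definition of $\overline{\Delta}^k_X(\mu_1 \| \mu_2)$ (with respect to $Y$), so we conclude $\Delta'_X(\mu_1 \| \mu_2) \leq \overline{\Delta}^k_X(\mu_1 \| \mu_2)$.

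For the second statement, assume further that $\Delta$ satisfies the data-processing inequality. Lemma 10 then gives $\overline{\Delta}^k \leq \Delta$. Combining this with the first implication, the forward direction $\Delta' \leq \Delta \Rightarrow \Delta' \leq \overline{\Delta}^k \leq \Delta$ is immediate. The reverse direction $\Delta' \leq \overline{\Delta}^k \leq \Delta \Rightarrow \Delta' \leq \Delta$ follows by transitivity of pointwise inequality. This shows that $\overline{\Delta}^k$ is the greatest $k$-generated divergence below $\Delta$ in the pointwise order.

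There is essentially no obstacle here; the only subtle point is that the expression for $\Delta'$ from Lemma 13 is valid for \emph{any} set $Y$ of cardinality $k$, so we are free to choose the same $Y$ used to define the representative $k$-cut $\overline{\Delta}^k$, after which the two suprema compare term by term. Note that Lemma 13 implicitly requires $\Delta'$ to satisfy the data-processing inequality; this holds automatically since $\Delta'$ equals a $k$-cut and every $k$-cut satisfies data-processing (as observed just after Lemma 10).
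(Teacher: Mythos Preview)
Your proof is correct and follows essentially the same approach as the paper's. The paper phrases the first implication slightly more compactly as the chain $\Delta' \leq \Delta \Rightarrow \Delta'_Y \leq \Delta_Y \Rightarrow \overline{\Delta'}^k \leq \overline{\Delta}^k \Leftrightarrow \Delta' \leq \overline{\Delta}^k$, but this is exactly your argument with the suprema left implicit; the second part is identical.
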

\begin{proof}
Since $\Delta'$ is $k$-generated, for any choice of $Y$ with $|Y| = k$, we have 
\[
\Delta' \leq \Delta \implies \Delta'_Y \leq \Delta_Y \implies \overline{\Delta'}^k \leq \overline{\Delta}^k \iff \Delta' \leq \overline{\Delta}^k.
\]
The second statement is proved as follows:
From the first statement of this lemma and Lemma 3 (Lemma 10 in the paper), 
We have
\[
\Delta' \leq \Delta \implies \Delta' \leq \overline{\Delta}^k \leq \Delta
\]
The converse direction is obvious.
\end{proof}

\paragraph{An extended version.}

We can extend this theorem to more suitable for conversion laws of differential privacy.
\begin{lemma}[Lemma 15, extended]
Consider a divergence $\Delta$ satisfying data-processing inequality
and a $k$-generated divergence $\Delta'$.
\begin{align*}
&\forall{X}.\forall{\mu_1,\mu_2 \in \prob(X)}.(\Delta_X(\mu_1||\mu_2) \leq \delta \implies \Delta'_X(\mu_1||\mu_2) \leq \rho)\\
&\iff
\forall{X}.\forall{\mu_1,\mu_2 \in \prob(X)}.(\overline{\Delta}^k_X(\mu_1||\mu_2) \leq \delta \implies \Delta'_X(\mu_1||\mu_2) \leq \rho)
\end{align*}
\end{lemma}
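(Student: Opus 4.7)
The plan is to prove the extended lemma by treating the two directions separately, using Lemma~10 for the easy direction and Lemma~13 (the reformulation of $k$-generatedness over any set $Y$ of cardinality $k$) for the interesting direction.

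For the $(\Leftarrow)$ direction, I would simply invoke Lemma~10: since $\Delta$ satisfies the data-processing inequality, $\overline{\Delta}^k \leq \Delta$ pointwise. So whenever $\Delta_X(\mu_1\|\mu_2) \leq \delta$, we also have $\overline{\Delta}^k_X(\mu_1\|\mu_2) \leq \delta$, which by hypothesis gives $\Delta'_X(\mu_1\|\mu_2) \leq \rho$. This step is essentially immediate.

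For the $(\Rightarrow)$ direction, fix $X$, fix $\mu_1,\mu_2 \in \prob(X)$, and assume $\overline{\Delta}^k_X(\mu_1\|\mu_2) \leq \delta$. Fix a set $Y$ with $|Y|=k$. The key observation is that by the definition of the $k$-cut,
\[
\Delta_Y(\gamma(\mu_1) \,\|\, \gamma(\mu_2)) \;\leq\; \overline{\Delta}^k_X(\mu_1\|\mu_2) \;\leq\; \delta
\]
for every $\gamma \colon X \to \prob(Y)$. Then I would apply the left-hand hypothesis, not on $X$ but on the set $Y$ itself, to the pair of distributions $(\gamma(\mu_1),\gamma(\mu_2)) \in \prob(Y)\times\prob(Y)$: this yields $\Delta'_Y(\gamma(\mu_1)\|\gamma(\mu_2)) \leq \rho$ for every such $\gamma$. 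Taking the supremum over $\gamma$ and using Lemma~13 (which says the $k$-generated $\Delta'$ admits the sup-representation over \emph{any} set of size $k$, in particular this $Y$),
\[
\Delta'_X(\mu_1\|\mu_2) \;=\; \sup_{\gamma \colon X \to \prob(Y)} \Delta'_Y(\gamma(\mu_1)\|\gamma(\mu_2)) \;\leq\; \rho,
\]
which is what we wanted.

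The only subtle point, and what I expect to emphasize, is that in the $(\Rightarrow)$ direction the left-hand universally quantified hypothesis must be instantiated at the \emph{target} space $Y$ (not at $X$), because that is where the inequality $\Delta_Y(\gamma(\mu_1)\|\gamma(\mu_2)) \leq \delta$ lives. This is a feature of the $k$-cut construction and requires the hypothesis to quantify over all sets, which matches the definition of a divergence as a family indexed by sets. No further computation is needed; everything reduces to Lemma~10 and Lemma~13.
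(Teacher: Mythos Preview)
Your proposal is correct and follows essentially the same approach as the paper's proof: the $(\Leftarrow)$ direction is immediate from Lemma~10, and the $(\Rightarrow)$ direction instantiates the hypothesis at the target set $Y$ with $|Y|=k$ on the pushed-forward pair $(\gamma(\mu_1),\gamma(\mu_2))$, then uses the $k$-generatedness of $\Delta'$ to pass to the supremum. The paper phrases the last step as $\overline{\Delta'}^k = \Delta'$ rather than invoking Lemma~13, but this is the same content.
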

\begin{proof}
($\impliedby$) Obvious from Lemma 3 (Lemma 10 in the paper).
($\implies$)
From the assumption, we obtain
\begin{align*}
&\forall{X}.\forall{\mu_1,\mu_2 \in \prob(X)}.
\forall{\gamma \colon X \to \prob(Y)}.\\
&\qquad\qquad\Delta_Y(\gamma(\mu_1)||\gamma(\mu_2)) \leq \delta \implies \Delta'_Y(\gamma(\mu_1)||\gamma(\mu_2)) \leq \rho.
\end{align*}
This implies 
\[
\forall{X}.\forall{\mu_1,\mu_2 \in \prob(X)}.(
\overline{\Delta}^k_X(\mu_1||\mu_2 \leq \delta
\implies
\overline{\Delta'}^k_X(\mu_1||\mu_2) \leq \rho
)
\]
Thanks to the $k$-generatedness of $\Delta'$, we conclude the statement of this lemma.
\end{proof}

\subsection{Proof of $2$-generatedness of $\varepsilon$-divergence}
\begin{theorem}
The $\varepsilon$-divergence $\Delta^\varepsilon$ is $2$-generated for all $\varepsilon$.
\end{theorem}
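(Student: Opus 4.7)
The plan is a straightforward two-sided inequality. First, since $\Delta^\varepsilon$ is an $f$-divergence (or can be directly checked to satisfy the data-processing inequality via post-processing of probabilities), Lemma 10 gives us $\overline{\Delta^\varepsilon}^2 \leq \Delta^\varepsilon$ for free. So the work is entirely on the reverse inequality $\Delta^\varepsilon \leq \overline{\Delta^\varepsilon}^2$.

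For the reverse inequality, I will exploit the fact that the supremum in $\Delta^\varepsilon_X(\mu_1||\mu_2) = \sup_{S \subseteq X}(\Pr[\mu_1 \in S] - e^\varepsilon \Pr[\mu_2 \in S])$ is realized by a single subset $S$, which naturally lifts to a deterministic two-valued decision rule. Concretely, fix $\mu_1,\mu_2 \in \prob(X)$ and any $S \subseteq X$. Take $Y = \{0,1\}$ and define the indicator map $\gamma_S \colon X \to Y$ by $\gamma_S(x) = 1$ if $x \in S$ and $0$ otherwise. Regarded as a function $X \to \prob(Y)$, this pushes forward $\mu_i$ to the distribution $\nu_i$ on $\{0,1\}$ with $\nu_i(1) = \Pr[\mu_i \in S]$.

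Next I would compute $\Delta^\varepsilon_Y(\nu_1 || \nu_2)$ by restricting the sup in the definition of $\Delta^\varepsilon$ to the test set $S' = \{1\} \subseteq Y$. This immediately gives
\[
\Delta^\varepsilon_Y(\gamma_S(\mu_1) || \gamma_S(\mu_2)) \;\geq\; \nu_1(1) - e^\varepsilon \nu_2(1) \;=\; \Pr[\mu_1 \in S] - e^\varepsilon \Pr[\mu_2 \in S].
\]
Taking the supremum over $S \subseteq X$ on the right, and noting that each $\gamma_S$ is a candidate in the sup defining $\overline{\Delta^\varepsilon}^2_X(\mu_1||\mu_2)$, yields $\overline{\Delta^\varepsilon}^2_X(\mu_1||\mu_2) \geq \Delta^\varepsilon_X(\mu_1||\mu_2)$, which combined with the earlier inequality completes the proof.

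There is no real obstacle here: the argument is essentially a tautology once one observes that rejection regions $S \subseteq X$ are literally in bijection with deterministic decision rules $X \to \{0,1\}$, and that $\Delta^\varepsilon$ on a two-point space coincides with the ``rejection region'' formulation when testing against $\{1\}$. The only minor care point is ensuring that we do not need randomized $\gamma$'s for the lower bound — but the deterministic indicator $\gamma_S$ suffices, which is consistent with the Weak Birkhoff--von Neumann reduction (Theorem 11) guaranteeing that $2$-cuts can always be witnessed by deterministic rules for quasi-convex divergences.
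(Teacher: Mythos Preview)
Your argument is correct and rests on the same core observation as the paper's proof: rejection regions $S \subseteq X$ are exactly deterministic decision rules $X \to \{0,1\}$, and evaluating $\Delta^\varepsilon$ on the pushforward to a two-point space recovers the original supremum term-by-term.

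The organization differs slightly. The paper proves the equality $\overline{\Delta^\varepsilon}^{2}_X = \Delta^\varepsilon_X$ in one chain: it first invokes weak Birkhoff--von Neumann and quasi-convexity to replace the supremum over probabilistic $\gamma \colon X \to \prob(\{\mathtt{Acc},\mathtt{Rej}\})$ by a supremum over deterministic $\gamma \colon X \to \{\mathtt{Acc},\mathtt{Rej}\}$, and then unwinds that supremum to the defining expression for $\Delta^\varepsilon_X$. Your version instead splits into two inequalities, getting $\overline{\Delta^\varepsilon}^2 \leq \Delta^\varepsilon$ from the data-processing inequality (Lemma~10) and $\overline{\Delta^\varepsilon}^2 \geq \Delta^\varepsilon$ by witnessing with the indicator $\gamma_S$. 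This is in fact a bit more economical: because you only need a \emph{lower} bound from deterministic rules, you never actually need Birkhoff--von Neumann or quasi-convexity, whereas the paper invokes both to force an equality at the probabilistic-to-deterministic step. Either route is fine; yours is the shorter one.
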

\begin{proof}
We recall that the $\varepsilon$-divergence $\Delta^\varepsilon$ is quasi-convex (moreover, jointly convex) and satisfies data-processing inequality.
We choose a set $Y = \{\mathtt{Acc},\mathtt{Rej}\}$, and take the $2$-cut of $\Delta^\varepsilon$ by
\[
\overline{\Delta^\varepsilon}^{2}_X(\mu_1||\mu_2)
=\sup_{\gamma \colon X \to \prob(\{\mathtt{Acc},\mathtt{Rej}\})} \Delta^\varepsilon_{Y} (\gamma(\mu_1) || \gamma(\mu_2))
\]
We show this is equal to the original $\Delta^\varepsilon_X(\mu_1||\mu_2)$.
Without loss of generality we may assume $X$ is at most countable.
If $X$ is an arbitrary set, we can restrict it to countable set in a similar way as
the proof of Lemma 7 (Lemma 14(3) in the paper).

By the weak Birkhoff-von Neumann Theorem (Theorem 1 in the appendix),
each $\gamma \colon X \to \prob(\{\mathtt{Acc},\mathtt{Rej}\})$ can be decomposed into 
a convex combination
$\gamma(x) = \sum_{i \in I} \alpha_i \mathbf{d}_{\gamma_i(x)}$
of functions $\gamma_i \colon X \to \{\mathtt{Acc},\mathtt{Rej}\}$ ($i \in I$) where $I$ is a countable set and
$\sum_{i \in I} \alpha_i = 1$.
By combining this and quasi-convexity and data-processing inequality of $\Delta^\varepsilon$, we obtain 
\begin{align*}
\Delta^\varepsilon (\gamma(\mu_1) || \gamma(\mu_2))
&= \Delta^\varepsilon ({\textstyle\sum_{i \in I} \alpha_i \gamma_i(\mu_1)} || {\textstyle\sum_{i \in I} \alpha_i \gamma_i(\mu_1)})\\
&= \sup_{i \in I} \Delta^\varepsilon (\gamma_i(\mu_1) || \gamma_i(\mu_1))\\
&\leq  \sup_{\gamma \colon X \to  \{\mathtt{Acc},\mathtt{Rej}\} } \Delta^\varepsilon_X( \gamma(\mu_1) || \gamma(\mu_2)).
\end{align*}
This implies
\begin{align*}
\overline{\Delta^\varepsilon}^{2}_X(\mu_1||\mu_2)
&=
\sup_{\gamma \colon X \to \prob(\{\mathtt{Acc},\mathtt{Rej}\})} \Delta^\varepsilon (\gamma(\mu_1) || \gamma(\mu_2))\\
&= \sup_{\gamma \colon X \to \{\mathtt{Acc},\mathtt{Rej}\} } \Delta^\varepsilon_\{\mathtt{Acc},\mathtt{Rej}\} ( \gamma(\mu_1) || \gamma(\mu_2))\\
&=  \sup_{\gamma \colon X \to \{\mathtt{Acc},\mathtt{Rej}\} }  \sup_{A \subseteq \{\mathtt{Acc},\mathtt{Rej}\}} (\Pr[\gamma(\mu_1) \in A] - e^\varepsilon \Pr[\gamma(\mu_2) \in A])\\
&=  \sup_{\gamma \colon X \to \{\mathtt{Acc},\mathtt{Rej}\} }  \sup_{A \subseteq \{\mathtt{Acc},\mathtt{Rej}\}} (\Pr[\mu_1 \in \inverse{\gamma}(A)] - e^\varepsilon \Pr[\mu_2 \in \inverse{\gamma}(A)])\\
& \stareq \sup_{S \subseteq X} (\Pr[\mu_1 \in S] - e^\varepsilon \Pr[\mu_2 \in S])\\
& = \Delta^\varepsilon_X(\mu_1 || \mu_2)
\end{align*}
We have the $2$-generatedness: $\overline{\Delta^\varepsilon}^{2} = \Delta^\varepsilon$.
The equality $(*)$ is proved as follows: for given $\gamma$ and $A$, we take $S = \inverse{\gamma}$.     Conversely, for any $S \subseteq X$ we take $A = \{\mathtt{Acc}\}$ and $\gamma = \chi_S$, which is the indicator function of $S$ defined by $\chi_S(x) = 1$ if $x \in S$ and $\chi_S(x) = 0$ otherwise.
\end{proof}

\paragraph{General version}
We can extend this result to general measurable setting by using the continuity of $\Delta^\varepsilon$ (see also \cite{1705001_2006}), which is obtained by $f$-divergence characterization of $\Delta^\varepsilon$ \cite{BartheOlmedo2013}.
For general measurable sapce $X$ and every pair $\mu_1,\mu_2 \in \prob(X)$ we have
\begin{align*}
\Delta^\varepsilon_X(\mu_1||\mu_2)
&= \sup_{\gamma \colon X \to \mathbb{N}}\Delta^\varepsilon_\mathbb{N}(\gamma(\mu_1)||\gamma(\mu_2))\\
&= \sup_{\gamma \colon X \to \mathbb{N}}\sup_{\gamma'\colon \mathbb{N} \to \prob(\{\mathtt{Acc},\mathtt{Rej}\}) }\Delta^\varepsilon_{\{\mathtt{Acc},\mathtt{Rej}\}}((\gamma' \bullet \gamma)(\mu_1)||(\gamma' \bullet \gamma)(\mu_2))\\
&= \sup_{\gamma \colon X \to \{\mathtt{Acc},\mathtt{Rej}\} }\Delta^\varepsilon_{\{\mathtt{Acc},\mathtt{Rej}\}}(\gamma(\mu_1)||\gamma(\mu_2))
\end{align*}
Functions are assumed to be measurable.

\subsection{Counterexample: R\'enyi-divergence is not $2$-generated}
\begin{theorem}
There are $\mu_1,\mu_2 \in {\tt Prob}(\{a,b,c\})$ such that
\[
\overline{D^{\alpha}}^2_{\{a,b,c\}} (\mu_1||\mu_2)< D^{\alpha}_{\{a,b,c\}}
(\mu_1||\mu_2) 
\]  
\end{theorem}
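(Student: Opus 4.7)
The plan is to instantiate the family of distributions anticipated in the remarks just before the theorem and to verify the strict inequality by direct computation.

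First, I would fix any $\beta > \alpha + 1$ and set $p \defeq (1/2)^{\beta/(\alpha-1)} \in (0,1)$, so that $p^{1-\alpha} = 2^{\beta}$. Take $X = \{a,b,c\}$, let $\mu_1$ be uniform on $X$, and set $\mu_2(a) = p^{2}/Z$, $\mu_2(b) = p/Z$, $\mu_2(c) = 1/Z$ with $Z \defeq p^{2}+p+1$. Writing $C \defeq (1/3)^{\alpha} Z^{\alpha-1}$, the definition of the R\'enyi divergence immediately gives
\[
e^{(\alpha-1)D^{\alpha}_{X}(\mu_1||\mu_2)} \;=\; \sum_{x}\mu_1(x)^{\alpha}\mu_2(x)^{1-\alpha} \;=\; C\bigl(2^{2\beta} + 2^{\beta} + 1\bigr).
\]

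For the $2$-cut, I would invoke Theorem~\ref{thm:weak-bvn} together with the quasi-convexity of $D^{\alpha}$ to restrict the supremum defining $\overline{D^{\alpha}}^{2}_{X}(\mu_1||\mu_2)$ to deterministic decision rules, i.e.\ to subsets $S \subseteq X$. The subsets $\emptyset$ and $X$ yield $0$, and by complementation only three nontrivial cases remain. Using the identity $(p^{2}+p)^{1-\alpha} = 2^{\beta}(1+p)^{1-\alpha}$, a direct substitution into the explicit formula for $\overline{D^{\alpha}}^{2}$ recalled earlier in the excerpt gives
\begin{align*}
S=\{a\} &:\quad C\bigl[2^{2\beta} + 2^{\alpha}(1+p)^{1-\alpha}\bigr],\\
S=\{b\} &:\quad C\bigl[2^{\beta} + 2^{\alpha}(1+p^{2})^{1-\alpha}\bigr],\\
S=\{c\} &:\quad C\bigl[1 + 2^{\alpha+\beta}(1+p)^{1-\alpha}\bigr].
\end{align*}

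To conclude, I would compare each of these three quantities with $C\bigl(2^{2\beta}+2^{\beta}+1\bigr)$. Since $p > 0$ and $\alpha > 1$, both $(1+p)^{1-\alpha}$ and $(1+p^{2})^{1-\alpha}$ are strictly less than $1$. Combined with the hypothesis $\beta > \alpha+1$, which yields $2^{\beta}+1 > 2^{\alpha}$ and $2^{2\beta} > 2^{\alpha+\beta}$, this shows that each of the three displayed values is strictly bounded above by $C\bigl(2^{2\beta}+2^{\beta}+1\bigr)$. Since a supremum over finitely many values each strictly below a common target is itself strictly below that target, exponentiating back by $\alpha-1>0$ yields $\overline{D^{\alpha}}^{2}_{X}(\mu_1||\mu_2) < D^{\alpha}_{X}(\mu_1||\mu_2)$.

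The main technical hurdle is the enumeration over partitions, which the weak Birkhoff--von Neumann reduction makes tractable by cutting the search down to three deterministic subsets. The more delicate aspect is obtaining a strict inequality rather than $\leq$: the factors $(1+p)^{1-\alpha}$ and $(1+p^{2})^{1-\alpha}$ have to stay uniformly below $1$, which is immediate from $p>0$, and the choice $\beta > \alpha+1$ is calibrated precisely so that even the largest of the three $2$-cut values falls strictly short of the full R\'enyi sum.
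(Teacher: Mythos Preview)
Your proof is correct and follows essentially the same route as the paper: the same pair $(\mu_1,\mu_2)$, the same reduction to three deterministic partitions via quasi-convexity and the weak Birkhoff--von Neumann theorem, and the same case-by-case comparison. The only difference is cosmetic---the paper organizes the comparison as a ratio of exponentiated divergences and extracts an explicit quantitative lower bound on the gap, whereas you compare the exponentiated sums directly---but for the bare strict inequality stated in the theorem your presentation is equally valid (and arguably cleaner).
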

\begin{proof}
Let $p = (1/2)^{\beta/(\alpha-1)}$ and $\alpha+1 < \beta$ and define 
\begin{align*}
\mu_1 &= \frac{1}{3}\mathbf{d}_a + \frac{1}{3}\mathbf{d}_b + \frac{1}{3}\mathbf{d}_c,\\
\mu_2 &= \frac{p^2}{p^2+p+1}\mathbf{d}_a + \frac{p}{p^2+p+1}\mathbf{d}_b + \frac{1}{p^2+p+1}\mathbf{d}_c
\end{align*}
Since R\'enyi divergence is quasi-convex and satisfies data-processing inequality, it suffices to show the
proper inequality $D^\alpha_{\{\mathtt{Acc},\mathtt{Rej}\}}(\gamma(\mu_1)||\gamma(\mu_2))<D^\alpha_{\{a,b,c\}}(\mu_1||\mu_2)$ holds for any \emph{deterministic decision rule} $\gamma \colon \{a,b,c\} \to \{\mathtt{Acc},\mathtt{Rej}\}$.
There are $8$ cases of $\gamma \colon \{a,b,c\} \to \{\mathtt{Acc},\mathtt{Rej}\}$, but thanks to the data-processing inequality and reflexivity of R\'enyi divergence,
it suffices to consider $3$ cases: $(\gamma(a),\gamma(b),\gamma(c)) = (\mathtt{Acc},\mathtt{Acc},\mathtt{Rej}), (\mathtt{Acc},\mathtt{Rej},\mathtt{Acc}), (\mathtt{Rej},\mathtt{Acc},\mathtt{Acc})$.  
Hence,
\begin{align*}
&\frac{\exp((\alpha - 1)D^\alpha_{ \{a,b,c\}}(\mu_1||\mu_2))}{\exp((\alpha - 1)D^\alpha_{\{\mathtt{Acc},\mathtt{Rej}\}}(\gamma(\mu_1)||\gamma(\mu_2))}\\
&\geq \min \left(
\frac{
p^{2(1-\alpha)} + p^{1-\alpha} + 1
}{2^\alpha(p^2 + p)^{1-\alpha} + 1
},
\frac{
p^{2(1-\alpha)} + p^{1-\alpha} + 1
}{2^\alpha(p^2 + 1)^{1-\alpha} + p^{1-\alpha}
},
\frac{
p^{2(1-\alpha)} + p^{1-\alpha} + 1
}{2^\alpha(p + 1)^{1-\alpha} + p^{2(1-\alpha)}
}
\right)\\
&\geq \min \left(
\frac{
2^{\beta}+2^{-\beta}+1
}{
2^{\alpha}(p+1)^{1-\alpha}+2^{-\beta}
},
\frac{
2^{\beta}+2^{-\beta}+1
}{
2^{\alpha-\beta}(p^2+1)^{1-\alpha} + 1
},
\frac{
2^{\beta}+2^{-\beta}+1
}{
2^{\beta}+2^{\alpha-\beta}(p+1)^{1-\alpha}
}
\right)\\
&\geq \min \left(
\frac{
2^{\beta}+2^{-\beta}+1
}{
2^{\alpha+1}
},
\frac{
2^{\beta}+2^{-\beta}+1
}{
2^{\beta} + 1
}
\right) > 1.
\end{align*}
Hence,
\begin{align*}
&D^\alpha_{\{\mathtt{Acc},\mathtt{Rej}\}}(\gamma(\mu_1)||\gamma(\mu_2)) + \frac{1}{\alpha - 1} \log \min(\frac{2^\beta + 2^{-\beta} + 1}{2^{\alpha + 1}},\frac{2^\beta + 2^{-\beta} + 1}{2^\beta + 1})\\
&\leq D^{\alpha}_{\{a,b,c\}}(\mu_1||\mu_2) .
\end{align*}
holds for any $\gamma \colon \{a,b,c\} \to \{\mathtt{Acc},\mathtt{Rej}\}$.
By the data-processing inequality of R\'enyi divergence, this discussion does not depend on the choice of $\{\mathtt{Acc},\mathtt{Rej}\}$.
By weak Birkhoff-von Neumann theorem, and the quasi-convexity R\'enyi divergence, we conclude 
\[
\overline{D^{\alpha}}^2_X(\mu_1||\mu_2) + \frac{1}{\alpha - 1} \log \min(\frac{2^\beta + 2^{-\beta} + 1}{2^{\alpha + 1}},\frac{2^\beta + 2^{-\beta} + 1}{2^\beta + 1})\leq D^{\alpha}_{X}(\mu_1||\mu_2).
\]
\end{proof}

\subsection{Proof of $\infty$-generatedness of R\'enyi-divergence}
$f$-divergences is a class of divergences that are characterized by 
convex functions. 
For a given convex function $f \colon [0,\infty) \to \mathbb{R}$
satisfying $\lim_{t \to 0+}t f (0/t) = 0$ (this function is called weight function), 
we define an $f$-divergence $\Delta^{f}$ corresponding the function $f$, 
\[
\Delta^{f}_X(\mu_1||\mu_2) \defeq \sum_{x\in X} \mu_2(x) f\left( \frac{\mu_1(x)}{\mu_2(x)} \right).
\]
The $\alpha$-R\'enyi divergence $D^\alpha$ can also be characterized using $f$-divergence as follows:
\[
D^\alpha(\mu_1||\mu_2) = \frac{1}{\alpha - 1} \log \sum_{x\in X} \mu_2(x) \left( \frac{\mu_1(x)}{\mu_2(x)} \right)^\alpha
= \frac{1}{\alpha - 1} \log \Delta^{t \mapsto t^\alpha}_X(\mu_1||\mu_2).
\]
Remark that every $f$-divergence is quasi-convex (moreover jointly convex) and continuous, and satisfies data-processing inequality (see also \cite[Theorems 14--16]{1705001_2006}).

Since the mapping $t \mapsto \frac{1}{\alpha - 1} \log t$ is monotone, every $\alpha$-R\'enyi divergence $D^\alpha$ is also quasi-convex and satisfies data-processing inequality.
Thanks to the data-processing inequality,  every $\alpha$-R\'enyi divergence $D^\alpha$ is at least $\infty$-generated.
We need to prove that for every finite $k$, every $\alpha$-R\'enyi divergence $D^\alpha$ is not $k$-generated. 
To prove this, we use that the mapping $t \mapsto t^\alpha$ is strictly convex.

\begin{lemma}
If a weight function is strictly convex, its $f$-divergence $\Delta^{f}$ is not $k$-generated for every finite $k$.
\end{lemma}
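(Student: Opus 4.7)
The plan is to exhibit, for each finite $k$, a concrete counterexample: a finite set $X$ of size $k+1$ and a pair of distributions $\mu_1, \mu_2 \in \prob(X)$ for which $\overline{\Delta^f}^k_X(\mu_1\|\mu_2) < \Delta^f_X(\mu_1\|\mu_2)$. The driving idea is that any rule $\gamma \colon X \to Y$ with $|Y|=k$ forces a pigeonhole collision, and strict convexity of $f$ converts this collision into a strict instance of Jensen's inequality in the standard proof of the data-processing inequality for $f$-divergences.

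First I would fix $k \in \mathbb{N}$, take $X = \{x_1,\ldots,x_{k+1}\}$, and construct $\mu_1,\mu_2 \in \prob(X)$ of full support such that the likelihood ratios $r_i \defeq \mu_1(x_i)/\mu_2(x_i)$ are pairwise distinct for $1 \le i \le k+1$. A simple choice is $\mu_1$ uniform and $\mu_2$ with weights proportional to $1,2,\ldots,k+1$. Full support avoids the degenerate weight-function conventions and distinct ratios are what will activate strict convexity.

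Next I would reduce from probabilistic to deterministic decision rules. By the weak Birkhoff-von Neumann theorem, any $\gamma \colon X \to \prob(Y)$ with $|Y|=k$ decomposes as $\gamma = \sum_{m=1}^N a_m (\eta_Y \circ \gamma_m)$ with $\gamma_m \colon X \to Y$ deterministic. Since $f$-divergences are jointly convex, $\Delta^f_Y(\gamma(\mu_1)\|\gamma(\mu_2)) \le \sum_m a_m \Delta^f_Y(\gamma_m(\mu_1)\|\gamma_m(\mu_2))$, and therefore $\overline{\Delta^f}^k_X(\mu_1\|\mu_2)$ is bounded by $\max_{\gamma\colon X \to Y} \Delta^f_Y(\gamma(\mu_1)\|\gamma(\mu_2))$ over the finitely many deterministic $\gamma$. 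Thus it suffices to obtain a uniform strict gap at the deterministic level.

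Now I would execute the strict DPI. A deterministic $\gamma \colon X \to Y$ induces a partition $\{A_y = \gamma^{-1}(y)\}_{y \in Y}$ of $X$ into at most $k$ nonempty classes. For each class,
\[
\mu_2(A_y)\, f\!\left(\frac{\mu_1(A_y)}{\mu_2(A_y)}\right)
= \mu_2(A_y)\, f\!\left(\sum_{x \in A_y} \frac{\mu_2(x)}{\mu_2(A_y)} \cdot r_x \right)
\le \sum_{x \in A_y} \mu_2(x)\, f(r_x),
\]
by Jensen's inequality, and this inequality is strict whenever the ratios $\{r_x : x \in A_y\}$ are not all equal and $f$ is strictly convex. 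Since $|X| = k+1 > k = |Y|$, by pigeonhole at least one class $A_{y^*}$ contains two distinct elements $x_i, x_j$ of $X$; by construction their ratios differ, so strict convexity of $f$ yields a strict inequality on that class. Summing over all classes preserves the strict gap, giving $\Delta^f_Y(\gamma(\mu_1)\|\gamma(\mu_2)) < \Delta^f_X(\mu_1\|\mu_2)$ for \emph{every} deterministic $\gamma$. As there are only $k^{k+1}$ deterministic rules, the maximum is attained and lies strictly below $\Delta^f_X(\mu_1\|\mu_2)$; combined with the reduction above, this gives $\overline{\Delta^f}^k_X(\mu_1\|\mu_2) < \Delta^f_X(\mu_1\|\mu_2)$, proving $\Delta^f$ is not $k$-generated.

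The main obstacle, though conceptually straightforward, is turning strict convexity of $f$ into a \emph{uniform} strict gap: a single inequality $<$ for each individual $\gamma$ does not automatically yield $<$ after taking the supremum. This is why the reduction to finitely many deterministic rules via Birkhoff-von Neumann and joint convexity is essential—once the sup collapses to a max over a finite set, each strict inequality lifts to a strict gap for $\overline{\Delta^f}^k_X$. The only other delicate point is ensuring the ratios are genuinely distinct so that the pigeonhole collision always falls on a class where Jensen is strict; this is arranged by the construction of $\mu_1, \mu_2$.
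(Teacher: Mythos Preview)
Your proposal is correct and follows essentially the same approach as the paper: construct $\mu_1,\mu_2$ on a $(k{+}1)$-point set with full support and pairwise distinct likelihood ratios, reduce probabilistic rules to finitely many deterministic ones via weak Birkhoff--von Neumann and joint convexity, then apply pigeonhole plus strict Jensen to get a strict gap for every deterministic rule, which survives the (finite) maximum. Your explicit discussion of why the strict inequality survives the supremum is a nice touch that the paper leaves more implicit.
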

\begin{proof}
Without loss of generality, we may assume $k > 1$.

Consider a pair $\mu_1,\mu_2 \in \prob(\{0,1,2,\ldots,k\})$ satisfying $\supp(\mu_1) = \supp(\mu_2) = \{0,1,2,\ldots,k\}$ and
$\mu_1(i)/\mu_2(i) \neq \mu_1(j)/\mu_2(j)$ where $1 \leq i,j \leq k+1$ and $i \neq j$.
We can give such distributions.
Then we obtain,
\begin{align*}
&\overline{\Delta^f}^{k}_{\{0,1,2,\ldots,k\}}(\mu_1 || \mu_2)\\
&=
\sup_{\gamma \colon \{0,1,2,\ldots,k\} \to \prob(\{0,1,2,\ldots,k-1\})}\Delta^f_{\{0,1,2,\ldots,k-1\}}(\gamma(\mu_1)||\gamma(\mu_2))
\\
&\quad \{ \text{Weak Birkhoff-von Neumann theorem and the joint convexity of } \Delta^f \}
\\
&= \max_{\gamma \colon \{0,1,2,\ldots,k\} \to \{0,1,2,\ldots,k-1\}}\Delta^f(\gamma(\mu_1)||\gamma(\mu_2))\\
& = \max_{\gamma \colon \{0,1,2,\ldots,k\} \to \{0,1,2,\ldots,k-1\}} \sum_{j = 0}^{k-1} f\left(\frac{\sum_{\gamma(i) = j} \mu_1(i)}{\sum_{\gamma(i) = j}\mu_2(i)}\right)(\textstyle \sum_{\gamma(i) = j} \mu_2(i))
\\
&\quad \{ \text{Jensen's inequality with the strict convexity of the weight function } f \}
\\
& < \sum_{i = 0}^{k} f\left(\frac{\mu_1(i)}{\mu_2(i)}\right) \mu_2(i) = \Delta^f(\mu_1 || \mu_2).
\end{align*}
Since $k+1 > k$, by Dirichlet's pigeonhole principle, for any $\gamma \colon \{0,1,2,\ldots,k\} \to \{0,1,2,\ldots,k-1\}$, for some $j \in \{0,1,2,\ldots,k\}$, there are at least two different $i_1, i_2 \in \{0,1,2,\ldots,k-1\}$ such that $\gamma(i_1) = j$ and $\gamma(i_2) = j$.
From the assumption on $\mu_1$ and $\mu_2$,
we have $(\mu_1(i_1)/\mu_2(i_1)) \neq (\mu_1(i_2)/\mu_2(i_2)) $
Since the function $f$ is strictly convex, by the condition for equality of Jensen's inequality, we have the strict inequality
\[
f \left(\frac{\mu_1(i_1) + \mu_1(i_2)}{\mu_2(i_1) + \mu_2(i_2)}\right) (\mu_2(i_1) + \mu_2(i_2)) 
<
f \left(\frac{\mu_1(i_1)}{\mu_2(i_1)}\right) \mu_2(i_1) + f \left(\frac{\mu_1(i_2)}{\mu_2(i_2)}\right) \mu_2(i_2).
\]
Therefore, for any $\gamma \colon \{0,1,2,\ldots,k\} \to \{0,1,2,\ldots,k-1\}$, we have
\[
\sum_{j = 1}^k \left(\frac{\sum_{\gamma(i) = j} \mu_1(i)}{\sum_{\gamma(i) = j}\mu_2(i)}\right)^\alpha ({\textstyle \sum_{\gamma(i) = j}} \mu_2(i))
< \sum_{i = 1}^{k+1} \left(\frac{\mu_1(i)}{\mu_2(i)}\right)^\alpha \mu_2(i).
\]
Since there only finite case of $\gamma \colon \{0,1,2,\ldots,k\} \to \{0,1,2,\ldots,k-1\}$,
we conclude $\overline{\Delta^f}^{k}_{\{0,1,2,\ldots,k\}}(\mu_1 || \mu_2) < \Delta^f_{\{0,1,2,\ldots,k\}}(\mu_1||\mu_2)$. 
Since every $f$-divergence satisfies data-processing inequality, this discussion does not depend on the choice of set $Y$ with $|Y| = k$ in the construction of the $k$-cut $\overline{\Delta^f}^{k}$.
Thus, $\Delta^f$ is not $k$-generated for any finite $k$.
\end{proof}

Since the mapping $t \mapsto \frac{1}{\alpha - 1} \log t$ is strict, we conclude,
\begin{corollary}
For any $alpha > 1$, the $\alpha$-R\'enyi divergence $D^\alpha$ is not $k$-generated for every finite $k$.
\end{corollary}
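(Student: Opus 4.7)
The plan is to reduce the statement to the preceding lemma by exploiting the fact that R\'enyi divergence is obtained from a strictly convex $f$-divergence by applying a strictly increasing continuous transformation, and then observing that taking $k$-cuts commutes with such transformations.

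First, I would recall the identification $\exp((\alpha-1)D^\alpha_X(\mu_1||\mu_2)) = \Delta^f_X(\mu_1||\mu_2)$, where $f(t) = t^\alpha$. For $\alpha>1$ the weight $f$ is strictly convex, so the preceding lemma applies and yields a pair of distributions $\mu_1,\mu_2$ on a finite set of cardinality $k+1$ with full support and pairwise distinct likelihood ratios $\mu_1(i)/\mu_2(i)$ for which
\[
\overline{\Delta^f}^{k}(\mu_1 || \mu_2) < \Delta^f(\mu_1 || \mu_2).
\]

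Next, I would transfer this strict inequality from $\Delta^f$ to $D^\alpha$. Because $\varphi(t) = \frac{1}{\alpha-1}\log t$ is strictly increasing and continuous on $(0,\infty)$, suprema commute with $\varphi$: for every set $Y$ with $|Y|=k$,
\[
\overline{D^\alpha}^{k}(\mu_1||\mu_2) = \sup_{\gamma \colon X\to\prob(Y)} \varphi\bigl(\Delta^f_Y(\gamma(\mu_1)||\gamma(\mu_2))\bigr) = \varphi\bigl(\overline{\Delta^f}^{k}(\mu_1||\mu_2)\bigr).
\]
Since $\varphi$ is strictly monotone, the strict inequality for $\Delta^f$ yields $\overline{D^\alpha}^{k}(\mu_1||\mu_2) < D^\alpha(\mu_1||\mu_2)$, so $D^\alpha$ is not $k$-generated. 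Combining this with Lemma~\ref{fact:basic}, which ensures $D^\alpha$ is at least $\infty$-generated via the data-processing inequality, yields that $D^\alpha$ is exactly $\infty$-generated.

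The only nontrivial step is verifying that $\varphi$ indeed commutes with the supremum defining the $k$-cut; this is where strict monotonicity matters (and where the prior lemma's strict inequality is preserved rather than collapsed). Everything else is bookkeeping: the construction of the witnesses $\mu_1,\mu_2$ is already carried out inside the proof of the preceding lemma, and the strict convexity of $t\mapsto t^\alpha$ for $\alpha>1$ is elementary, so no further obstacle is expected.
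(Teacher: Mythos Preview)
Your proposal is correct and follows essentially the same route as the paper: the paper's proof of the corollary is the one-line observation that the map $t \mapsto \frac{1}{\alpha-1}\log t$ is strictly monotone (and continuous), so the strict gap $\overline{\Delta^f}^{k} < \Delta^f$ established in the preceding lemma for $f(t)=t^\alpha$ transfers directly to $D^\alpha$. Your write-up simply unpacks this step more explicitly by spelling out that a continuous increasing map commutes with the supremum defining the $k$-cut, which is exactly the mechanism the paper is invoking.
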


\subsection{Proof of Theorem 18}
\begin{theorem}[Theorem 18]
Let $\mu_1,\mu_2 \in \prob (X)$.
$\overline{\Delta}^2_X( \mu_1 || \mu_2) \leq \rho$ holds if and only if
for any $\gamma \colon X \to \prob (\{\mathtt{Acc},\mathtt{Rej}\})$,
\[
(\Pr[\gamma(\mu_1) = \mathtt{Rej}],\Pr[\gamma(\mu_2) = \mathtt{Acc}]) \in R^{\Delta}(\rho).
\]
\end{theorem}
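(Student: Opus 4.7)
The plan is to reduce the statement to a definitional identification, using two facts: (i) any distribution on $\{\mathtt{Acc},\mathtt{Rej}\}$ is parametrized by a single number in $[0,1]$, and (ii) every $2$-cut automatically satisfies the data-processing inequality, even when the underlying $\Delta$ does not.

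Fix $\gamma \colon X \to \prob(\{\mathtt{Acc},\mathtt{Rej}\})$ and set $x = \Pr[\gamma(\mu_1) = \mathtt{Rej}]$, $y = \Pr[\gamma(\mu_2) = \mathtt{Acc}]$. By direct inspection,
\[
\gamma(\mu_1) = (1-x)\mathbf{d}_{\mathtt{Acc}} + x\,\mathbf{d}_{\mathtt{Rej}}, \qquad \gamma(\mu_2) = y\,\mathbf{d}_{\mathtt{Acc}} + (1-y)\mathbf{d}_{\mathtt{Rej}},
\]
so the condition $(x,y)\in R^{\Delta}(\rho)$ is literally $\overline{\Delta}^2_{\{\mathtt{Acc},\mathtt{Rej}\}}(\gamma(\mu_1) \| \gamma(\mu_2)) \leq \rho$. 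The whole theorem thus reduces to the equivalence
\[
\overline{\Delta}^2_X(\mu_1 \| \mu_2) \leq \rho \ \Longleftrightarrow\ \forall \gamma \colon X \to \prob(\{\mathtt{Acc},\mathtt{Rej}\}).~ \overline{\Delta}^2_{\{\mathtt{Acc},\mathtt{Rej}\}}(\gamma(\mu_1) \| \gamma(\mu_2)) \leq \rho.
\]

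For the forward direction I would invoke the fact, stated in Section 3.3 and established in the appendix, that every $2$-cut satisfies the data-processing inequality. Applying it along $\gamma$ gives $\overline{\Delta}^2_{\{\mathtt{Acc},\mathtt{Rej}\}}(\gamma(\mu_1) \| \gamma(\mu_2)) \leq \overline{\Delta}^2_X(\mu_1 \| \mu_2) \leq \rho$. For the reverse direction I would unfold $\overline{\Delta}^2_X(\mu_1 \| \mu_2) = \sup_{\gamma} \Delta_{\{\mathtt{Acc},\mathtt{Rej}\}}(\gamma(\mu_1) \| \gamma(\mu_2))$ and note that on a size-$2$ set $Y$ one has $\Delta_Y \leq \overline{\Delta}^2_Y$ trivially, by plugging the identity into the supremum that defines $\overline{\Delta}^2_Y$. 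Chaining $\Delta_{\{\mathtt{Acc},\mathtt{Rej}\}}(\gamma(\mu_1)\|\gamma(\mu_2)) \leq \overline{\Delta}^2_{\{\mathtt{Acc},\mathtt{Rej}\}}(\gamma(\mu_1)\|\gamma(\mu_2)) \leq \rho$ and then taking the supremum over $\gamma$ yields the bound.

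I do not anticipate a genuine obstacle: the argument is mostly bookkeeping around the definitions of $k$-cut and privacy region. The only subtlety is that one must be careful not to use data processing for $\Delta$ itself, since the theorem is stated for an \emph{arbitrary} divergence; fortunately the $2$-cut $\overline{\Delta}^2$ satisfies data processing automatically, which is exactly what the argument needs.
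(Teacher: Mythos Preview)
Your proposal is correct and follows essentially the same route as the paper: rewrite the privacy-region membership as $\overline{\Delta}^2_{\{\mathtt{Acc},\mathtt{Rej}\}}(\gamma(\mu_1)\|\gamma(\mu_2))\leq\rho$, use the data-processing inequality of the $2$-cut for the forward direction, and unfold the defining supremum for the reverse. The only cosmetic difference is that the paper allows the $2$-cut to be taken with respect to an arbitrary two-element set $W$ and closes the reverse direction via a bijection $\{\mathtt{Acc},\mathtt{Rej}\}\cong W$, whereas you implicitly take $W=\{\mathtt{Acc},\mathtt{Rej}\}$; this is harmless since the reference set is a free choice.
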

\begin{proof}
We fix a $2$-cut $\overline{\Delta}^2$ of a divergence $\Delta$.
Suppose that it is defined with a set $W$ satisfying $|W| = 2$.
\[
\overline{\Delta}^2_X(\mu_1||\mu_2) = \sup_{\gamma \colon X \to \prob (W)}\Delta_W(\gamma (\mu_1)|| \gamma(\mu_2)).
\]
We recall the definition of privacy region
\[
R^{\Delta}(\rho)
=
\Set{(x,y)|
\overline{\Delta}^2_{\{\mathtt{Acc},\mathtt{Rej}\}}((1-x)\mathbf{d}_{\mathtt{Acc}} + x\mathbf{d}_{\mathtt{Rej}} || y\mathbf{d}_{\mathtt{Acc}}+(1-y)\mathbf{d}_{\mathtt{Rej}}) \leq \rho
}.
\]
Since every probability distribution $\nu \in\prob(\{\mathtt{Acc},\mathtt{Rej}\})$
can be rewritten as $\nu = \Pr[\nu = \mathtt{Acc}]\mathbf{d}_{\mathtt{Acc}} + \Pr[\nu = \mathtt{Rej}]\mathbf{d}_{\mathtt{Rej}}$, 
we obtain
\begin{align*}
&\overline{\Delta}^2_{\{\mathtt{Acc},\mathtt{Rej}\}}(\gamma(\mu_1)||\gamma(\mu_2)) \leq \rho\\
&\qquad\qquad \iff (\Pr[\gamma(\mu_1) = \mathtt{Rej}],\Pr[\gamma(\mu_2) = \mathtt{Acc}]) \in R^{\Delta}(\rho).
\end{align*}
Hence, it suffices to show 
\begin{align*}
\lefteqn{(\overline{\Delta}^2_X( \mu_1 || \mu_2) \leq \rho) }\\
&\iff
\forall
\gamma \colon X \to \prob (\{\mathtt{Acc},\mathtt{Rej}\}).
(\overline{\Delta}^2_{\{\mathtt{Acc},\mathtt{Rej}\}}(\gamma(\mu_1)||\gamma(\mu_2)) \leq \rho)
\end{align*}

($\implies$) 
Obvious by the data-processing inequality of the $2$-cut $\overline{\Delta}^2$.

($\impliedby$)
The assumption is equivalent to 
\begin{align*}
&\forall
\gamma \colon X \to \prob (\{\mathtt{Acc},\mathtt{Rej}\}).
\forall
\gamma' \colon \{\mathtt{Acc},\mathtt{Rej}\} \to \prob (W).\\
&\qquad\qquad (\Delta_W(\gamma' (\gamma(\mu_1))|| \gamma'(\gamma(\mu_2))) \leq \rho)
\end{align*}
Since $|W| = |\{\mathtt{Acc},\mathtt{Rej}\}| = 2$, this is equivalent to
\[
\gamma'' \colon X \to \prob (W).
\Delta_W(\gamma''(\mu_1)|| \gamma''(\mu_2)) \leq \rho.
\]
For any $\gamma \colon X \to \prob (\{\mathtt{Acc},\mathtt{Rej}\}).$ and $\gamma' \colon \{\mathtt{Acc},\mathtt{Rej}\} \to \prob (W).$ we take $\gamma'' = \gamma' \bullet \gamma$.
Conversely for any $\gamma'' \colon X \to \prob (W)$ we take $\gamma = f \bullet \gamma''$ and $\gamma' = \inverse{f}$ where $f \colon \{\mathtt{Acc},\mathtt{Rej}\} \to W$ is a bijection.  
\end{proof}

\subsection{Proof of Theorem 23 in general setting}

If the quasi-convex function $F \colon [0, 1]^{2k} \to [0, \infty]$ is also continuous, we can extend Theorem 23 to general measurable setting.
\begin{theorem}[$k$-generatedness in general setting]
Assume that $F \colon [0,1]^{2k} \to [0,\infty]$ is quasi-convex and continuous.
For any measurable space $X$, we have 
\[
\Delta_X (\mu_1,\mu_2) = \sup_{\substack{\gamma \colon X \to \prob(\{1,2,\ldots k \}) \\ \text{measurable function}}} \Delta_X (\gamma(\mu_1),\gamma(\mu_2)).
\]
\end{theorem}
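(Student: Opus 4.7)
The plan is to extend the discrete $k$-generatedness of Theorem~23 to an arbitrary measurable space $X$ via a simple-function approximation of measurable probabilistic maps $\gamma \colon X \to \prob(\{1,\ldots,k\})$, using continuity of $F$ to pass to the limit. Quasi-convexity of $\Delta^F$ follows from quasi-convexity of $F$ exactly as in Theorem~23. The inequality $\Delta^F_X(\mu_1 || \mu_2) \leq \sup_\gamma \Delta^F_{\{1,\ldots,k\}}(\gamma(\mu_1) || \gamma(\mu_2))$ is the easy direction: every measurable $k$-partition $\{A_i\}_{i=1}^k$ of $X$ gives a deterministic measurable $\gamma_A \colon X \to \{1,\ldots,k\}$ with $\gamma_A(\mu_j)(\{i\}) = \mu_j(A_i)$, and choosing the singleton partition $\{\{i\}\}_{i=1}^k$ inside $\Delta^F_{\{1,\ldots,k\}}(\gamma_A(\mu_1)||\gamma_A(\mu_2))$ recovers exactly $F(\mu_1(A_1),\ldots,\mu_2(A_k))$.

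For the harder direction, I first reduce the partition-sup defining $\Delta^F_{\{1,\ldots,k\}}$ to relabelings $p \colon \{1,\ldots,k\} \to \{1,\ldots,k\}$ composed with $\gamma$: since $p \bullet \gamma$ is again a measurable map into $\prob(\{1,\ldots,k\})$, it suffices to bound $F(\gamma(\mu_1)(\{1\}),\ldots,\gamma(\mu_2)(\{k\}))$ by $\Delta^F_X(\mu_1||\mu_2)$ for every measurable $\gamma \colon X \to \prob(\{1,\ldots,k\})$. Such a $\gamma$ can be approximated pointwise by simple measurable functions $\gamma_n$ taking finitely many values $v_{n,1},\ldots,v_{n,m_n} \in \prob(\{1,\ldots,k\})$ on a finite measurable partition of $X$; each $\gamma_n$ factors as $\gamma_n = q_n \bullet p_n$, with $p_n \colon X \to \{1,\ldots,m_n\}$ a measurable deterministic quotient and $q_n \colon \{1,\ldots,m_n\} \to \prob(\{1,\ldots,k\})$ a finite-domain probabilistic map.

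The finite weak Birkhoff--von Neumann theorem (Theorem~11) applies directly to $q_n$, yielding deterministic $q_{n,s} \colon \{1,\ldots,m_n\} \to \{1,\ldots,k\}$ and weights $\alpha_{n,s} \geq 0$ with $\sum_s \alpha_{n,s} = 1$ such that $q_n$ is the convex combination of the $q_{n,s}$ under the Dirac embedding. Composing with $p_n$ gives $\gamma_n = \sum_s \alpha_{n,s} h_{n,s}$, where each $h_{n,s} \defeq q_{n,s} \circ p_n$ is a measurable function $X \to \{1,\ldots,k\}$; pushforward then yields $\gamma_n(\mu_j) = \sum_s \alpha_{n,s} h_{n,s}(\mu_j)$, and quasi-convexity of $F$ together with the fact that each $h_{n,s}$ induces a measurable $k$-partition $\{h_{n,s}^{-1}(i)\}_i$ of $X$ gives
\[
F(\gamma_n(\mu_1),\gamma_n(\mu_2)) \leq \max_s F(h_{n,s}(\mu_1),h_{n,s}(\mu_2)) \leq \Delta^F_X(\mu_1||\mu_2).
\]
Finally, $\gamma_n \to \gamma$ pointwise combined with bounded convergence gives $\gamma_n(\mu_j)(\{i\}) \to \gamma(\mu_j)(\{i\})$ for every $i,j$, and continuity of $F$ closes the bound by passing to the limit.

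The main obstacle is threading measurability through the Birkhoff--von Neumann decomposition, and the key trick is the factorization $\gamma_n = q_n \bullet p_n$: it confines the probabilistic content to the finite quotient $\{1,\ldots,m_n\}$, where the discrete BvN theorem directly supplies constant weights $\alpha_{n,s}$, while the deterministic $p_n$ carries the measurability upstream to $X$. Continuity of $F$ is precisely what converts the simple-function bound into the full measurable bound; without it one would at best obtain the inequality only on simple $\gamma$'s.
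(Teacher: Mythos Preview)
Your proposal is correct and follows essentially the same strategy as the paper: approximate the measurable $\gamma$ by simple maps that factor through a finite quotient of $X$, apply the discrete result (weak Birkhoff--von Neumann plus quasi-convexity of $F$) on that finite quotient, and use continuity of $F$ to pass to the limit. The only cosmetic difference is that the paper builds the simple approximants explicitly via a dyadic partition of the simplex $\prob(\{1,\ldots,k\})\subset[0,1]^k$ (obtaining uniform convergence of $\gamma\circ m_n\circ m_n^\ast$ to $\gamma$), whereas you invoke abstract simple-function approximation and bounded convergence; and the paper packages the BvN/quasi-convexity step as a black-box call to the discrete Theorem~23 on the finite index set $J_n$, whereas you unfold it by hand.
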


\begin{proof}
We easily calculate as follows (functions are assumed to be measurable):
\begin{align*}
\lefteqn{\Delta_X (\mu_1,\mu_2)}\\
& = \sup \Set{ F(\mu_1(A_1),\cdots,\mu_1(A_k), \mu_2(A_1),\cdots,\mu_2(A_k) ) | \{ A_i \}_{i = 1}^k \colon \text{ m'ble partition of } X }\\
& = \sup\Set{ F(\mu_1(f^{-1}(1)),\cdots,\mu_1(f^{-1}(k)),
\mu_2(f^{-1}(1)),\cdots,\mu_1(f^{-1}(k)) ) | f \colon X \to \{1,2,\ldots,k\}}\\
& = \sup\Set{ F((f(\mu_1))(1),\cdots,(f(\mu_1))(k),
(f(\mu_2))(1),\cdots,(f(\mu_2))(k) ) | f \colon X \to \{1,2,\ldots,k\} }\\
& \leq \sup\Set{ F((\gamma(\mu_1))(1),\cdots,(\gamma(\mu_1))(k),
(\gamma(\mu_2))(1),\cdots,(\gamma(\mu_2))(k) ) | \gamma \colon
X \to \prob(\{1,2,\ldots,k\}) }\\
&\leq
\sup \Set{ F\left(
\begin{aligned}
&(\gamma(\mu_1))(A_1),\cdots,(\gamma(\mu_1))(A_k),\\
& \qquad (\gamma(\mu_2))(A_1),\cdots,(\gamma(\mu_2))(A_k)
\end{aligned}
\right) |
\begin{aligned}
&\gamma \colon X \to \prob(\{1,2,\ldots,k\}),\\
& \{A_i\}_{i = 1}^k \colon \text{ m'ble partition of } X
\end{aligned}
}\\
& = 
\sup_{ \gamma \colon X \to \prob(\{1,2,\ldots,k\})} \Delta_\{1,2,\ldots,k\}(\gamma(\mu_1),\gamma(\mu_2)) 
\end{align*}
Note that we treat $\{1,2,\ldots,k\}$ as a finite discrete space.
Consider the family $\{J_n\}_{n = 1}^\infty$ of finite sets (discrete spaces) defined as follows:
\[
J_n = \Set{(j_1,\ldots,j_{k}) | j_1,\ldots,j_{k} \in \{0,1,\ldots, 2^n -1\}, C^n_{j_1 \ldots j_{k}} \neq \emptyset}.
\]

We fix a measurable function $\gamma \colon X \to \prob(k)$ and treat
$\prob(k)$ as a subset of $[0,1]^k$.  For each $n \in \mathbb{N}$, we
define a measurable partition $\{C^n_{j_1 \ldots j_{k}}\}_{j_1,\ldots,j_{k} \in
\{0,1,\ldots, 2^n -1\}}$ of $X$ by
\begin{align*}
  C^n_{j_1 \ldots j_{k}} &= \gamma^{-1}(B^n_{j_1 \ldots j_k}) \\
& \qquad \text{ where } B^n_{j_1 \ldots j_N} = D_{j_1} \times \cdots \times D_{j_k} \qquad ((j_1 \ldots j_{k}) \in J_n),\\
& \qquad\qquad \quad D^n_0 = \{0\} \text{ and } D^n_{l+1} = \left({l}/{2^n}, {(l+1)}/{2^n} \right] \qquad (l = 0,1,2,\ldots,2^n-1).
\end{align*}
We next define $m^\ast_n \colon X \to J_n$ and $m_n \colon J_n \to X$ as follows:
$m^\ast_n(x)$ is the unique element $(j_1,\ldots,j_k) \in J_n$ satisfying $x \in C^n_{j_1,\ldots,j_k}$, and 
we choose $m_n(j_1,\ldots,j_k)$ is an element of $C^n_{j_1,\ldots,j_k}$.
Thanks to the measurability of each $C^n_{j_1 \ldots j_{k}}$,
the function $m^\ast_n$ is measurable, and the measurability of $m_n$ follows from the discreteness of $J_n$.
From the construction of $\{C^n_{j_1 \ldots j_{k}}\}_{j_1,\ldots,j_{k} \in \{0,1,\ldots, 2^n -1\}}$,
for any $n \in \mathbb{N}$, $x \in X$, and $i \in I$,  we have,
\[
|\gamma(x)(i) - (\gamma \circ m_n \circ m^\ast_n)(x)(i)| \leq 1/2^n
\]
This implies that the sequence $\{ \gamma \circ m_n \circ m^\ast_n \}_{n = 1}^{\infty}$
of measurable function converges uniformly to $\gamma$.
Hence, for any $n \in \mathbb{N}$ and $D \subseteq k$, we have 
\[
\left| \int \gamma(x)(D) ~d\mu_1(x) - \int (\gamma \circ m_n \circ m^\ast_n)(x)(D) ~d\mu_1(x) \right| \leq 1/2^n
\]
Hence the sequence of probability measures
$\{ (\gamma \circ m_n \circ m^\ast_n)(\mu_1) \}_{n = 1}^{\infty}$
converges to the probability measure $\gamma(\mu_1)$.
Similarly, $\{ (\gamma \circ m_n \circ m^\ast_n)(\mu_2) \}_{n = 1}^{\infty}$
converges to $\gamma(\mu_2)$.

By the continuity of $F$, we obtain
\begin{align*}
\lefteqn{ 
 F((\gamma(\mu_1))(A_1),\cdots,(\gamma(\mu_1))(A_k), (\gamma(\mu_2))(A_1),\cdots,(\gamma(\mu_2))(A_k) )
}\\
&=\lim_{n \to \infty}
F\left(
\begin{aligned}
&((\gamma \circ m_n \circ m^\ast_n)(\mu_1))(A_1),\cdots,((\gamma \circ m_n \circ m^\ast_n)(\mu_1))(A_k), \\
&\qquad ((\gamma \circ m_n \circ m^\ast_n)(\mu_2))(A_1),\cdots,((\gamma \circ m_n \circ m^\ast_n)(\mu_2))(A_k)
\end{aligned}
\right)\\
&=\lim_{n \to \infty}
F\left(
\begin{aligned}
&((\gamma \circ m_n)(m^\ast_n(\mu_1)))(A_1),\cdots,((\gamma \circ m_n)(m^\ast_n(\mu_1)))(A_k), \\
&\qquad ((\gamma \circ m_n)(m^\ast_n(\mu_2)))(A_1),\cdots,((\gamma \circ m_n)(m^\ast_n(\mu_2)))(A_k)
\end{aligned}
\right)\\
&\leq 
\sup_{n\in\mathbb{N}}
\Delta_{\{1,2,\ldots,k\}}( ((\gamma \circ m_n)(m^\ast_n(\mu_1))),((\gamma \circ m_n)(m^\ast_n(\mu_2))))\\
&\qquad \{ \text{Since ${J_n}$ is finite (countable and discrete), we can apply Theorem 23.} \} \\
&\leq \sup_{n\in\mathbb{N}}
\Delta_{J_n}(m^\ast_n(\mu_1),m^\ast_n(\mu_2))\\
&= \sup_{n\in\mathbb{N}}
\sup
\Set{
F( \left(\begin{array}{l@{}} f (m^\ast_n(\mu_1)))(1),\cdots,(f (m^\ast_n(\mu_1)))(k),\\\qquad (f (m^\ast_n(\mu_2)))(1),\cdots,((f (m^\ast_n(\mu_1)))(k) \end{array} \right)
|
f \colon J_n \to \{1,2,\ldots,k\} 
}\\
&
\leq
\sup
\{
F( (g(\mu_1))(1),\cdots,(g(\mu_1))(k), (g(\mu_2))(1),\cdots,(g(\mu_2))(k) )
\mid
g \colon X \to \{1,2,\ldots,k\} 
\}\\
&= \Delta_X (\mu_1,\mu_2).
\end{align*}
This implies $\sup_{ \gamma \colon X \to \prob(k)} \Delta_k(\gamma(\mu_1),\gamma(\mu_2)) \leq  \Delta_X (\mu_1,\mu_2)$.
\end{proof}

\section{Additional Results}
\subsection{Total variation distance is $2$-generated }
We recall the definition of the total variation distance 
\[
\mathtt{TV}_X(\mu_1 || \mu_2)
=\sup_{S \subseteq X} | \Pr[\mu_1 \in S] - \Pr[\mu_2 \in S] |.
\]
In a similar way as $\varepsilon$-divergence $\Delta^\varepsilon$, we can prove $2$-generatedness of  the total variation distance $\mathtt{TV}$, but we can prove it easily by applying Theorems 16--17 (Theorem 23 in the paper).

Define $F \colon [0,1]^4 \to [0,\infty]$ by $F(x,x',y,y') = |x - y|$. 
It is easy to check that the function is obviously quasi-convex,
and that we have $\mathtt{TV} = \Delta^F$.

\subsection{An optimal conversion law from Hellinger to DP}
We recall the definition of the Hellinger distance 
\[
\mathrm{HD}_X(\mu_1 || \mu_2) = 1 -\sum_{x \in X}\sqrt{\mu_1(x)\mu_2(x)}.
\]
Since it is the $f$-divergence of weight function $w(t) = \sqrt{t} - 1$ (strict convex),
the Hellinger distance is exactly $\infty$-generated, quasi-convex and continuous. 

Here is the essense of an optimal conversion law from the Hellinger distance to DP.
\begin{lemma}
We have $R^{\mathrm{HD}}(\rho) \subseteq R^{\Delta^{\varepsilon}}(\delta(\varepsilon,\rho))$ where
\begin{align}
\label{eq:conv:HDtoDP}
\delta(\varepsilon,\rho)&= 1 - t - \frac{f(t)}{g(t)}\\
\label{eq:conv:HDtoDP:tee}
t &= \frac{z^2 + 4 - z \sqrt{z^2 + 4}}{2(z^2 + 4)} \\
z &= \frac{1/e^\varepsilon -2(1-\rho) + 1}{(1-\rho)\sqrt{\rho(2-\rho)}} \notag\\
f(x) &= (1 - \rho)^2 (1-2x) + x - 2(1-\rho)\sqrt{d(2-d)x(1-x)} \notag\\
g(x) &= \frac{df}{dx}(x) = (1 - \rho)^2 (1-2x) + xf~ -2(1-\rho)\sqrt{d(2-d)x(1-x)} \notag
\end{align}
\end{lemma}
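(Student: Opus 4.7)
My plan is to mirror the geometric strategy employed earlier for the refined conversion from RDP to DP: identify the boundary curve of $R^{\mathrm{HD}}(\rho)$, then locate the unique line with the DP slope that is tangent to this curve; the intercept of that tangent will read off the optimal $\delta$.

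First, I would invoke Theorem~\ref{2-generated_and_HT}, using that $\mathrm{HD}$ is quasi-convex and satisfies the data-processing inequality (both are inherited from its $f$-divergence characterization). This lets us replace $\overline{\mathrm{HD}}^2$ by the ordinary Hellinger distance between binary distributions in the definition of the privacy region, yielding
\[
R^{\mathrm{HD}}(\rho) = \{(x,y) \in [0,1]^2 \mid \sqrt{(1-x)y} + \sqrt{x(1-y)} \geq 1-\rho\},
\]
whose boundary is the curve $\mathcal{C}_\rho \colon \sqrt{(1-x)y} + \sqrt{x(1-y)} = 1-\rho$.

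Next, since both $\mathcal{C}_\rho$ and $R^{\Delta^\varepsilon}(\delta)$ are symmetric under $(x,y)\mapsto(1-y,1-x)$, the inclusion $R^{\mathrm{HD}}(\rho) \subseteq R^{\Delta^\varepsilon}(\delta)$ reduces to the single-sided condition that the line $L_{\varepsilon,\delta}\colon 1-x = e^\varepsilon y + \delta$ does not cross into the interior of $R^{\mathrm{HD}}(\rho)$; the companion inequality follows by symmetry. The minimal $\delta$ is attained when $L_{\varepsilon,\delta}$ is tangent to $\mathcal{C}_\rho$. I would then perform implicit differentiation on the defining equation of $\mathcal{C}_\rho$: after clearing one radical and squaring, one obtains
\[
(1-\rho)^2 = x + y - 2xy + 2\sqrt{x(1-x)y(1-y)},
\]
from which $dx/dy$ is a rational function of $\sqrt{x(1-x)}$ and $\sqrt{y(1-y)}$. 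Setting $dx/dy = -e^\varepsilon$ and eliminating the remaining radical produces a quadratic whose positive root in $[0,1]$ is exactly the $t$ given by~(\ref{eq:conv:HDtoDP:tee}), with $z$ arising naturally as the normalized slope parameter.

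Finally, having fixed the tangent point $(x(t),t)$, the line $L_{\varepsilon,\delta}$ through it satisfies $\delta = 1 - x(t) - e^\varepsilon t$. Substituting the parametric form of $\mathcal{C}_\rho$ at $x = t$ and using the tangency identity, the combination $x(t) + e^\varepsilon t$ reorganizes into $t + f(t)/g(t)$ with $f$ as defined in the statement and $g = f'$, giving $\delta(\varepsilon,\rho) = 1 - t - f(t)/g(t)$. The main obstacle will be the algebraic bookkeeping: the nested radicals on $\mathcal{C}_\rho$ make the direct tangency computation cumbersome, and identifying the claimed closed form for $t$ demands a well-chosen change of variables. The trigonometric substitution $\sqrt{1-x} = \cos\alpha$, $\sqrt{y} = \cos\beta$ converts $\mathcal{C}_\rho$ into the clean relation $\cos(\alpha-\beta) = 1-\rho$, reducing the slope condition to a linear equation in $\alpha,\beta$; translating the resulting tangent point back to $(x,y)$ coordinates and re-collecting terms into $f$ and $g$ is then tedious but routine.
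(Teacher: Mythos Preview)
Your plan is essentially the paper's: compute the boundary of $R^{\mathrm{HD}}(\rho)$, find the tangent line with the DP slope $-e^\varepsilon$, and read off $\delta$ from its intercept; the paper just solves the squared boundary relation as a quadratic in $y$ to obtain $y=f(x)$ directly, rather than going through implicit differentiation or the trigonometric substitution you suggest. One bookkeeping slip to fix: in the statement $t$ denotes the $x$-coordinate of the tangent point and $f(t)$ the corresponding $y$-value, so the tangent point is $(t,f(t))$ (not $(x(t),t)$), the slope match is $e^\varepsilon = 1/g(t)$ with $g=f'$, and then $\delta = 1 - t - e^\varepsilon f(t) = 1 - t - f(t)/g(t)$ drops out immediately.
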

\begin{proof}
We may regard 
\begin{align*}
R^{\mathrm{HD}}(\rho)
&= \Set{(x,y) \in [0,1]^2 | 1 - \sqrt{x(1-y)} - \sqrt{(1-x)y} \leq \rho },\\
 R^{\Delta^{\varepsilon}}(\delta)
&= \Set{(x,y) \in [0,1]^2 | \max((1-x) - e^\varepsilon y, x - e^\varepsilon (1-y) ) \leq \delta }.
\end{align*}
We first calculate the boundary of $R^{\mathrm{HD}}(\rho)$.
Thus, we solve the following equation for $y$:
\[
1 - \sqrt{x(1-y)} - \sqrt{(1-x)y} = \rho.
\]
We first have
\begin{align*}
\lefteqn{1 - \sqrt{x(1-y)} - \sqrt{(1-x)y} = \rho}\\
&\iff (1-\rho)^2 - x (1-y) - y (1-x) = 2\sqrt{x(1-x)y(1-y)}\\
&\iff (1-\rho)^4 + x^2 (1-y)^2 + y^2 (1-x)^2 - 2x(1-y)(1-\rho)^2 - 2y(1-x)(1-\rho)^2
\end{align*}
The degree of this equation is $2$, so we can solve it. For given $x \in [0,1]$, we have 
\[
y = (1-\rho)^2(1-2x) + x \pm 2(1-\rho)\sqrt{x(1-x)\rho(2-\rho)}.
\]
Thanks to the Symmetry of $R^{\mathrm{HD}}(\rho)$ and $R^{\Delta^{\varepsilon}}(\delta)$,
we may consider the curve: 
\[
y = (1-\rho)^2(1-2x) + x - 2(1-\rho)\sqrt{x(1-x)\rho(2-\rho)} = f(x).
\]
The tangent of the curve $y = f(x)$ that passes the point $(t,f(t))$ is given by the equation 
$x - \frac{y}{g(t)} = t - \frac{f(t)}{g(t)}$ where $g(x) = \frac{df}{dx}(x)$.
We next find $t$ and $\delta$ that the lower boundary
\[
(1 - x) - e^\varepsilon y = \delta \iff x + e^\varepsilon y = 1 - \delta
\]
of $R^{\Delta^{\varepsilon}}(\delta(\varepsilon,\rho))$
is the same as the line $x - \frac{y}{g(t)} = t - \frac{f(t)}{g(t)}$.
We solve the equation $e^\varepsilon = \frac{1}{g(t)}$ on $t$ about the slope as (\ref{eq:conv:HDtoDP:tee}).
Finally, we obtain $\delta$ as (\ref{eq:conv:HDtoDP}).
\end{proof}
\begin{figure}
  \centering
  \includegraphics[width=7cm]{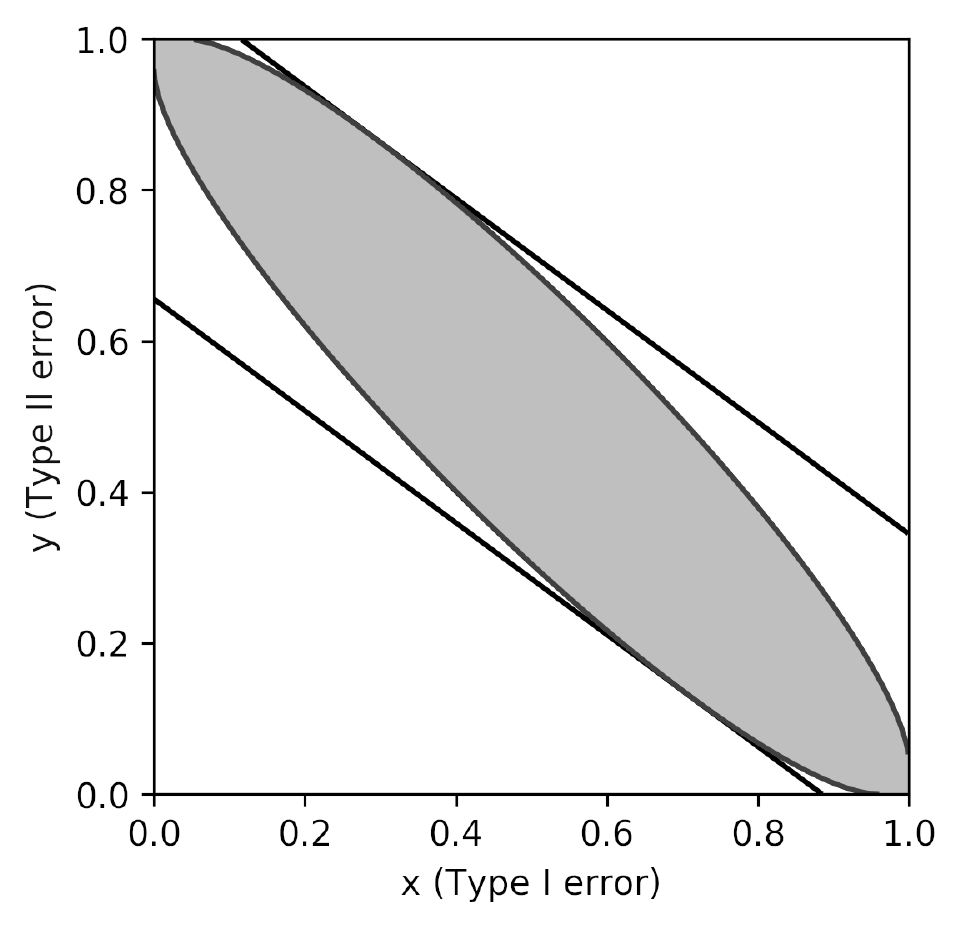}
  \caption{Comparison of the privacy region for DP and the one for 2-cut of Hellinger distance.}
\end{figure}

We conclude an optimal conversion law from the Hellinger distance to DP.
\begin{theorem}
We always have $\mathrm{HD}_X(d_1,d_2) \leq \rho \implies \Delta^{\varepsilon}_X(d_1,d_2) \leq \delta(\varepsilon,\rho)$ where $\delta(\varepsilon,\rho)$ is given by (\ref{eq:conv:HDtoDP}).
\end{theorem}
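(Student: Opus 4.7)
The plan is to reduce the claimed implication to the region containment established in the previous lemma, using the general machinery developed in Sections~3 and~4. First I would observe that the Hellinger distance is the $f$-divergence with weight function $w(t) = \sqrt{t}-1$, and hence inherits from that characterization the data-processing inequality as well as quasi-convexity and continuity. These properties are what allow us to move between $\mathrm{HD}$ and its $2$-cut $\overline{\mathrm{HD}}^2$ without loss.

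Next I would invoke the extended version of Lemma~15 together with the $2$-generatedness of $\Delta^\varepsilon$ (Theorem~16). Since $\Delta^\varepsilon$ is $2$-generated and $\mathrm{HD}$ satisfies the data-processing inequality, that lemma gives the equivalence
\[
\forall X.\forall \mu_1,\mu_2.\; \mathrm{HD}_X(\mu_1\|\mu_2)\le\rho\Rightarrow\Delta^\varepsilon_X(\mu_1\|\mu_2)\le\delta
\]
if and only if the same implication holds with $\mathrm{HD}$ replaced by $\overline{\mathrm{HD}}^2$. Via Theorem~18, the latter is equivalent to the region containment $R^{\mathrm{HD}}(\rho)\subseteq R^{\Delta^\varepsilon}(\delta)$.

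Finally, I would appeal directly to the preceding lemma, which establishes exactly the containment $R^{\mathrm{HD}}(\rho)\subseteq R^{\Delta^\varepsilon}(\delta(\varepsilon,\rho))$ with $\delta(\varepsilon,\rho)$ defined by~(\ref{eq:conv:HDtoDP}). Combining the two implications yields the desired conversion law. The main technical content is thus already in the region-containment lemma; the theorem is essentially a repackaging of that fact through the $2$-generated/$2$-cut correspondence.

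The only potential obstacle in this chain is making sure that the hypotheses needed to apply the extended Lemma~15 and Theorem~18 are all in place for $\mathrm{HD}$: specifically the data-processing inequality and the ability to replace an arbitrary base space $X$ by a two-element space via the $2$-cut. Since $\mathrm{HD}$ is a quasi-convex and continuous $f$-divergence, both of these follow from standard $f$-divergence facts, and (as in the general-measurable-space version of Theorem~23 given earlier) continuity of the weight function suffices to extend the discrete argument to arbitrary measurable $X$.
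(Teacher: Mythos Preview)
Your proposal is correct and follows exactly the route the paper intends: the paper does not give an explicit proof of this theorem but simply writes ``We conclude'' after the region-containment lemma, implicitly relying on the Section~5.1 machinery (Theorem~18 together with the extended Lemma~15 and the $2$-generatedness of $\Delta^\varepsilon$) that you spell out. Your remarks about why $\mathrm{HD}$ satisfies the needed hypotheses (as an $f$-divergence it has the data-processing inequality, quasi-convexity, and continuity) are also the ones the paper records just before the lemma.
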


\bibliography{header,reference}

\newcommand{\SortNoop}[1]{}
\begin{thebibliography}{}

\bibitem[Abadi et~al., 2016]{AbadiCGMMT016}
Abadi, M., Chu, A., Goodfellow, I.~J., McMahan, H.~B., Mironov, I., Talwar, K.,
  and Zhang, L. (2016).
\newblock Deep learning with differential privacy.
\newblock In {\em {ACM} {SIGSAC} Conference on Computer and Communications
  Security ({CCS}), Vienna, Austria}, pages 308--318.

\bibitem[Abowd, 2018]{Abowd18}
Abowd, J.~M. (2018).
\newblock The {U.S.} census bureau adopts differential privacy.
\newblock In {\em Proceedings of the 24th {ACM} {SIGKDD} International
  Conference on Knowledge Discovery {\&} Data Mining, {KDD} 2018, London, UK,
  August 19-23, 2018}, page 2867.

\bibitem[Barthe and Olmedo, 2013]{BartheOlmedo2013}
Barthe, G. and Olmedo, F. (2013).
\newblock Beyond differential privacy: Composition theorems and relational
  logic for $f$-divergences between probabilistic programs.
\newblock In {\em International Colloquium on Automata, Languages and
  Programming (ICALP), Riga, Latvia}, volume 7966 of {\em Lecture Notes in
  Computer Science}, pages 49--60. Springer-Verlag.

\bibitem[Bun et~al., 2018]{BDRS18}
Bun, M., Dwork, C., Rothblum, G.~N., and Steinke, T. (2018).
\newblock Composable and versatile privacy via truncated {CDP}.
\newblock In {\em {ACM} {SIGACT} {S}ymposium on {T}heory of {C}omputing (STOC),
  Los Angeles, California}.

\bibitem[Bun and Steinke, 2016]{BunS16}
Bun, M. and Steinke, T. (2016).
\newblock Concentrated differential privacy: Simplifications, extensions, and
  lower bounds.
\newblock In {\em {IACR} {T}heory of {C}ryptography {C}onference (TCC),
  Beijing, China}, volume 9985 of {\em Lecture Notes in Computer Science},
  pages 635--658. Springer-Verlag.

\bibitem[Csisz\'ar, 1963]{csiszar63}
Csisz\'ar, I. (1963).
\newblock Eine informationstheoretische ungleichung und ihre anwendung auf den
  beweis der ergodizitat von markoffschen ketten.
\newblock {\em Magyar. Tud. Akad. Mat. Kutat\'o Int. K\"ozl}, 8:85--108.

\bibitem[Csisz\`ar and Shields, 2004]{csiszar04}
Csisz\`ar, I. and Shields, P. (2004).
\newblock Information theory and statistics: A tutorial.
\newblock {\em Foundations and Trends\textregistered in Communications and
  Information Theory}, 1(4):417--528.

\bibitem[{Dong} et~al., 2019]{DongRS19}
{Dong}, J., {Roth}, A., and {Su}, W.~J. (2019).
\newblock {Gaussian Differential Privacy}.
\newblock {\em arXiv e-prints}.

\bibitem[Dwork et~al., 2006]{DworkMcSherryNissimSmith2006}
Dwork, C., {McSherry}, F., Nissim, K., and Smith, A.~D. (2006).
\newblock Calibrating noise to sensitivity in private data analysis.
\newblock In {\em {IACR} {T}heory of {C}ryptography {C}onference (TCC), New
  York, New York}, volume 3876 of {\em Lecture Notes in Computer Science},
  pages 265--284. Springer-Verlag.

\bibitem[Dwork and Rothblum, 2016]{dwork2016concentrated}
Dwork, C. and Rothblum, G.~N. (2016).
\newblock Concentrated differential privacy.
\newblock {\em arXiv preprint arXiv:1603.01887}.

\bibitem[Erlingsson et~al., 2014]{ErlingssonPK14}
Erlingsson, {\'{U}}., Pihur, V., and Korolova, A. (2014).
\newblock {RAPPOR:} randomized aggregatable privacy-preserving ordinal
  response.
\newblock In {\em Proceedings of the 2014 {ACM} {SIGSAC} Conference on Computer
  and Communications Security, Scottsdale, AZ, USA, November 3-7, 2014}, pages
  1054--1067.

\bibitem[Garfinkel et~al., 2018]{GarfinkelAP18}
Garfinkel, S.~L., Abowd, J.~M., and Powazek, S. (2018).
\newblock Issues encountered deploying differential privacy.
\newblock In {\em Proceedings of the 2018 Workshop on Privacy in the Electronic
  Society, WPES@CCS 2018, Toronto, ON, Canada, October 15-19, 2018}, pages
  133--137.

\bibitem[Giry, 1982]{Giry1982}
Giry, M. (1982).
\newblock A categorical approach to probability theory.
\newblock In Banaschewski, B., editor, {\em Categorical Aspects of Topology and
  Analysis}, volume 915 of {\em Lecture Notes in Mathematics}, pages 68--85.
  Springer-Verlag.

\bibitem[Johnson et~al., 2018]{JohnsonNS18}
Johnson, N.~M., Near, J.~P., and Song, D. (2018).
\newblock Towards practical differential privacy for {SQL} queries.
\newblock {\em {PVLDB}}, 11(5):526--539.

\bibitem[Kairouz et~al., 2015]{KairouzOV15}
Kairouz, P., Oh, S., and Viswanath, P. (2015).
\newblock The composition theorem for differential privacy.
\newblock In {\em Proceedings of the 32nd International Conference on Machine
  Learning, {ICML} 2015, Lille, France, 6-11 July 2015}, pages 1376--1385.

\bibitem[Kenthapadi et~al., 2019]{KenthapadiMT19}
Kenthapadi, K., Mironov, I., and Thakurta, A.~G. (2019).
\newblock Privacy-preserving data mining in industry.
\newblock In {\em Proceedings of the Twelfth {ACM} International Conference on
  Web Search and Data Mining, {WSDM} 2019, Melbourne, VIC, Australia, February
  11-15, 2019}, pages 840--841.

\bibitem[Liese and Vajda, 2006]{1705001_2006}
Liese, F. and Vajda, I. (2006).
\newblock On divergences and informations in statistics and information theory.
\newblock {\em IEEE Transactions on Information Theory}, 52(10):4394--4412.

\bibitem[Meiser, 2018]{Meiser18}
Meiser, S. (2018).
\newblock Approximate and probabilistic differential privacy definitions.
\newblock {\em {IACR} Cryptology ePrint Archive}, 2018:277.

\bibitem[Mironov, 2017]{Mironov17}
Mironov, I. (2017).
\newblock R\'enyi differential privacy.
\newblock In {\em {IEEE} {C}omputer {S}ecurity {F}oundations {S}ymposium
  ({CSF}), Santa Barbara, California}, pages 263--275.

\bibitem[Papernot et~al., 2018]{PapernotSMRTE18}
Papernot, N., Song, S., Mironov, I., Raghunathan, A., Talwar, K., and
  Erlingsson, {\'{U}}. (2018).
\newblock Scalable private learning with {PATE}.
\newblock In {\em 6th International Conference on Learning Representations,
  {ICLR} 2018, Vancouver, BC, Canada, April 30 - May 3, 2018, Conference Track
  Proceedings}.

\bibitem[Renyi, 1961]{Renyi1961}
Renyi, A. (1961).
\newblock On measures of entropy and information.
\newblock In {\em Berkeley Symposium on Mathematical Statistics and
  Probability, Volume 1: Contributions to the Theory of Statistics}, pages
  547--561, Berkeley, Calif. University of California Press.

\bibitem[team~at Apple, 2017]{apple}
team~at Apple, D.~P. (2017).
\newblock Learning with privacy at scale.
\newblock
  https://machinelearning.apple.com/2017/12/06/learning-with-privacy-at-scale.html.

\bibitem[Wasserman and Zhou, 2010]{WassermanZ10}
Wasserman, L. and Zhou, S. (2010).
\newblock A statistical framework for differential privacy.
\newblock {\em Journal of the American Statistical Association},
  105(489):375--389.

\end{thebibliography}
\end{document}